\theoremstyle{plain}
\newtheorem{theorem}{Theorem}[section]
\newtheorem{proposition}[theorem]{Proposition}
\newtheorem{lemma}[theorem]{Lemma}
\newtheorem{corollary}[theorem]{Corollary}
\theoremstyle{definition}
\newtheorem{definition}[theorem]{Definition}
\theoremstyle{remark}
\newtheorem{remark}[theorem]{Remark}
\icmltitlerunning{When Does A Spectral Graph Neural Network Fail in Node Classification?}
\begin{document}
\onecolumn

\icmltitle{When Does A Spectral Graph Neural Network Fail in Node Classification?}




\begin{icmlauthorlist}
\icmlauthor{Zhixian Chen}{yyy}
\icmlauthor{Tengfei ma}{comp}
\icmlauthor{Yang Wang}{yyy}
\end{icmlauthorlist}

\icmlaffiliation{yyy}{Department
of Mathematics, Hong Kong University of Science and Technology, Hong Kong SAR,
China}
\icmlaffiliation{comp}{IBM T. J. Watson Research Center, New York, USA}

\icmlcorrespondingauthor{Zhixian Chen}{zchencz@connect.ust.hk}
\icmlcorrespondingauthor{Tengfei Ma}{Tengfei.Ma1@ibm.com}


\vskip 0.3in



\printAffiliationsAndNotice{}  

\begin{abstract}
 Spectral Graph Neural Networks (\textrm{GNNs}) with various graph filters have received extensive affirmation due to their promising performance in graph learning problems. However, it is known that \textrm{GNNs} do not always perform well. Although graph filters provide theoretical foundations for model explanations, it is unclear when a spectral \textrm{GNN} will fail. 
In this paper, focusing on node classification problems, we conduct a theoretical analysis of spectral \textrm{GNNs} performance by investigating their prediction error. With the aid of graph indicators including homophily degree and response efficiency we proposed, we establish a comprehensive understanding of complex relationships between graph structure, node labels, and graph filters. We indicate that graph filters with low response efficiency on label difference are prone to fail. To enhance \textrm{GNNs} performance, we provide a provably better strategy for filter design from our theoretical analysis - using data-driven filter banks, and propose simple models for empirical validation. Experimental results show consistency with our theoretical results and support our strategy.
\end{abstract}

\section{Introduction}
Graph Neural Networks (\textrm{GNNs}) have continuously attracted interest as their promising performance in various graph learning problems. It is known that most of \textrm{GNNs} are intrinsically graph filters \citep{kipf2016semi,defferrard2016convolutional,DBLP:journals/pieee/OrtegaFKMV18, nt2019revisiting}. With the theoretical foundation of filters, there is an increasing attempt at model explanation, e.g. explaining the behavior of various \textrm{GNNs} in node classification. \cite{nt2019revisiting} investigated the superiority of low-pass filters backed up with theoretical arguments while recent research \citep{balcilar2020analyzing,chang2020spectral,bo2021beyond} empirically revealed the weakness of \textrm{GNNs} with only low-pass filters in certain datasets. These contradictory views bring us to a question: why does a graph filter work on a dataset but not on another? More general, \textit{when does a graph filter fail} and \textit{what limits its prediction performance?} 


Existing theoretical research is mostly restricted to the investigation of filters themselves, such as exploring their expressive power \citep{oono2019graph,balcilar2020analyzing}, without taking their inconsistency of performance on different graphs into account. 
In this paper, we conduct a theoretical analysis of spectral \textrm{GNNs} performance by investigating their prediction errors on different graphs. Our preliminary result of prediction errors encourages us to have a comprehensive understanding of the complex and ambiguous relationships between graph structure, node labels, and graph filters. In Sect.\ref{sec.graph indicator}, we propose significant graph indicators including \textit{interaction probability} (as a metric of homophily) and \textit{response efficiency}. With the aid of them, we perform further analysis on prediction error which underpins deep insights of the failure of graph filters: 1. A graph filter fails when it has \textbf{low response efficiency} on label difference or input difference; 2. A graph filter is a hidden structure-adjustment mechanism and it fails when it is limited to make \textbf{graph homophilic high} enough by strengthening internal connections of classes; 3. Graph filters are prone to fail on graphs with \textbf{low information label differences}. It leads us to another question: \textit{how to design filters to improve \textrm{GNNs} performance?}

To address this concern, we apply our theoretical results to typical graph filters and investigate their potential behavior on different graphs. We show that low-pass filters are superior to high-pass filters in homophilic graphs and high-order filters have an advantage over low order filters in most cases. In addition, we provide a theoretical demonstration of the \textbf{superiority of filter banks} which have only been empirically used in previous works to enhance \textrm{GNNs} performance\cite{min2020scattering,gao2021message}. Based on these explorations, we propose an effective strategy for filter design, that is, learning filter banks in a data-driven manner.

 To verify the effectiveness of the strategy we proposed, we develop a simple framework, named \textrm{DEMUF}, to learn data-specified filter banks efficiently and examine our models on various datasets. Experimental results show that our model achieves a significant performance improvement compared with spectral \textrm{GNN} baselines across most benchmarks and have strong consistency with our theoretical conclusions. 

The rest of the paper is organized as follows: we formulate the prediction error of spectral \textrm{GNNs} and obtain a general lower bound in Sect.\ref{Sec.definition}. In Sect.\ref{sec.graph indicator}, we propose two groups of graph indicators which underpins our deep insights of graph filters from the spatial and spectral perspectives in Sect.\ref{Sec.filter analysis }. Following that, we draw two main conclusions in Sect.\ref{subsec.error analysis} and apply them to three types of filters in Sect.\ref{subsec.typical filters}. In Sect.\ref{Sec.model}, we develop a simple framework to implement the strategy we proposed for filter design and empirically validate our theoretical analysis.

\section{Related Work}
In this paper, we focus on the analysis of the performance of \textrm{GNNs} from the spectral perspective. Since \citet{bruna2014spectral} defined spectral graph filters and extended convolutional operations to graphs, various spectral graph neural networks have been developed. For example, \textrm{ChebNet} \citep{defferrard2016convolutional} defines the \textrm{Chebyshev} polynomial filter which can be exactly localized in the k-hop neighborhood. \citet{kipf2016semi} simplified the Chebyshev filters using a first-order approximation and derived the well-known graph convolutional networks (\textrm{GCNs}). \citet{bianchi2021graph} proposed the rational auto-regressive moving average graph filters (\textrm{ARMA}) which are more powerful in modeling the localization and provide more flexible graph frequency response, however more computationally expensive and also more unstable. Recently, \citet{min2020scattering} augmented conventional \textrm{GCNs} with geometric scattering transforms which enabled second-order filtering of graph signals and alleviated the over-smoothing issue. In addition, most graph neural networks originally defined in the spatial domain are also found essentially connected to the spectral filtering \citep{balcilar2020analyzing}. By bridging the gap between spatial and spectral graph neural networks, \citet{balcilar2020analyzing} further investigated the expressiveness of all graph neural networks from their spectral analysis. However, their analysis is limited to the spectrum coverage of a graph filter itself and lacks deeper insights into the graph-dependent performance of these filters.

Another related topic is the graph homophily/heterophily. One important graph indicator we propose in the paper is the homophily degree which we define through the interaction probability. Beyond that, there have been some other heuristic metrics for homophily/heterophily in previous works. \citet{pei2020geom} defined a node homophily index to characterize their datasets and help explain their experimental results for \textrm{Geom\_GCN}. \citet{zhu2020beyond} defined edge homophily ratio instead and identified a set of key designs that can boost learning from the graph structure in heterophily.
Some recent works analyzed the impact of heterophily on the performance of \textrm{GNNs} \cite{zhu2020beyond,jin2021universal,ma2021homophily}, but they are either limited to empirical study\cite{zhu2020beyond,jin2021universal} or just focused on \textrm{GCNs} \cite{ma2021homophily}. 
Our work differs from these works in that our homophily degree definition is only used as one of the graph indicators in our theoretical analysis of \textrm{GNNs} performance, and our analysis is also not limited to \textrm{GCNs} but for all spectral \textrm{GNNs}.

\section{Theoretical Analysis of Prediction Error}\label{Sec.definition}

\subsection{Problem Formulation}\label{sec.prediction error}
\textbf{Notations.} Let $\mathcal{G}_n$ be an undirected graph with additional self-connection, $A\in \mathbb{R}^{n\times n}$ be the adjacency matrix and $L=D-A$ be the Laplacian matrix, where $D$ is a diagonal degree matrix with $D_{ii}=\sum_{j}A_{ij}$. We denote $\tilde{A}=D^{-\frac 12}AD^{-\frac 12}$ and $\tilde{L}=I-\tilde{A}$ as the symmetric normalized Laplacian. Let $(\lambda_i,\mathbf{u}_i)$ be a pair of eigenvalue and unit eigenvector of $\tilde{L}$, where $0=\lambda_0\leq \dots \leq\lambda_{n-1}\leq 2$. In graph signal processing (\textrm{GSP}), $\{\lambda_i\}$ and $\{\mathbf{u}_i\}$ are called frequencies and frequency components of graph $\mathcal G_n$.

In this paper, we are mainly interested in multi-class node classification problems on $\mathcal{G}_n$ with labels $\mathcal{T}=\{0,\dots,K-1\}$. For $\forall k \in \mathcal{T}$, we denote $\mathcal{C}_k$ as the set of nodes with label $k$ and introduce a label matrix $Y\in\mathbb{R}^{n\times K}=(\mathbf{y}_0,\dots,\mathbf{y}_{K-1})$, where $\mathbf{y}_k$ is the indicator vector of $\mathcal{C}_k$.
Let $R =Y^\top Y$, then $Y^\top \mathbbm{1}=diag(R)$ and $R_{k}=|\mathcal{C}_k|$.
The general formulation of the $l+1$-th layer of spectral \textrm{GNNs} is $X^{(l+1)} = \sigma (g(\tilde{L})X^{(l)}W^{(l+1)})$, here $g(\tilde L)$ is so-called the graph filter, $\sigma(\cdot)$ is an activation function and $W^{(l+1)}$ is a learnable matrix. In multi-class classification problems, $\sigma(\cdot)=\text{softmax}(\cdot)$. In this paper, we say $X^{(l)}W^{(l+1)}$ is the input of $g(\tilde{L})$ in $l+1$-th layer.

\subsection{Prediction Error}
\begin{definition}[Prediction error] For a graph $\mathcal G_n$ with $\tilde L$, let $X=(\mathbf x_0,\dots,\mathbf x_{K-1})$ be the learnable input of $g(\tilde{L})$ in the last layer and $Y=(\mathbf y_0,\dots,\mathbf y_{K-1})$ be the label matrix, the prediction error is formulated as:
\begin{align}
&Er(X,Y)=\parallel \sigma(g(\tilde L)X)-Y\parallel_F^2=\sum_{l\in \mathcal T}Er(\mathbf x_l,\mathbf y_l),\\
&Er(\mathbf x_l,\mathbf y_l)=\parallel \frac {e^{g(\tilde L)\mathbf x_l}}{e^{g(\tilde L)\mathbf x_l}+\sum_{k\ne l}e^{g(\tilde L)\mathbf x_k}}-\mathbf y_l\parallel_2^2.
\end{align}
\end{definition}

Since $Er(X,Y)=\sum_{l\in \mathcal T}Er(\mathbf x_l,\mathbf y_l)$, estimating the entire prediction error equals to estimate that of any single label. Without loss of generality, in the following discussion, we focus on label $\mathbf y_0$ and investigate $Er(\mathbf x_0,\mathbf y_0)$. By denoting $\mathbf y'_0=\mathbf y_0$ and $\mathbf y'_1=\sum_{l=1}^{K-1}\mathbf y_l$, we obtain a corresponding binary classification problem where $Er(\mathbf x'_0,\mathbf y'_0)$ is an approximation of $Er(\mathbf x_0,\mathbf y_0)$.   For simplicity, in the rest of paper, we investigate $Er(\mathbf x_0,\mathbf y_0)$ in a binary classification. The theorem below provides a lower bound of it.

\begin{theorem}[Prediction error]\label{thm.lower bound}
In a binary classification problem with label matrix $Y=(\mathbf y_0, \mathbf y_1)$. Let $X=(\mathbf x_0, \mathbf x_1)$ be the input matrix, we have:
\begin{align*}
    Er(\mathbf x_0,\mathbf y_0)\ge&\frac n4-\frac {(\mathbf y_1-\mathbf y_0)^\top \psi(\mathbf z)}4+\frac {\parallel \psi(\mathbf z)\parallel_2^2}{16}\\
    -&\frac {\parallel \psi(\mathbf z)\parallel_3^3}{48}-\frac {\parallel \psi(\mathbf z)\parallel_4^4}{96}-\frac C{(1+e)^2}\\
    >&\frac {167}{800}n-\frac 14 \sum_i\psi\big((\mathbf y_{1i}-\mathbf y_{0i})\mathbf z_i\big)
\end{align*}
where $\mathbf z=g(\tilde L)(\mathbf x_{1}-\mathbf x_{0})$, $\psi(x)=\min\{\max\{x,-1\},1\}$ is a clamp function limiting $x$ to $[-1,1]$ and $C$ is cardinality of $\mathcal S_{\mathbf y,\mathbf z}=\{(\mathbf y_{0i},\mathbf z_{i})|\mathbf z_{i}<-1,\mathbf y_{0i}=1\; \text{or}\;\mathbf z_{i}>1,\mathbf y_{0i}=0\}$, i.e., $C=|\mathcal S_{\mathbf y,\mathbf z}|$.
\end{theorem}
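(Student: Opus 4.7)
The plan is to reduce the problem to a per-node inequality, establish that inequality for $|u|\le 1$ by analysing the first few derivatives of a sigmoid-squared function, and handle the out-of-range entries through the correction $C/(1+e)^2$ attached to the set $\mathcal S_{\mathbf y,\mathbf z}$. First, in the binary setting with $\mathbf z=g(\tilde L)(\mathbf x_1-\mathbf x_0)$ and $t_i:=\mathbf y_{1i}-\mathbf y_{0i}\in\{-1,+1\}$, the $i$-th coordinate of the softmax output for label $0$ equals $1/(1+e^{\mathbf z_i})$, so a direct computation gives
\[
Er(\mathbf x_0,\mathbf y_0)=\sum_{i=1}^{n}f(t_i\mathbf z_i),\qquad f(u):=(1+e^u)^{-2}.
\]
This reduces the claim to a per-node lower bound on $f(t_i\mathbf z_i)$.

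Next, I would introduce the piecewise polynomial $P(u):=\tfrac14-\tfrac{u}{4}+\tfrac{u^2}{16}-\tfrac{|u|^3}{48}-\tfrac{u^4}{96}$ and prove $f(u)\ge P(u)$ for $u\in[-1,1]$. The Taylor expansion of $f$ at $0$ begins $\tfrac14-\tfrac{u}{4}+\tfrac{u^2}{16}+\tfrac{u^3}{48}-\tfrac{u^4}{96}-\tfrac{u^5}{480}+O(u^6)$. On $[-1,0]$, where $|u|^3=-u^3$, the polynomial $P$ agrees with this expansion through order $4$, so the inequality reduces to the Lagrange-remainder claim $f^{(5)}(\xi)u^5/120\ge 0$, which follows from the sign analysis $f^{(5)}\le 0$ on $[-1,0]$ (visible from the closed form of $f^{(5)}$ as a polynomial in $\sigma(-u)$). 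On $[0,1]$ the difference $Q:=f-P$ satisfies $Q(0)=Q'(0)=Q''(0)=0$ and $Q'''(0)=1/4$, so it suffices to verify $Q'''(u)=f'''(u)+\tfrac18+\tfrac u4\ge 0$ on $[0,1]$; writing $f'''$ in terms of $v=-\tanh(u/2)$ shows $|f'''|$ is comfortably smaller than the affine lower bound $\tfrac18+\tfrac u4$. This per-node polynomial lower bound is the technical heart of the proof.

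I would then extend the Step-2 bound to all of $\mathbb R$ via a clamping argument. When $|\mathbf z_i|\le 1$, Step~2 applies directly with $\psi(\mathbf z_i)=\mathbf z_i$. When $t_i\mathbf z_i\le-1$, monotonicity of $f$ gives $f(t_i\mathbf z_i)\ge f(-1)>P(-1)=P(t_i\psi(\mathbf z_i))$. The only loss occurs when $t_i\mathbf z_i>1$ --- exactly the indices in $\mathcal S_{\mathbf y,\mathbf z}$ --- where $f(t_i\mathbf z_i)$ may fall below $P(t_i\psi(\mathbf z_i))=P(1)=1/32$ by at most $1/32<1/(1+e)^2$, and that loss is absorbed by the penalty. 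Collecting the three cases gives the uniform estimate
\[
f(t_i\mathbf z_i)\ \ge\ P(t_i\psi(\mathbf z_i))-\tfrac{1}{(1+e)^2}\mathbbm{1}\{i\in\mathcal S_{\mathbf y,\mathbf z}\}.
\]
Summing over $i$ and using $|t_i|=1$ to identify $(t_i\psi(\mathbf z_i))^2=\psi(\mathbf z_i)^2$, $(t_i\psi(\mathbf z_i))^4=\psi(\mathbf z_i)^4$, $|t_i\psi(\mathbf z_i)|^3=|\psi(\mathbf z_i)|^3$, together with $\sum_i t_i\psi(\mathbf z_i)=(\mathbf y_1-\mathbf y_0)^\top\psi(\mathbf z)$, yields the first displayed inequality.

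The strict second inequality is then a purely arithmetic postprocessing step: for $s\in[0,1]$ one has $\tfrac{s^2}{16}-\tfrac{s^3}{48}-\tfrac{s^4}{96}\ge\tfrac{s^2}{32}$, and since $\psi(\mathbf z_i)^2=1$ for every $i\in\mathcal S_{\mathbf y,\mathbf z}$ we get $\|\psi(\mathbf z)\|_2^2/16-\|\psi(\mathbf z)\|_3^3/48-\|\psi(\mathbf z)\|_4^4/96\ge C/32$; combined with $C\le n$ this reduces the claim to the numerical identity $\tfrac14+\tfrac{1}{32}-\tfrac{1}{(1+e)^2}>\tfrac{167}{800}$. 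The main obstacle is Step~2 --- certifying that the piecewise polynomial $P$ is a genuine lower bound on $f$ across $[-1,1]$, not merely a Taylor approximation --- because the inequality is numerically very tight near the origin on $[-1,0]$ (the gap is of order $|u|^5/480$), and because the $|u|^3$ term makes $P$ non-smooth at $0$, forcing separate derivative-chasing arguments on the two halves of the interval.
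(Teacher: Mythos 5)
Your proposal is correct, and it reaches exactly the paper's per-node bound $P(u)=\tfrac14-\tfrac u4+\tfrac{u^2}{16}-\tfrac{|u|^3}{48}-\tfrac{u^4}{96}$, the same clamping penalty $C/(1+e)^2$, and the same final arithmetic (your closing inequality $\tfrac14+\tfrac1{32}-\tfrac1{(1+e)^2}>\tfrac{167}{800}$ is literally the paper's $\tfrac1{16}-\tfrac1{(1+e)^2}>-\tfrac1{100}$ in disguise, since $\tfrac1{16}+\tfrac1{100}=\tfrac{58}{800}$). The difference lies in how the central inequality $f(u)\ge P(u)$ on $[-1,1]$ is certified. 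The paper never touches derivatives of $f=\sigma^2$: it Taylor-expands the sigmoid itself as $\tfrac1{1+e^x}=\tfrac12-\tfrac x4+R(x)$ with $|R(x)|\le|x|^3/48$ (using $\max|(\tfrac1{1+e^x})'''|=\tfrac18$), squares, and absorbs the cross term via $2|R|\,|\tfrac12-\tfrac14\psi-y|\le\tfrac{|\psi|^3}{48}+\tfrac{|\psi|^4}{96}$; this produces the $-|u|^3/48-u^4/96$ correction uniformly on $[-1,1]$ with built-in slack and no case split at $0$. Your route instead matches $P$ to the fourth-order Taylor polynomial of $f$ on $[-1,0]$ and reduces to the sign of $f^{(5)}$, i.e.\ to nonnegativity of the quartic $q(s)=720s^4-1680s^3+1320s^2-390s+32$ on $s\in[\tfrac12,\tfrac e{1+e}]$, plus a third-derivative argument on $[0,1]$. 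Both work, but be aware that the quartic step is genuinely razor-thin ($q(\tfrac12)=2$ and $q$ dips to about $1.6$ near $s=e/(1+e)$, against coefficients of order $10^3$), so "visible from the closed form" undersells it: you need an explicit certificate (e.g.\ checking $q$ and $q'$ at a few rational points, or an SOS decomposition) to make that line rigorous, whereas the paper's remainder-plus-squaring argument avoids this tightness entirely. The remaining steps --- the reduction $Er(\mathbf x_0,\mathbf y_0)=\sum_i f(t_i\mathbf z_i)$, the monotonicity argument for $t_i\mathbf z_i\le-1$ with the check $f(-1)>P(-1)$, the absorption of the $t_i\mathbf z_i>1$ loss into $C/(1+e)^2$, and the identification $\sum_i t_i\psi(\mathbf z_i)=\sum_i\psi(t_i\mathbf z_i)$ --- all match the paper's Lemma B.1 and are sound.
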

    
The proof is in Appendix \ref{AP.proof lower bound}. The first inequality holds when $\mathbf x_0=\mathbf x_1$ where $Er(\mathbf x_0,\mathbf y_0)=\frac{n}{4}$. This theorem indicates that prediction errors of binary classification are dependent on graph filter $g(\tilde L)$, $\Delta \mathbf y=\mathbf y_0-\mathbf y_1$ and $\Delta \mathbf x=\mathbf x_0-\mathbf x_1$, what we refer to as \textit{label difference} and \textit{input difference} in this paper, respectively. Then we have
\begin{align}\label{eq.lower bound}
 Er(\mathbf x_0,\mathbf y_0)>\frac {167}{800}n-\frac 14 \sum_i\psi\big(\Delta \mathbf y_{i}(g(\tilde L)\Delta \mathbf x)_i\big).
\end{align}
 It makes us aware of the need to fully understand their impact on the performance of spectral \textrm{GNNs}.
\section{Proposed Graph Indicators}\label{sec.graph indicator}
In this section, we explore the relationships between graph structure, graph signals including node labels, and graph filters and define related graph indicators which underpins our further investigation on prediction error in Sect.\ref{Sec.filter analysis }.

\subsection{Spatial Graph Indicators}
Homophily of graphs is considered an indisputable common property of most graphs. In this section, we derive a measure of homophily degree from interaction probability.

For a random walk on $\mathcal{G}_n$, $P^k=(D^{-1}A)^k$ is the $k$-step transition matrix where $P^k_{ij}$ is the probability that a random walker starting from node $v_i$ arrives at $v_j$ after $k$ steps. $\sum_{j\in\mathcal{C}_l}P_{ij}^k$ is the probability that a random walker starting from $v_i$ stays in $\mathcal{C}_l$ at the $k$-th step, demonstrating the relative preference/closeness of node $v_i$ for $\mathcal{C}_l$ at $k$-th scale. Based on this, we define interaction probability to reflect the strength of interaction between two classes.

\begin{definition}[$k$-step interaction probability]\label{def.interaction} 
For $l,m\in \mathcal{T}$, denote $P^k=(D^{-1}A)^k$, the $k$-step interaction probability matrix is formulated as: 
\begin{align}
&\Pi^{k}_{lm}=\frac 1{R_l}\sum\limits_{v_i\in\mathcal{C}_l,v_j\in\mathcal{C}_m} P^k_{ij}=\frac {\mathbf{y}_l^\top P^k\mathbf{y}_m}{\mathbf{y}_l^\top\mathbf{y}_l}\\
&\Pi^{k}=(Y^\top Y)^{-1}Y^\top P^kY=R^{-1}Y^\top P^kY.
\end{align}
 $\Pi^{k}_{lm}$ is the probability that a random walker from $\mathcal{C}_l$ arrives at $\mathcal{C}_m$ after $k$ steps and $\sum_{m\in\mathcal T}\Pi_{lm}^k=1$.
\end{definition}

 Since $P$ is not symmetric, then $\Pi_{lm}^k\neq \Pi_{ml}^k$. Below we propose \textit{a symmetric variant of interaction probability}, which plays a key role in our theoretical analysis. 
 
 \begin{definition}[$k$-step symmetric interaction probability]\label{def.symmetric interaction}
For $l,m\in \mathcal{T}$, denote $\tilde{A}^k = (D^{-\frac 12}AD^{-\frac 12})^k$, the $k$-step symmetric interaction probability matrix is formulated as: 
\begin{align}
\tilde{\Pi}^k=R^{-\frac 12}Y^\top \tilde{A}^kYR^{-\frac 12}.
\end{align}
\end{definition}
 
\begin{remark}\label{remark.interaction} If $|\mathcal T|\neq n$, $(R^{-\frac 12}Y^\top \tilde AYR^{-\frac 12})^k\neq R^{-\frac 12}Y^\top \tilde{A}^kYR^{-\frac 12}$, i.e. $(\tilde \Pi)^k\neq \tilde \Pi^{k}$. Also, $(\Pi)^k\neq \Pi^{k}$. That is, $k$-step (symmetric) interaction probability is not the $k$-th power of $1$-step (symmetric) interaction probability.
\end{remark}

\textbf{Notations.} In the rest of paper, we take $\tilde \pi^k_l$ as the shorthand for $\tilde \Pi^k_{ll}$. We denote $\tilde{g}(\tilde \Pi)=R^{-\frac 12}Y^\top g(\tilde{A})YR^{-\frac 12}$ to avoid confusion with $g(\tilde\Pi)=g(R^{-\frac 12}Y^\top \tilde{A}YR^{-\frac 12})$.

 Noting that $\tilde{\Pi}^k\mathbbm{1}\neq \mathbbm{1}$, $\tilde{\Pi}^k$ is not a probability measure in the strict sense. However, it is a bridge to other graph indicators, as we will show in the rest of this section. Below, we show the relationship between $\tilde{\Pi}^k$ and $\Pi^k$.
 
\begin{proposition}[Interaction probability]\label{pro.interaction inequality}
For $l,m \in\mathcal{T}$, $
\tilde \pi_l^{2k}\ge(\tilde \pi_l^k)^2 \text{ and } \pi_l^k\ge\tilde{\pi}^k_l.$ More generally, $R_{l}\Pi^k_{lm}+R_{m}\Pi^k_{ml}\ge 2\sqrt{R_{l}R_{m}}\tilde{\Pi}^k_{lm}$.
\end{proposition}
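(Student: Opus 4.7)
The plan is to prove the general inequality first and then recover $\pi^k_l \ge \tilde\pi^k_l$ as its diagonal case $l=m$; the remaining inequality $\tilde\pi^{2k}_l \ge (\tilde\pi^k_l)^2$ will follow separately from Cauchy--Schwarz applied to $\tilde A^k \mathbf y_l$.

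The algebraic bridge for the general claim is the similarity $(D^{-1}A)^k = D^{-1/2}\tilde A^k D^{1/2}$, obtained by iterating $D^{-1}A = D^{-1/2}\tilde A D^{1/2}$. Substituting this into $R_l\Pi^k_{lm}=\mathbf y_l^\top P^k\mathbf y_m$ and expanding entrywise, and then doing the same for $R_m\Pi^k_{ml}$ while using the symmetry $(\tilde A^k)_{ij}=(\tilde A^k)_{ji}$ to relabel summation indices, yields
\begin{align*}
R_l\Pi^k_{lm} &= \sum_{i\in \mathcal C_l,\, j\in\mathcal C_m}\sqrt{D_{jj}/D_{ii}}\,(\tilde A^k)_{ij},\\
R_m\Pi^k_{ml} &= \sum_{i\in \mathcal C_l,\, j\in\mathcal C_m}\sqrt{D_{ii}/D_{jj}}\,(\tilde A^k)_{ij},
\end{align*}
while a direct expansion gives $\sqrt{R_lR_m}\,\tilde\Pi^k_{lm} = \sum_{i\in\mathcal C_l,\,j\in\mathcal C_m}(\tilde A^k)_{ij}$. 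Because $A$ has non-negative entries and $D$ is positive diagonal, $\tilde A$ and hence every power $\tilde A^k$ is entrywise non-negative, so the AM--GM inequality $\sqrt{D_{jj}/D_{ii}}+\sqrt{D_{ii}/D_{jj}}\ge 2$ can be applied termwise. Adding the two displayed sums then produces the main inequality. Specializing to $l=m$ collapses the left side to $2R_l\pi^k_l$ and the right side to $2R_l\tilde\pi^k_l$, which is exactly $\pi^k_l\ge\tilde\pi^k_l$.

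For the first inequality, the symmetry of $\tilde A^k$ gives $\tilde\pi^{2k}_l = \frac1{R_l}\parallel \tilde A^k \mathbf y_l\parallel_2^2$, while $\tilde\pi^k_l = \frac1{R_l}\mathbf y_l^\top \tilde A^k\mathbf y_l$. Cauchy--Schwarz on $\mathbf y_l$ and $\tilde A^k\mathbf y_l$, together with $\parallel\mathbf y_l\parallel_2^2=R_l$, immediately produces $(\tilde\pi^k_l)^2\le \tilde\pi^{2k}_l$. No serious obstacle is anticipated; the one step that warrants care is the similarity identity turning the asymmetric walk operator $P^k$ into the symmetric $\tilde A^k$, since this is precisely what lets a termwise comparison between entries of $\Pi^k$ and $\tilde\Pi^k$ make sense in the first place.
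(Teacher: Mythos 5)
Your proposal is correct and takes essentially the same route as the paper: both reduce the general inequality to a termwise AM--GM comparison between the entries of $P^k+(P^k)^\top$ and $2\tilde A^k$ (your $\sqrt{D_{jj}/D_{ii}}+\sqrt{D_{ii}/D_{jj}}\ge 2$ applied to the nonnegative entries $(\tilde A^k)_{ij}$ is exactly the paper's $P^k_{ij}+\tfrac{d_i}{d_j}P^k_{ij}\ge 2\sqrt{d_i/d_j}\,P^k_{ij}=2\tilde A^k_{ij}$, via the same similarity/reversibility identity), after which setting $l=m$ gives $\pi_l^k\ge\tilde\pi_l^k$. For $\tilde\pi_l^{2k}\ge(\tilde\pi_l^k)^2$ you apply Cauchy--Schwarz directly to $\mathbf y_l$ and $\tilde A^k\mathbf y_l$, which is a slightly more direct derivation of the paper's Lemma asserting $\frac{\mathbf y^\top B^2\mathbf y}{\mathbf y^\top\mathbf y}\ge\bigl(\frac{\mathbf y^\top B\mathbf y}{\mathbf y^\top\mathbf y}\bigr)^2$ for symmetric $B$ (proved there via the eigendecomposition), but the content is identical.
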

The proof can be found in Appendix \ref{AP.proof interaction}. It indicates that $\tilde \pi_l^k$ is the lower bound of $\pi_l^k$. Recall the descriptive definition of graph homophily - nodes with the same labels are more likely to cluster together, for a homophilic graph with symmetric interaction probability $\tilde \Pi$, self-interaction probability $\tilde \pi_{l}^k$ is expected to gap away from $\{\tilde \Pi_{lm}^k|m\in\mathcal T,m\ne l\}$. 

\begin{definition}[$k$-homophily degree]\label{def.homophily}
For a graph $\mathcal G_n$ with symmetric interaction probability $\tilde \Pi$, the $k$-homophily degree of $\mathcal G_n$ is defined as
\begin{align}
&\mathcal H_k(\tilde \Pi|\mathcal C_l)=\sqrt{\frac {R_l}{n}}\tilde \pi_{l}^k-\sum_{m\ne l}\sqrt{\frac {R_m}{n}}\tilde \Pi_{lm}^k,\\
&\mathcal H_k(\tilde \Pi|\mathcal G_n)=\sum_l\sqrt{\frac {R_l}{n}}\mathcal H_k(\tilde \Pi^k|\mathcal C_l)\nonumber=\frac 1n\sum_l(R_l\tilde \pi_{l}^k-\sum_{m\ne l}\sqrt{R_mR_l}\tilde \Pi_{lm}^k),
\end{align}
where $\mathcal H_k(\tilde \Pi|\mathcal C_l)$ is the $k$-homophily degree of $\mathcal C_l$.
\end{definition}
\begin{remark}
$\mathcal H_k(\tilde \Pi|\mathcal C_l),\mathcal H_k(\tilde \Pi|\mathcal G_n)\in[-1,1]$. We say $\mathcal G_n$ is a $k$-homophilic graph if $\mathcal H_k(\tilde \Pi|\mathcal G_n)>0$.
\end{remark} 
In binary classification problems, $\mathcal H_k(\tilde \Pi|\mathcal G_n)=\frac 1n(R_0\tilde \pi_0^k+R_1\tilde \pi^k_1-2\sqrt{R_0R_1}\tilde \Pi^k_{01})$. Larger $\mathcal H_k(\tilde \Pi|\mathcal C_l)$ means $\mathcal C_l$ have denser internal connections and sparser connections between other classes. Then intuitively, \textit{$\mathcal H_k(\tilde \Pi|\mathcal G_n)$ reflects the possibility to obtain a node's label directly from its neighbors}. As we illustrate in Sect.\ref{subsec.analysis homophily}, one of the keys to the success of graph filters is to increase the possibility by strengthening internal connections of classes, that is, make graph more homophilic.

\subsection{Spectral Graph Indicators}\label{Sec.Frequency distribution}

In this section, we develop another indicator - \textit{repsonse efficiency}, to measure the effect of graph filters $g(\tilde L)$ applied to different graph signals. Recall that $\{\lambda_i\}$ and $\{\mathbf{u}_i\}$ are graph frequencies and frequency components. For a graph signal $\mathbf{x}$ with spectrum $\alpha=\{\alpha_i=\langle\mathbf{u}_i,\mathbf{x}\rangle\}$, we are able to represent it as $\mathbf{x}=\sum\alpha_i\mathbf{u}_i$. Noting that $\alpha_k^2\big/\sum_i\alpha_i^2$ shows the percentage of occurrences in $\mathbf{x}$ for each $\mathbf{u}_k$, we introduce a distributional representation of $\mathbf{x}$.

\begin{definition}[Frequency distribution] We define $\mathbf{f}$, the frequency of signal $\mathbf{x}$, as a random variable taking values in the set of graph frequencies with probability $\Pr(\mathbf{f}=\lambda_k)=\alpha_k^2\big/\sum_i\alpha_i^2$. The probability describes the frequency distribution of signal $\mathbf{x}$.
\end{definition}

Since probability $\Pr(\mathbf{f}=\lambda_k)=\alpha_k^2\big/\sum_i\alpha_i^2$ is the weight of $\mathbf{u}_k$ in $\mathbf{x}$ and $g(\lambda_k)$ reflects how filter $g(\tilde L)$ acts on $\mathbf{u}_k$, then we claim that $\sum_kg(\lambda_k)\Pr(\mathbf{f}=\lambda_k)$ is the effect of $g(\tilde L)$ acting on $\mathbf{x}$. Although an all-pass filter can pass all frequencies, we appreciate a filter with a high magnitude response for important frequencies so that we can capture the main frequency information of a signal $\mathbf{x}$ efficiently.
 \begin{definition}[Response Efficiency] For a graph filter $g(\tilde L)$ and a signal $\mathbf x$ with spectrum $\alpha$, the response efficiency of $g(\tilde L)$ on $\mathbf x$ is defined as
 \begin{align*}
     \mu_g(\mathbf x)= \frac {\sum_ig(\lambda_i)\alpha^2_i}{\big(\sum_{i}g(\lambda_i)\big)\big(\sum_i\alpha^2_i\big)}.
 \end{align*}
 It can also be denoted as $\mu_g(\alpha)$.
\end{definition}
We obtain a high response efficiency when the magnitude response of $g(\tilde L)$ is positively associated with the frequency distribution of $\mathbf x$. In Sect.\ref{subsec. analysis response efficiency}, we have further discussion of the relationship between graph response efficiency and its prediction performance. Before that, we investigate the consistency of proposed graph indicators.

\begin{proposition}\label{The.moment}
Let $\mathbf{f}_l$ be the frequency of label $\mathbf{y}_l$, for a graph filter $g(\cdot)$, we have $\mu_g(\mathbf y_l)=\frac {\big(\tilde{g}(I-\tilde{\Pi})\big)_{ll}}{\sum_ig(\lambda_i)}$. Specially, when $g=(\cdot)^n$, $\mu_g(\mathbf y_l)=\frac {\mathbb{E}[\mathbf{f}_l^n]}{\sum_ig(\lambda_i)}$.
\end{proposition}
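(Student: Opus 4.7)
The plan is to unwind both sides via the spectral decomposition of $\tilde{L}$ and then repackage the quadratic form as a matrix entry so the notation $\tilde{g}(I-\tilde{\Pi})$ appears naturally.

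First I would write $\tilde{L}=\sum_i\lambda_i\mathbf{u}_i\mathbf{u}_i^\top$ and apply the functional calculus to get $g(\tilde{L})=\sum_i g(\lambda_i)\mathbf{u}_i\mathbf{u}_i^\top$. Plugging in $\mathbf{y}_l$ and using $\alpha_i=\langle\mathbf{u}_i,\mathbf{y}_l\rangle$ collapses the quadratic form to $\mathbf{y}_l^\top g(\tilde{L})\mathbf{y}_l=\sum_i g(\lambda_i)\alpha_i^2$, which is precisely the numerator in the definition of $\mu_g(\mathbf{y}_l)$. By Parseval, $\sum_i\alpha_i^2=\|\mathbf{y}_l\|_2^2$, and since $\mathbf{y}_l$ is the indicator vector of $\mathcal{C}_l$, this is $|\mathcal{C}_l|=R_l=R_{ll}$. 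Thus
\[
\mu_g(\mathbf{y}_l)=\frac{\mathbf{y}_l^\top g(\tilde{L})\mathbf{y}_l}{R_l\,\sum_i g(\lambda_i)}.
\]

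Next I would recognize the numerator as the $(l,l)$ entry of a sandwiched matrix: $\mathbf{y}_l^\top g(\tilde{L})\mathbf{y}_l=\bigl(Y^\top g(\tilde{L})Y\bigr)_{ll}$. Splitting $R_l=\sqrt{R_l}\cdot\sqrt{R_l}$ gives
\[
\frac{\mathbf{y}_l^\top g(\tilde{L})\mathbf{y}_l}{R_l}=\bigl(R^{-1/2}Y^\top g(\tilde{L})YR^{-1/2}\bigr)_{ll}.
\]
Finally I would invoke the author's convention that $\tilde{g}(\tilde{\Pi})=R^{-1/2}Y^\top g(\tilde{A})YR^{-1/2}$ and extend it by substituting $\tilde{L}=I-\tilde{A}$ in place of $\tilde{A}$, so that $\tilde{g}(I-\tilde{\Pi})=R^{-1/2}Y^\top g(\tilde{L})YR^{-1/2}$. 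Combining yields the stated identity $\mu_g(\mathbf{y}_l)=(\tilde{g}(I-\tilde{\Pi}))_{ll}/\sum_i g(\lambda_i)$.

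For the special case $g(x)=x^n$, I would simply observe that
\[
\frac{\sum_i g(\lambda_i)\alpha_i^2}{\sum_j\alpha_j^2}=\sum_i\lambda_i^n\,\Pr(\mathbf{f}_l=\lambda_i)=\mathbb{E}[\mathbf{f}_l^n]
\]
by the definition of the frequency distribution, so dividing once more by $\sum_i g(\lambda_i)$ gives the claimed form. The main obstacle is not computational but notational: making the substitution $\tilde{A}\mapsto I-\tilde{A}$ in the definition of $\tilde{g}(\cdot)$ rigorous (and checking it is consistent when $g$ is a polynomial, where both sides reduce to linear combinations of sandwiched matrices $R^{-1/2}Y^\top\tilde{L}^kYR^{-1/2}$). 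Once this convention is pinned down, every remaining step is just the spectral decomposition and Parseval.
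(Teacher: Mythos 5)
Your proof is correct and follows essentially the same route as the paper: the paper reduces the proposition to Lemma B.2, which computes $\mathbb{E}[\mathbf f^n]$ as the Rayleigh quotient $\mathbf x^\top(I-\tilde A)^n\mathbf x/\mathbf x^\top\mathbf x$ via the spectral decomposition, and then identifies the quadratic form with the $(l,l)$ entry of $R^{-1/2}Y^\top g(I-\tilde A)YR^{-1/2}$ exactly as you do. If anything, your write-up is more complete, since the paper states its lemma only for monomials and leaves the extension to general $g$ (and the Parseval step $\sum_i\alpha_i^2=R_l$) implicit.
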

The proof of this proposition can be found in Appendix \ref{AP.proof interaction}. Recall that $\tilde{g}(I-\tilde{\Pi})=R^{-\frac 12}Y^\top g(I-\tilde{A})YR^{-\frac 12}$, we have $\mathbb{E}[\mathbf{f}_l]=1-\tilde{\pi}_l$, $\mathbb{E}[\mathbf{f}_l^2]=1-2\tilde{\pi}_l+\tilde{\pi}_l^2$ and the variance of $\mathbf{f}_l$: $\text{Var}(\mathbf{f}_l)=\tilde{\pi}_l^{2}-(\tilde{\pi}_l)^2$. According to Proposition \ref{pro.interaction inequality}, $\tilde{\pi}_l\leq\pi_l\leq 1 $ and $\tilde{\pi}_l^{2}\ge(\tilde{\pi}_l)^2$. Therefore, when $\tilde{\pi}_l$ approaches 1, which reflects a high homophily degree of $\mathcal C_l$, both the mean and variance of label frequency are close to 0. It implies that, \textbf{for a highly homophilic graph, the main information of labels is low-frequency so it should be assigned to low-pass filters}. Rigorous support for this argument can be found in Sect.\ref{subsec.typical filters}.

\section{Analysis of Graph Filters}\label{Sec.filter analysis }

  In this section, we aim to figure out two major concerns: \textit{what causes the failure of a graph filter} and \textit{how to design filters to improve \textrm{GNNs} performance?} Precisely, we first provide a deep understanding of the performance of graph filters concerning label prediction with the aid of graph indicators we proposed above, then apply our theoretical conclusion to typical filters. From this, we have obtained insights into filter design.
  
\subsection{Analysis of Prediction Error}\label{subsec.error analysis}
 In this section, we focus on a family of polynomial filters $\mathcal S_g=\{g \text{ is polynomial }|g([0,2])\in[0,1],\sum_ig(\lambda_i)>1\}$. We denote $\mathcal I_g=\{i|g(\lambda_i)\ne 0, i=0,\dots,n-1\}$ as the indicator set of nonzero elements in $\{g(\lambda_i)\}$.

\subsubsection{From Filter Response Efficiency}\label{subsec. analysis response efficiency}

Let $ \delta,\eta$ be the spectra of $\Delta\mathbf y,\; \Delta\mathbf x$, respectively, it is trivial to revisit the inequality \eqref{eq.lower bound} of ${Er}(\mathbf x_0,\mathbf y_0)$ as: 
\begin{align*}
{Er}(\mathbf x_0,\mathbf y_0)>\frac {167}{800}n -\frac 14\sum_i\psi\big(g(\lambda_i)\delta_i\eta_i\big).
\end{align*}       
We attempt to establish an analysis of $\sum_i\psi\big(g(\lambda_i)\delta_i\eta_i\big)$ which is the critical term of this lower bound in the spectral domain. Before that, we introduce information content proposed by information theory to measure the informativeness of signal frequency. 
\begin{definition}[Information content] For a signal with spectrum $\delta$, we say $\mathbf I(\delta)=-\sum\limits_{i\in \mathcal I_{\delta}}\log \frac {\delta_i^2}{\sum_k\delta_k^2}$ is the information content of $\delta$ where $\mathcal I_{\delta}=\{i|\delta_i\ne 0, i=0,\dots,n-1\}$ is the indicator set of nonzero elements of $\delta$.
\end{definition}

\begin{theorem}\label{thm.upper bound 1}
Given a label difference $\Delta \mathbf y$ with spectrum $\delta$, for an arbitrary input difference $\Delta \mathbf x$ with spectrum $\eta$, for a graph filter $g(\cdot)\in\mathcal S_g$, we construct $\tilde \eta_i= \psi_{\frac 1{g(\lambda_i)\delta_i}}(\eta_i)$, where $\psi_{\frac 1{g(\lambda_i)\delta_i}}(x)=\min\{\max\{x,-\frac 1{g(\lambda_i)\delta_i}\},\frac 1{g(\lambda_i)\delta_i}\}$ is a clamp function such that $|\tilde \eta_ig(\lambda_i)\delta_i|\le 1$ and 
\begin{align}
    &\sum_{i=0}^{n-1}\psi(\eta_ig(\lambda_i)\delta_i)\le \frac 1{m_g}\min\{\mathcal M(g,\delta), \mathcal M(g,\tilde\eta)\},
\end{align}
where $m_g=\min\limits_{i\in \mathcal I_{g,\delta,\tilde \eta}}g(\lambda_i)$, $c(g,\delta)=\frac {\sum_{i\in \mathcal I_{g}-\mathcal I_{g,\delta}}g(\lambda_i)}{\sum_{i\in \mathcal I_{g,\delta}}g(\lambda_i)}$ and $\mathcal M(g,\delta)=\frac {-\mathbf I(\delta)}{\log (1+c(g,\delta))\mu_g(\delta)}$ with $\mathcal I_{g,\delta}=\mathcal I_{g}\cap\mathcal I_{\delta}$.
\end{theorem}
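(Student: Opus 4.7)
The first move is to exploit the definition of the entry-wise clamp $\tilde\eta_i=\psi_{1/(g(\lambda_i)\delta_i)}(\eta_i)$: by construction $g(\lambda_i)\delta_i\tilde\eta_i\in[-1,1]$, so it coincides with $\psi(\eta_i g(\lambda_i)\delta_i)$ entry-wise. This converts the clamped sum on the left-hand side into the bilinear form
\[
\sum_{i=0}^{n-1}\psi\bigl(\eta_i g(\lambda_i)\delta_i\bigr) = \sum_{i\in\mathcal I_{g,\delta,\tilde\eta}} g(\lambda_i)\,\delta_i\,\tilde\eta_i,
\]
the sum collapsing to the common support $\mathcal I_{g,\delta,\tilde\eta}$ since any vanishing factor kills the term. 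Because $g(\lambda_i)\ge m_g$ throughout this support, I would extract the $1/m_g$ prefactor at the very end and focus on bounding the remaining filter-weighted inner product.

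Next I would estimate the inner product by Cauchy-Schwarz in two symmetric ways: once treating $\delta$ as the dominant signal and $\tilde\eta$ as a bounded multiplier (which produces the response efficiency $\mu_g(\delta)$), and once with $\delta$ and $\tilde\eta$ interchanged (producing $\mu_g(\tilde\eta)$). The two derived bounds are structurally identical, and taking the smaller one yields the $\min\{\mathcal M(g,\delta),\mathcal M(g,\tilde\eta)\}$ on the right-hand side. This is the ``easy half'' of the argument.

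The delicate half is reintroducing $\mathbf I(\delta)$ and $\log(1+c(g,\delta))$ into the estimate. To this end I would put two natural probability distributions on the indices, $r_i=g(\lambda_i)/\sum_j g(\lambda_j)$ and $p_i=\delta_i^2/\|\delta\|_2^2$, noting that $\mu_g(\delta)=\langle r,p\rangle$ and that $1+c(g,\delta)$ is exactly the reciprocal of the $r$-mass of the overlap support $\mathcal I_{g,\delta}$; in particular, $(1+c(g,\delta))\mu_g(\delta)=\sum_{i\in\mathcal I_{g,\delta}}\tilde r_i p_i\in[0,1]$, where $\tilde r$ is the conditional of $r$ on $\mathcal I_{g,\delta}$. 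Applying Jensen's inequality to the concave logarithm then converts $\log\bigl((1+c(g,\delta))\mu_g(\delta)\bigr)$ into a $\tilde r$-weighted average of the $\log p_i$, which must be compared against the unweighted sum $-\mathbf I(\delta)=\sum_{i\in\mathcal I_\delta}\log p_i$. Reconciling these two quantities---one weighted by $\tilde r$ and summed over $\mathcal I_{g,\delta}$, the other unweighted and summed over the potentially larger set $\mathcal I_\delta$---is where I expect the main technical difficulty; getting the signs to line up with $\mathbf I(\delta)\ge 0$ and $(1+c(g,\delta))\mu_g(\delta)\le 1$ will require some case analysis. Once the log-inequality is in hand, combining it with the clamp manipulation and Cauchy-Schwarz bookkeeping gives the stated bound.
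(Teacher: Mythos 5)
Your first and last steps match the paper's proof: the clamp identity $\sum_i\psi(\eta_ig(\lambda_i)\delta_i)=\sum_{i\in\mathcal I_{g,\delta,\tilde\eta}}g(\lambda_i)\delta_i\tilde\eta_i$ is exactly how the paper starts, and your ``Jensen on the concave logarithm with the conditional weights $\tilde r_i$'' is precisely the paper's key lemma, which it phrases as the weighted AM--GM inequality $\sum_i\tilde r_ip_i\ge\prod_ip_i^{\tilde r_i}$. Your reading of $(1+c(g,\delta))\mu_g(\delta)$ as the $\tilde r$-average of $p$ over $\mathcal I_{g,\delta}$ is also correct. The support/weight mismatch you flag as the main difficulty resolves without case analysis: since $g(\lambda_i)\le 1$ you have $\tilde r_i\le 1/\sum_{k\in\mathcal I_{g,\delta}}g(\lambda_k)$ and $\log p_i\le 0$, so $\log\bigl((1+c)\mu_g(\delta)\bigr)\ge\bigl(\sum_{k\in\mathcal I_{g,\delta}}g(\lambda_k)\bigr)^{-1}\sum_{i\in\mathcal I_{g,\delta}}\log p_i$, and enlarging the index set to $\mathcal I_\delta$ only makes the (negative) numerator smaller, which goes the right way after dividing by the negative logarithm.

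The genuine gap is the bridge between these two steps. Cauchy--Schwarz applied to the bilinear form yields something of the shape $\sqrt{\bigl(\sum_ig(\lambda_i)\delta_i^2\bigr)\bigl(\sum_ig(\lambda_i)\tilde\eta_i^2\bigr)}$, i.e.\ a bound that is \emph{multiplicative and increasing} in $\mu_g(\delta)$; no application of Jensen to the logarithm will convert that expression into $-\mathbf I(\delta)/\log\bigl((1+c)\mu_g(\delta)\bigr)$, so the pieces of your plan do not connect. What the argument actually needs is the $\ell_\infty$--$\ell_1$ (H\"older-type) bound built into the clamp: on the joint support $|\delta_i\tilde\eta_i|\le 1/g(\lambda_i)\le 1/m_g$, hence $\sum_{i\in\mathcal I_{g,\delta,\tilde\eta}}g(\lambda_i)\delta_i\tilde\eta_i\le\frac1{m_g}\sum_{i\in\mathcal I_{g,\delta,\tilde\eta}}g(\lambda_i)$. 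The object that the Jensen/AM--GM step then bounds is this \emph{filter mass} $\sum_{i\in\mathcal I_{g,\delta}}g(\lambda_i)\le\mathcal M(g,\delta)$ (not the inner product itself), and the $\min$ in the statement comes simply from $\mathcal I_{g,\delta,\tilde\eta}\subseteq\mathcal I_{g,\delta}\cap\mathcal I_{g,\tilde\eta}$, so the same mass is also bounded by $\mathcal M(g,\tilde\eta)$ --- not from choosing the smaller of two Cauchy--Schwarz estimates. Replace the Cauchy--Schwarz step with this sup-bound and your outline becomes the paper's proof.
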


The theorem indicates that the \textbf{prediction error of a given graph filter is bounded by its response efficiency on labels and inputs difference}. In \textrm{GNNs}, inputs are learnable. Ideally, a \textrm{GNN} with filter $g$ can learn appropriate inputs with large enough $\mu_g(\eta)$. In this way, the prediction error will be restricted only by the graph structure, node labels, and the filter itself, that is what Corollary \ref{co.upper bound} illustrates.

\begin{corollary}[Spectral lower bound]\label{co.upper bound}
For a binary classification problem, with the same settings and notations in Theorem \ref{thm.upper bound 1}, we have
\begin{align}
     Er(\mathbf x_0,\mathbf y_0)
>\frac {167}{800}n +\frac {\mathbf I(\delta)}{m_g\log (1+c(g,\delta))\mu_g(\delta)}
\end{align}
\end{corollary}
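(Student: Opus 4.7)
The corollary looks like a near-immediate consequence of the spectral ``revisit'' of the prediction-error bound from Theorem~\ref{thm.lower bound} combined with the upper bound supplied by Theorem~\ref{thm.upper bound 1}. My plan is therefore a clean two-step combination: (i) re-express the lower bound \eqref{eq.lower bound} in the spectral domain exactly as displayed just above the corollary statement, and (ii) insert the label-only branch of Theorem~\ref{thm.upper bound 1}'s min into that expression.

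\textbf{Step (i): spectral revisit.} Starting from Theorem~\ref{thm.lower bound}, I would write $Er(\mathbf x_0,\mathbf y_0) > \frac{167}{800}n - \frac14\sum_i \psi\!\bigl(\Delta\mathbf y_i(g(\tilde L)\Delta\mathbf x)_i\bigr)$. Since $\tilde L=\sum_k\lambda_k\mathbf u_k\mathbf u_k^\top$ and $\{\mathbf u_k\}$ is orthonormal, Parseval's identity gives $\Delta\mathbf y^\top g(\tilde L)\Delta\mathbf x=\sum_i g(\lambda_i)\delta_i\eta_i$. Because $g(\lambda_i)\in[0,1]$ on $\mathcal S_g$ and the clamp satisfies $\psi(t)\le t$ whenever $t\ge -1$, termwise bounding then yields the spectral form $\sum_i\psi\!\bigl(\Delta\mathbf y_i(g(\tilde L)\Delta\mathbf x)_i\bigr)\le\sum_i\psi\!\bigl(g(\lambda_i)\delta_i\eta_i\bigr)$ used in the paper, so that
\begin{equation*}
Er(\mathbf x_0,\mathbf y_0)\;>\;\tfrac{167}{800}n \;-\;\tfrac14\sum_i\psi\!\bigl(g(\lambda_i)\delta_i\eta_i\bigr).
\end{equation*}

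\textbf{Step (ii): apply the upper bound.} Theorem~\ref{thm.upper bound 1} yields $\sum_i\psi(\eta_i g(\lambda_i)\delta_i)\le \frac{1}{m_g}\min\{\mathcal M(g,\delta),\mathcal M(g,\tilde\eta)\}$, and in particular $\sum_i\psi(\eta_i g(\lambda_i)\delta_i)\le \mathcal M(g,\delta)/m_g=-\mathbf I(\delta)/[m_g\log(1+c(g,\delta))\mu_g(\delta)]$. This label-only branch is the natural one to keep, because the statement of the corollary is meant to capture the regime where the learnable input has been chosen so that the input-dependent quantity no longer dominates; invoking the min is valid unconditionally. Substituting into the spectral revisit and flipping the sign gives precisely $Er(\mathbf x_0,\mathbf y_0)>\tfrac{167}{800}n+\tfrac{\mathbf I(\delta)}{4m_g\log(1+c(g,\delta))\mu_g(\delta)}$, matching the claim up to the numerical constant $1/4$ (which in the corollary is absorbed into the presentation).

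\textbf{Main obstacle.} The only subtle point is the spectral revisit in step (i): the clamp $\psi$ does not commute with the Fourier transform, so the inequality $\sum_i\psi\!\bigl(\Delta\mathbf y_i(g(\tilde L)\Delta\mathbf x)_i\bigr)\le\sum_i\psi\!\bigl(g(\lambda_i)\delta_i\eta_i\bigr)$ is not an identity and requires a careful termwise argument exploiting the boundedness of the spatial products (each lies above $-1$) together with $g([0,2])\subseteq[0,1]$ on $\mathcal S_g$. Once that replacement is justified, the rest of the argument is a one-line substitution of Theorem~\ref{thm.upper bound 1}'s bound, so the only real work is checking the clamp manipulation; no new estimates, no probabilistic arguments, and no combinatorial case analysis are needed beyond what is already established.
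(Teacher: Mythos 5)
Your overall route is the same as the paper's: the corollary is obtained by inserting the label-difference branch $\mathcal M(g,\delta)/m_g$ of Theorem \ref{thm.upper bound 1} into the spectral form of the lower bound displayed at the start of Sect.\ref{subsec. analysis response efficiency}, and the paper offers no proof beyond exactly this substitution. Your step (ii) is fine, and your remark about the factor $1/4$ is apt: the substitution actually yields $\frac{167}{800}n+\frac{\mathbf I(\delta)}{4 m_g\log(1+c(g,\delta))\mu_g(\delta)}$. To pass from this to the stated corollary you must additionally observe that the correction term is nonpositive --- $\mathbf I(\delta)\ge 0$ while $(1+c(g,\delta))\mu_g(\delta)<1$ forces the logarithm to be negative (see the proof of Lemma \ref{lm.upper bound}) --- so that dropping the $1/4$ only weakens the bound. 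Saying the constant is ``absorbed into the presentation'' hides a sign check that is genuinely needed for the implication to go in the right direction.

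The one genuine gap is in your step (i), and your sketch does not close it. Writing $a_i=\Delta\mathbf y_i\big(g(\tilde L)\Delta\mathbf x\big)_i$ for the spatial products and $b_i=g(\lambda_i)\delta_i\eta_i$ for the spectral ones, Parseval gives $\sum_i a_i=\sum_i b_i$, but what is required is $\sum_i\psi(a_i)\le\sum_i\psi(b_i)$, with the clamp applied termwise to two \emph{different} decompositions of the same total. Your argument ($\psi(t)\le t$ for $t\ge-1$ together with $g([0,2])\subseteq[0,1]$) at best yields $\sum_i\psi(a_i)\le\sum_i a_i=\sum_i b_i$ --- and even that step needs every $a_i\ge-1$, which nothing guarantees since $\Delta\mathbf x$ is an arbitrary learnable input --- while $\sum_i b_i$ can strictly exceed $\sum_i\psi(b_i)$ whenever some $b_i>1$, so the pieces do not assemble into the desired inequality. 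In fairness, the paper itself asserts this spectral revisit with the phrase ``it is trivial to revisit'' and supplies no argument either; but since you correctly identified this as the only step requiring care, you should either give an actual proof (for instance under an explicit hypothesis controlling the spatial products, or by restating the bound directly in terms of the spectral sum) or flag it as an unproved assumption rather than claim it follows from termwise bounding.
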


For a given classification problem, we claim that a graph filter fails if the lower bound of its prediction error is large. Corollary \ref{co.upper bound} (proof is provided in Appendix \ref{Ap.proof filter bank}) impels us toward deep insights of graph filters in terms of response efficiency: 1. \textbf{A graph filter fails when it has low response efficiency on label difference}, i.e., small $\mu_g(\delta)$, which means that it can't capture the main information used for label identification efficiently; 2. \textbf{Most filters fail on graphs with low information label difference}, i.e., small $\mathbf I(\delta)$, which means that a closed difference of labels’ probability on different frequency components would have been difficult to distinguish.

\subsubsection{From Graph Homophily Degree}\label{subsec.analysis homophily}
Recall the attempt we made to explore the relation between filter response efficiency on labels and graph homophily degree at the end of Sect.\ref{sec.graph indicator}, it inspires us to explain graph filters from a perspective of graph homophily.

\begin{theorem}[Spatial lower bound]\label{thm.upper bound 2}
Given a binary classification problem on a graph $\mathcal G_n$ whose label difference is $\Delta \mathbf y$ with spectrum $\delta$, for a graph filter $g(\cdot)\in\mathcal S_g$ and arbitrary input $X$, we have
\begin{align}
Er(X,Y)
>\frac {167}{400}n +\frac {\mathbf I(\delta)}{2m_g\log\frac{\mathcal H_1(\tilde g(I-\tilde \Pi)|\mathcal G_n)}{\sum_ig(\lambda_i)}},
\end{align}
where $m_g=\min\limits_{i\in \mathcal I_{g,\delta}}g(\lambda_i)$, $\tilde{g}(I-\tilde \Pi)=R^{-\frac 12}Y^\top g(I-\tilde{A})YR^{-\frac 12}$ and $\mathcal H_1(\tilde g(I-\tilde \Pi)|\mathcal G_n)=\frac {R_0}n(\tilde g(I-\tilde \Pi))_{00}+\frac {R_1}n\tilde g(I-\tilde \Pi))_{11}-2\frac {\sqrt {R_0R_1}}n\tilde g(I-\tilde \Pi))_{01}$.
\end{theorem}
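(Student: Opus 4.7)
The plan is to reduce the spatial bound to the spectral bound of Corollary \ref{co.upper bound} via the identity
$$\mu_g(\delta)=\frac{\mathcal H_1(\tilde g(I-\tilde\Pi)\mid\mathcal G_n)}{\sum_i g(\lambda_i)},$$
which is the precise bridge between the filter's frequency response on the label difference and the structural homophily of the filtered graph.

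First I would note that in the binary setting $Er(X,Y)=Er(\mathbf x_0,\mathbf y_0)+Er(\mathbf x_1,\mathbf y_1)$, and that both summands admit the same type of lower bound: the label differences $\mathbf y_0-\mathbf y_1$ and $\mathbf y_1-\mathbf y_0$ have the same spectrum up to sign, hence the same information content $\mathbf I(\delta)$ and the same response efficiency $\mu_g(\delta)$. Applying Corollary \ref{co.upper bound} to each term and summing produces a bound of the required form, modulo the replacement of $\mu_g(\delta)$ by the spatial quantity $\mathcal H_1(\tilde g(I-\tilde\Pi)\mid\mathcal G_n)/\sum_i g(\lambda_i)$. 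The factor-of-two arithmetic from the binary sum accounts for the $\tfrac{167}{400}n$ constant.

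The key computation is the identity above. To establish it I would expand both sides from the eigendecomposition of $g(\tilde L)=g(I-\tilde A)$. Since $\mathbf y_0$ and $\mathbf y_1$ are indicator vectors of a partition of the $n$ nodes, $\|\Delta\mathbf y\|^2=R_0+R_1=n$, so $\sum_i\delta_i^2=n$. For the numerator,
\begin{align*}
\sum_i g(\lambda_i)\delta_i^2 &= \Delta\mathbf y^\top g(\tilde L)\Delta\mathbf y \\
&= \mathbf y_0^\top g(I-\tilde A)\mathbf y_0 + \mathbf y_1^\top g(I-\tilde A)\mathbf y_1 - 2\mathbf y_0^\top g(I-\tilde A)\mathbf y_1 \\
&= R_0(\tilde g(I-\tilde\Pi))_{00} + R_1(\tilde g(I-\tilde\Pi))_{11} - 2\sqrt{R_0R_1}(\tilde g(I-\tilde\Pi))_{01} \\
&= n\,\mathcal H_1(\tilde g(I-\tilde\Pi)\mid\mathcal G_n),
\end{align*}
where the third line uses the definition $\tilde g(I-\tilde\Pi)=R^{-\tfrac12}Y^\top g(I-\tilde A)YR^{-\tfrac12}$ (a direct extension of Proposition \ref{The.moment} from single labels to the label difference), and the last line is the Definition of $\mathcal H_1$ specialized to the binary case. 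Dividing by $n\cdot\sum_i g(\lambda_i)$ yields the identity; substituting into Corollary \ref{co.upper bound} and summing the two $Er(\mathbf x_l,\mathbf y_l)$ contributions gives the stated bound.

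The main obstacle is the bookkeeping around the factor $c(g,\delta)$ and the logarithmic term: Corollary \ref{co.upper bound} features $\log\bigl((1+c(g,\delta))\mu_g(\delta)\bigr)$, while the claimed spatial bound has only $\log\bigl(\mathcal H_1/\sum_i g(\lambda_i)\bigr)=\log\mu_g(\delta)$. I expect to handle this by noting that $(1+c(g,\delta))\mu_g(\delta)$ remains below $1$ for $g\in\mathcal S_g$, so replacing it by $\mu_g(\delta)$ weakens the logarithm in a controlled, monotone direction and preserves the lower-bound inequality, with the resulting numerical factor (the $2$ versus $1/2$ discrepancy coming from summing over both labels) absorbed into the $\tfrac{1}{2m_g}$ prefactor. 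Verifying that this substitution goes the right way, and that $m_g$ as defined via $\mathcal I_{g,\delta,\tilde\eta}$ specializes correctly to $\mathcal I_{g,\delta}$ once the input $X$ has been universally quantified away, is the delicate step; the remainder is a direct chain of substitutions.
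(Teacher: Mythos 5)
Your route is the same as the paper's: the entire proof consists of the identity $\Delta\mathbf y^\top g(\tilde L)\Delta\mathbf y = R_0(\tilde g(I-\tilde\Pi))_{00}+R_1(\tilde g(I-\tilde\Pi))_{11}-2\sqrt{R_0R_1}(\tilde g(I-\tilde\Pi))_{01}=n\,\mathcal H_1(\tilde g(I-\tilde\Pi)|\mathcal G_n)$ together with $\|\Delta\mathbf y\|_2^2=n$, which gives $\mu_g(\delta)=\mathcal H_1(\tilde g(I-\tilde\Pi)|\mathcal G_n)/\sum_i g(\lambda_i)$, followed by substitution into Corollary \ref{co.upper bound} and the observation $Er(X,Y)=2Er(\mathbf x_0,\mathbf y_0)$. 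Your ``key computation'' is verbatim the computation in Appendix \ref{AP.proof upper bound}.

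The one substantive claim you add --- that replacing $(1+c(g,\delta))\mu_g(\delta)$ by $\mu_g(\delta)$ inside the logarithm ``preserves the lower-bound inequality'' --- has the monotonicity backwards. Since $c(g,\delta)\ge 0$, we have $\mu_g(\delta)\le(1+c(g,\delta))\mu_g(\delta)<1$, hence $\log\mu_g(\delta)\le\log\bigl((1+c(g,\delta))\mu_g(\delta)\bigr)<0$; because $\mathbf I(\delta)\ge 0$ and $m_g>0$, the correction term $\mathbf I(\delta)/\bigl(m_g\log(\cdot)\bigr)$ is negative and \emph{increases} toward zero when the argument of the logarithm decreases. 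The substitution therefore yields a \emph{stronger} lower bound than the one Corollary \ref{co.upper bound} delivers, so the implication runs the wrong way. The same issue affects the constant: summing over the two classes gives the term $2\mathbf I(\delta)/\bigl(m_g\log(\cdot)\bigr)$, which is more negative than the claimed $\mathbf I(\delta)/\bigl(2m_g\log(\cdot)\bigr)$ and cannot simply be ``absorbed.'' To be fair, the paper's own proof carries the identical blemish: its exact computation yields $(1+c(g,\delta))\mu_g(\delta)=\mathcal H_1(\tilde g(I-\tilde\Pi)|\mathcal G_n)/\sum_{i\in\mathcal I_{g,\delta}}g(\lambda_i)$ and then silently writes $\sum_i g(\lambda_i)$ in the final display, which is an identity (rather than an inequality in the unfavorable direction) only under the extra assumption $\mathcal I_{g,\delta}=\mathcal I_g$, i.e.\ $\delta_i\ne 0$ for every $i\in\mathcal I_g$ --- an assumption the appendix only makes explicit later, in the proof of Theorem \ref{thm.filter bank1}. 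So: same approach, correct central identity, but the step you yourself flag as delicate is exactly where neither your argument nor the paper's is airtight, and your proposed fix does not close it.
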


The proof of Theorem \ref{thm.upper bound 2} provided in Appendix \ref{append.proof} shows that for any function $g(\cdot)$ which is nonnegative on the closed interval $[0,2]$ , $\mathcal H_1(\tilde g(I-\tilde \Pi)|\mathcal G_n)\in[0,1]$. 

This theorem brings us an explanation of graph filters in terms of homophily: \textit{a graph filter is a hidden structure-adjustment mechanism} which transforms graph structure to $g(I-\tilde A)$. It obtains poor prediction performance when it \textbf{fails to make the high homophily degree $\mathcal H_1(\tilde g(I-\tilde \Pi)|\mathcal G_n)$ of transformed graph high enough.} From this, we are able to glimpse the prediction capacity of filters on a given graph through the modified homophily degree.

\subsection{Applied to Specific Graph Filters}\label{subsec.typical filters}

In this section, we apply the above observations to specific graph filters and provide practical strategies to enhance \textrm{GNNs} performance. Here, we consider typical filters including first/second-order low/high-pass filters and investigate their advantage/disadvantage on different graphs by comparing their $\mathcal H_1(\tilde g(I-\tilde \Pi)|\mathcal G_n)$.

 \textbf{Notations.} We denote $\mathcal S^1_g=\{g(\tilde L)=\epsilon_1I+\epsilon_2\tilde L|\epsilon_2\ne 0, g([0,2])\in[0,1],\sum_ig(\lambda_i)>1\}$ as a family of first-order graph filters and $\mathcal S^2_g=\{g(\tilde L)=\epsilon'_1I+\epsilon'_2\tilde L+\epsilon'_3\tilde L^2|\epsilon'_3<0, g([0,2])\in[0,1],\sum_ig(\lambda_i)>1\}$ as the family of second-order graph filters. We say $g_1$ with $\epsilon_2<0$ is a low-pass filter and $g_1$ with $\epsilon_2>0$ is a high-pass filter. Similarly, since $g_2(\tilde L)=\epsilon'_3(\frac {\epsilon'_2}{2\epsilon'_3}I+\tilde L)^2+(\epsilon_1'-\frac {\epsilon_2'^2}{4\epsilon_3'})I$, we say $g_2$ with $-\frac {\epsilon'_2}{2\epsilon'_3}<1$, i.e., $\epsilon'_2+2\epsilon'_3<0$ is a low-pass filter and $g_2$ with $\epsilon'_2+2\epsilon'_3>0$ is a high-pass filter.

\begin{theorem}[Low/high-pass]\label{thm.low/high fitler}
Given a graph $\mathcal G_n$ with interaction probability $\tilde \Pi$, let $g_1=\arg\max_{g\in\mathcal{S}_g^1}\mathcal H_1(\tilde g(I-\tilde \Pi)|\mathcal G_n)$ and $g_2=\arg\max_{g\in\mathcal{S}_g^2}\mathcal H_1(\tilde g(I-\tilde \Pi)|\mathcal G_n)$, 
\begin{itemize}
    \item when $\mathcal H_1(\tilde \Pi|\mathcal G_n)>0$, $g_1, g_2$ must be low-pass,
    \item when $\mathcal H_1(\tilde \Pi|\mathcal G_n)<0$, $g_1, g_2$ must be high-pass.
\end{itemize}
Moreover, denote $A=\frac {(\sqrt{R_0}-\sqrt{R_1})^2}n$, we have
\begin{align*}
\mathcal H_1(\tilde g_1(I-\tilde \Pi)|\mathcal G_n)&=\frac A2\int_{0}^2g_1d\lambda+|\epsilon_2\mathcal H_1(\tilde \Pi|\mathcal G_n)|,\\
\mathcal H_1(\tilde g_2(I-\tilde \Pi)|\mathcal G_n)&=\frac A2 \int_{0}^2g_2d\lambda+\epsilon'_3(\mathcal H_2(\tilde \Pi|\mathcal G_n)-\frac A3)+|(\epsilon'_2+2\epsilon'_3)\mathcal H_1(\tilde \Pi|\mathcal G_n)|.
\end{align*}
\end{theorem}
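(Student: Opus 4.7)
My plan is to reduce both the low-/high-pass dichotomies and the closed-form expressions to a low-dimensional constrained optimization over the filter coefficients. The key observation is that the functional $M\mapsto\mathcal{H}_1(M|\mathcal{G}_n)=\tfrac{1}{n}(R_0 M_{00}+R_1 M_{11}-2\sqrt{R_0 R_1}\,M_{01})$ is linear in $M$, and that the projection $M\mapsto R^{-1/2}Y^\top M Y R^{-1/2}$ is linear as well and sends the $n\times n$ identity to the $K\times K$ identity (because $Y^\top Y=R$) and $\tilde A^k$ to the symmetric $k$-step interaction matrix $\tilde\Pi^k$ of Definition 4.2 (which, per Remark 4.3, is \emph{not} $(\tilde\Pi)^k$). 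Together these turn $\mathcal{H}_1(\tilde g(I-\tilde\Pi)|\mathcal{G}_n)$ into an affine expression in the filter coefficients whose scalar part involves only $1$, $h_1:=\mathcal{H}_1(\tilde\Pi|\mathcal{G}_n)$, and $h_2:=\mathcal{H}_2(\tilde\Pi|\mathcal{G}_n)$.

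\textbf{First-order case.} Expanding $g_1(\tilde L)=\epsilon_1 I+\epsilon_2(I-\tilde A)$ and projecting, one obtains $\tilde g_1(I-\tilde\Pi)=(\epsilon_1+\epsilon_2)I-\epsilon_2\tilde\Pi$, and hence
\[
\mathcal{H}_1(\tilde g_1(I-\tilde\Pi)|\mathcal{G}_n)=(\epsilon_1+\epsilon_2)-\epsilon_2\, h_1.
\]
The constraint $g_1([0,2])\subseteq[0,1]$ cuts out the parallelogram $\epsilon_1\in[0,1]$, $\epsilon_1+2\epsilon_2\in[0,1]$; since the objective is linear, its maximum is attained at a vertex. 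The two nontrivial ($\epsilon_2\neq 0$) vertices are $(1,-\tfrac12)$ (low-pass) and $(0,\tfrac12)$ (high-pass), with values $\tfrac12(1+h_1)$ and $\tfrac12(1-h_1)$ respectively. Comparing these directly gives the first dichotomy, and the closed form follows by rewriting the vertex-dependent trace piece uniformly through the identity $\tfrac12\int_0^2 g_1\,d\lambda=\epsilon_1+\epsilon_2$ and using $-\epsilon_2 h_1=|\epsilon_2 h_1|$ at the optimum.

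\textbf{Second-order case.} For $g_2$ I would expand $\tilde L^2=I-2\tilde A+\tilde A^2$, project, and obtain
\[
\tilde g_2(I-\tilde\Pi)=(\epsilon_1'+\epsilon_2'+\epsilon_3')I-(\epsilon_2'+2\epsilon_3')\tilde\Pi+\epsilon_3'\tilde\Pi^2,
\]
so that by linearity
\[
\mathcal{H}_1(\tilde g_2(I-\tilde\Pi)|\mathcal{G}_n)=(\epsilon_1'+\epsilon_2'+\epsilon_3')-(\epsilon_2'+2\epsilon_3')h_1+\epsilon_3' h_2.
\]
Since $\epsilon_3'<0$ is fixed, the low-/high-pass nature of $g_2$ is governed purely by the sign of $\epsilon_2'+2\epsilon_3'$ (by the paper's own criterion); maximizing the $-(\epsilon_2'+2\epsilon_3')h_1$ contribution forces this sign to be opposite that of $h_1$, which is exactly the second dichotomy. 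The closed form emerges by replacing the trace piece $(\epsilon_1'+\epsilon_2'+\epsilon_3')$ by $\tfrac12\int_0^2 g_2\,d\lambda=\epsilon_1'+\epsilon_2'+\tfrac43\epsilon_3'$ plus an $\epsilon_3'$-proportional correction that combines with $\epsilon_3' h_2$ to form the announced $\epsilon_3'(h_2-A/3)$ summand.

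\textbf{Main obstacle.} I expect the hardest step to be the quadratic-case optimization: the constraint $g_2([0,2])\subseteq[0,1]$ with $\epsilon_3'<0$ gives a non-polytopal feasible region (depending on whether the parabola's peak $-\epsilon_2'/(2\epsilon_3')$ lies inside $[0,2]$), so identifying the extremal facet in each sign regime of $h_1$ and matching the resulting algebra precisely to $\tfrac{A}{2}\int_0^2 g_2\,d\lambda+\epsilon_3'(\mathcal{H}_2(\tilde\Pi|\mathcal{G}_n)-A/3)+|(\epsilon_2'+2\epsilon_3')h_1|$ with $A=(\sqrt{R_0}-\sqrt{R_1})^2/n$ will require a careful regime-by-regime bookkeeping and is the step on which the precise closed form rests.
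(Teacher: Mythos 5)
Your reduction to a linear functional of the filter coefficients is the right starting point and matches the paper's computation in spirit, but the two load-bearing steps both have problems. First, the dichotomy. You argue that the linear objective is maximized at a vertex of the coefficient polytope and then compare only the two ``nontrivial'' vertices $(1,-\tfrac12)$ and $(0,\tfrac12)$. But with your own expansion $\mathcal H_1(\tilde g_1(I-\tilde\Pi)|\mathcal G_n)=\epsilon_1+\epsilon_2(1-h_1)$, the maximum over the closed polytope sits at the excluded all-pass vertex $(\epsilon_1,\epsilon_2)=(1,0)$: the supremum over $\mathcal S_g^1$ is $1$, approached by filters with $\epsilon_2\to 0$ of \emph{either} sign, and e.g.\ for $h_1=0.1$ the high-pass filter $(\epsilon_1,\epsilon_2)=(0.9,0.05)$ strictly beats your low-pass vertex $(1,-\tfrac12)$. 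So comparing the two extreme-slope vertices neither locates the argmax nor shows that every high-pass filter is dominated by a low-pass one. The paper's argument avoids this entirely: it pairs each filter $g$ with its reflection $g'(\lambda)=g(2-\lambda)$ (for the first-order case, $\epsilon_1 I+|\epsilon_2|\tilde L\mapsto(\epsilon_1+2|\epsilon_2|)I-|\epsilon_2|\tilde L$), which stays in the same class, preserves $\int_0^2 g\,d\lambda$ and the range constraint, and flips the sign of the coefficient multiplying $\mathcal H_1(\tilde\Pi|\mathcal G_n)$, hence is strictly better by $2|\epsilon_2|\,|h_1|$. That pairing is the missing idea; it also disposes of the second-order case by the identical reflection $(\epsilon_1'+2\epsilon_2'+4\epsilon_3')I-(\epsilon_2'+4\epsilon_3')\tilde L+\epsilon_3'\tilde L^2$, so the ``regime-by-regime'' optimization over the non-polytopal quadratic feasible set that you flag as the main obstacle never arises in the paper's route.

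Second, the closed forms. Your projection gives $\tilde g_1(I-\tilde\Pi)=(\epsilon_1+\epsilon_2)I_K-\epsilon_2\tilde\Pi$ and hence $\mathcal H_1(\tilde g_1(I-\tilde\Pi)|\mathcal G_n)=(\epsilon_1+\epsilon_2)-\epsilon_2 h_1$, i.e.\ coefficient $1$ on the trace term, because $R^{-1/2}Y^\top Y R^{-1/2}=I_K$ has zero off-diagonal and so contributes $\tfrac{R_0+R_1}{n}=1$ rather than $A=\tfrac{(\sqrt{R_0}-\sqrt{R_1})^2}{n}$. The stated formula (and the paper's proof) carries the factor $A$ on the $\tfrac12\int_0^2 g\,d\lambda$ term, obtained by keeping the $-\tfrac{2\sqrt{R_0R_1}}{n}$ cross-term on the identity component. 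You cannot simply assert that ``the closed form follows'': your derivation lands on $\tfrac12\int_0^2 g_1\,d\lambda+|\epsilon_2 h_1|$, not $\tfrac A2\int_0^2 g_1\,d\lambda+|\epsilon_2 h_1|$, and likewise in the second-order case your correction term is $\epsilon_3'(h_2-\tfrac13)$ rather than $\epsilon_3'(h_2-\tfrac A3)$. You must either exhibit where the factor $A$ comes from under the paper's conventions or explicitly record that your expansion yields a different constant; as written the proposal silently proves a formula other than the one in the statement.
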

This theorem rigorously validates our inference of Proposition \ref{The.moment}: \textbf{low-pass filters
are superior to high-pass filters on homophilic graphs.} Compared with first-order filters, second-order filters involve 2-homophily degrees so that they can identify more different graphs. Actually, we show that second-order filters are easier to have better performance than first-order filters.

\begin{theorem}[First/second-order]\label{thm.first/second fitler}
Given a graph $\mathcal{G}_n$ with interaction probability $\tilde \Pi$ and $a=\frac {2(\sqrt{R_0}-\sqrt{R_1})^2}{3n}$, for $ g_1\in \mathcal S_g^1$, there exists a $g_2\in \mathcal S_g^2$ such that
\begin{enumerate}
    \item when $\mathcal H_2(\tilde \Pi|\mathcal G_n)\in[-1,a]$, $\mathcal H_1(\tilde g_2(I-\tilde \Pi)|\mathcal G_n)>\mathcal H_1(\tilde g_1(I-\tilde \Pi)|\mathcal G_n)$ ; 
    \item when $\mathcal H_2(\tilde \Pi|\mathcal G_n)\in[a,1]$, $|\mathcal H_1(\tilde g_2(I-\tilde \Pi)|\mathcal G_n)-\mathcal H_1(\tilde g_1(I-\tilde \Pi)|\mathcal G_n)|\le -\frac {\epsilon'_3}3\mathcal H_2(\tilde \Pi|\mathcal G_n)$. 
\end{enumerate}
\end{theorem}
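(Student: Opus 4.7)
The plan is to construct $g_2 \in \mathcal S_g^2$ as an additive quadratic perturbation of $g_1$,
\begin{equation*}
g_2(\lambda) \;=\; g_1(\lambda) + \epsilon'_3\bigl[(\lambda-1)^2 + c\bigr],
\end{equation*}
where $\epsilon'_3 < 0$ has small magnitude and the constant $c$ is chosen per case. The leading coefficient of $g_2$ is $\epsilon'_3 < 0$ as required, the range constraint $g_2([0,2]) \subseteq [0,1]$ is preserved when $|\epsilon'_3|$ is smaller than the slack of $g_1$ at the three extremal points $\lambda \in \{0,1,2\}$ of the perturbation, and $\sum_i g_2(\lambda_i) > 1$ is inherited from $g_1$ for small $|\epsilon'_3|$.

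The key algebraic simplification is that $\tilde L - I = -\tilde A$, so $(\tilde L - I)^2 + cI = \tilde A^2 + cI$ as operators, and sandwiching by $R^{-1/2} Y^\top \cdot Y R^{-1/2}$ produces $\tilde \Pi^{2} + cI$, with $\tilde \Pi^{2}$ the $2$-step symmetric interaction probability of Definition \ref{def.symmetric interaction}. Since $\mathcal H_1(\cdot | \mathcal G_n)$ is a linear functional of its $2 \times 2$ matrix argument, satisfies $\mathcal H_1(\tilde \Pi^{2} | \mathcal G_n) = \mathcal H_2(\tilde \Pi | \mathcal G_n)$ directly by definition, and satisfies $\mathcal H_1(cI | \mathcal G_n) = c \cdot (R_0 + R_1)/n = c$, one obtains the clean identity
\begin{equation*}
\mathcal H_1(\tilde g_2(I - \tilde \Pi) | \mathcal G_n) - \mathcal H_1(\tilde g_1(I - \tilde \Pi) | \mathcal G_n) \;=\; \epsilon'_3 \bigl[\mathcal H_2(\tilde \Pi | \mathcal G_n) + c\bigr].
\end{equation*}

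With this identity the two conclusions reduce to a choice of $c$. For case~1 with $\mathcal H_2 \in [-1, a]$, take $c$ strictly less than $-a$, say $c = -a - \eta$ for small $\eta > 0$; then $\mathcal H_2 + c \leq -\eta < 0$ and multiplication by $\epsilon'_3 < 0$ yields a strictly positive difference. For case~2 with $\mathcal H_2 \in [a, 1]$, pick $c \in \bigl[-\tfrac{4\mathcal H_2}{3},\, -\tfrac{2\mathcal H_2}{3}\bigr]$ (the canonical choice $c = -\mathcal H_2$ makes the difference vanish identically); then $|\mathcal H_2 + c| \leq \mathcal H_2/3$, so the absolute difference is bounded by $|\epsilon'_3|\,\mathcal H_2/3 = -\tfrac{\epsilon'_3}{3}\mathcal H_2$, matching the bound in the theorem.

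The hard part is the feasibility check: each admissible $c$ must be compatible with $g_2 \in [0,1]$ on $[0,2]$ for some small $\epsilon'_3 < 0$. The perturbation $\epsilon'_3[(\lambda - 1)^2 + c]$ attains its extremes at $\lambda = 1$ (value $\epsilon'_3(1+c)$) and at $\lambda \in \{0, 2\}$ (value $\epsilon'_3 c$), and one must argue that the slack of $g_1$ at these three points accommodates some $|\epsilon'_3| > 0$ for every $c$ in the relevant interval; this is always doable because $g_1 \in \mathcal S_g^1$ forces $\epsilon_2 \neq 0$, ruling out saturation of $g_1$ against both bounds $0$ and $1$ simultaneously. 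A secondary subtlety is that the specific shape $(\lambda - 1)^2$ rather than a generic quadratic $\lambda^2 + q\lambda + r$ is forced by needing the cross term proportional to $\mathcal H_1(\tilde \Pi | \mathcal G_n)$ in the general expansion to cancel, which is what makes the identity above depend cleanly only on $\mathcal H_2$ and $c$.
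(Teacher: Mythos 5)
Your reduction to the identity $\mathcal H_1(\tilde g_2(I-\tilde \Pi)|\mathcal G_n)-\mathcal H_1(\tilde g_1(I-\tilde \Pi)|\mathcal G_n)=\epsilon'_3\bigl[\mathcal H_2(\tilde \Pi|\mathcal G_n)+c\bigr]$ is correct and is a genuinely different route from the paper's, which instead passes through the extremal filters $g_1'$, $g_2'$ of Theorem \ref{thm.low/high fitler} and compares the resulting closed-form expressions. The cancellation of the $\mathcal H_1(\tilde\Pi|\mathcal G_n)$ cross term by centering the quadratic at $\lambda=1$, the use of linearity of $\mathcal H_1(\cdot|\mathcal G_n)$, and the identities $\mathcal H_1(\tilde\Pi^{2}|\mathcal G_n)=\mathcal H_2(\tilde\Pi|\mathcal G_n)$ and $\mathcal H_1(cI|\mathcal G_n)=c$ all check out. (Minor slip: the perturbation equals $\epsilon'_3c$ at $\lambda=1$ and $\epsilon'_3(1+c)$ at $\lambda\in\{0,2\}$, not the other way around.)

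The genuine gap is the feasibility step for case~2. Your claim that $\epsilon_2\neq 0$ rules out saturation of $g_1$ against both bounds is false in the relevant sense: take $g_1(\lambda)=\lambda/2\in\mathcal S_g^1$, which attains $0$ at $\lambda=0$ and $1$ at $\lambda=2$. Your perturbation takes the \emph{same} value $\epsilon'_3(1+c)$ at both endpoints, so $g_2(0)\ge 0$ forces $\epsilon'_3(1+c)\ge 0$ while $g_2(2)\le 1$ forces $\epsilon'_3(1+c)\le 0$; hence $c=-1$ is the only feasible choice for any $\epsilon'_3<0$. But case~2 needs $|\mathcal H_2(\tilde\Pi|\mathcal G_n)+c|\le \tfrac13\mathcal H_2(\tilde\Pi|\mathcal G_n)$, i.e.\ $c\in[-\tfrac43\mathcal H_2,-\tfrac23\mathcal H_2]$, which contains $-1$ only when $\mathcal H_2(\tilde\Pi|\mathcal G_n)\ge \tfrac34$; since $a\le\tfrac23$, the window $\mathcal H_2(\tilde\Pi|\mathcal G_n)\in[a,\tfrac34)$ is nonempty and there your construction cannot achieve the stated bound. (Case~1 survives because $c=-1<-a$ is admissible and the perturbation with $c=-1$ is nonnegative on $[0,2]$, so only the upper bound needs a small $|\epsilon'_3|$.) To close case~2 you need either a comparison that is not pinned to the given $g_1$ — the paper's device of passing to the normalized extremal filters with $|\epsilon_2|=\tfrac12$ and $|\epsilon'_2+2\epsilon'_3|=\tfrac12$, which absorbs the slack problem — or a perturbation whose endpoint values may differ, which reintroduces an $\mathcal H_1(\tilde\Pi|\mathcal G_n)$ cross term that you would then have to control.
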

It shows the superiority of second-order filters on most graphs apart from those with a high 2-homophily degree and gives a guarantee to narrow the disadvantage. There is an intuitive hyperthesis to be inferred from this theorem: \textbf{high-order filters have an advantage over low order filters in most cases.} However, high-order filters would bring high computation costs. In practice, to enhance the \textrm{GNNs} performance, filter banks are empirically used in previous works. Below, we provide a \textbf{rigorous demonstration of the advantage of filter banks}.

\begin{theorem}[Filter bank]\label{thm.filter bank1}
Given a graph $\mathcal G_n$ with normalized Laplacian matrix $\tilde L$ with eigenvalues $\{\lambda_i\}$, $\forall g_1(\cdot)$ in $\mathcal S_g^1$ , there exists a single filter $g_2(\cdot)\in\mathcal S_g^1$ and positive $l_1$, $l_2$ s.t. for $g=l_1g_1+l_2g_2$ we have
$$
\frac {m_{g_1}}{m_g}\log\frac {\mathcal H_1(\tilde g(I-\tilde \Pi)|\mathcal G_n)}{\sum_ig(\lambda_i)}> \log\frac {\mathcal H_1(\tilde g_1(I-\tilde \Pi)|\mathcal G_n)}{\sum_ig_1(\lambda_i)},
$$
and $l_1+l_2=1$, here $m_{g}=\min_{i\in \mathcal I_{g}}g(\lambda_i)$. 
\end{theorem}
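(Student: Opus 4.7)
The plan rests on three linearity facts about first-order filters. Since $\tilde g(I-\tilde \Pi) = R^{-\frac12} Y^\top g(\tilde L) Y R^{-\frac12}$ is linear in $g$, both $\mathcal H_1(\tilde g(I-\tilde \Pi)|\mathcal G_n)$ and $\sum_i g(\lambda_i) = \operatorname{Tr}(g(\tilde L))$ are linear functionals of the filter. For $g = \epsilon_1 I + \epsilon_2 \tilde L$ with $\epsilon_2 < 0$ (low-pass), the minimum $m_g = \min_{i\in\mathcal I_g} g(\lambda_i)$ is attained at $\lambda_{\max}$ and equals $\epsilon_1 + \epsilon_2\lambda_{\max}$, also linear in the coefficients; the high-pass case is symmetric. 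Hence, if $g_2$ is chosen of the same pass-type as $g_1$, then for $g = l_1 g_1 + l_2 g_2$ with $l_1+l_2 = 1$ we have $h := \mathcal H_1(\tilde g(I-\tilde\Pi)|\mathcal G_n) = l_1 h_1 + l_2 h_2$, $s := \sum_i g(\lambda_i) = l_1 s_1 + l_2 s_2$, and $m_g = l_1 m_{g_1} + l_2 m_{g_2}$, with the obvious notation for $g_1,g_2$.

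With these linearities, the map $l_1 \mapsto h/s$ is a M\"obius function whose derivative equals $(s_2 h_1 - s_1 h_2)/s^2$, which has constant sign, so $h/s$ is strictly monotonic in $l_1$ and lies strictly between $h_1/s_1$ and $h_2/s_2$. The strategy is therefore to pick $g_2 \in \mathcal S_g^1$ of the same pass-type as $g_1$ such that (a) $h_2/s_2 > h_1/s_1$ and (b) $m_{g_2} \ge m_{g_1}$; the combined $g$ then automatically satisfies $h/s > h_1/s_1$ and $m_g \ge m_{g_1}$ by convex combination. To exhibit such a $g_2$, I use the explicit formulas $h(\epsilon_1,\epsilon_2) = (\epsilon_1+\epsilon_2) - \epsilon_2 \mathcal H_1(\tilde\Pi|\mathcal G_n)$ and $s(\epsilon_1,\epsilon_2) = \epsilon_1 n + \epsilon_2 \operatorname{Tr}(\tilde L)$ derived from the identity $\tilde g(I-\tilde\Pi) = (\epsilon_1+\epsilon_2)I - \epsilon_2\tilde\Pi$, then perturb $(\epsilon_1,\epsilon_2) \to (\epsilon_1+\delta_1, \epsilon_2+\delta_2)$ in a direction that strictly increases the ratio $h/s$ while satisfying $\delta_1 + 2\delta_2 \ge 0$ (the condition $m_{g_2}\ge m_{g_1}$ in the low-pass case). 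A two-dimensional linear-programming argument shows such a direction exists whenever the gradient of the ratio and the normal $(1,2)$ of the $m$-constraint are not anti-parallel.

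Finally, since $h\in[0,1]$ (by the remark following Definition \ref{def.homophily}) and $s > 1$ (by definition of $\mathcal S_g^1$), both $h/s$ and $h_1/s_1$ lie in $(0,1)$, so both logs are negative. From $h/s > h_1/s_1$, monotonicity of $\log$ yields $\log(h/s) > \log(h_1/s_1)$, and from $m_g \ge m_{g_1}$ we have $0 < m_{g_1}/m_g \le 1$; multiplying the negative value $\log(h/s)$ by a factor in $(0,1]$ only moves it toward zero, so $(m_{g_1}/m_g)\log(h/s) \ge \log(h/s) > \log(h_1/s_1)$, which is precisely the claimed strict inequality.

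The main obstacle I anticipate lies in the construction step: verifying that such a perturbation direction actually exists for every $g_1 \in \mathcal S_g^1$, especially in the degenerate configuration where the ratio gradient is anti-parallel to $(1,2)$ — a measure-zero locus — and in boundary cases where $g_1$ already saturates one of the admissibility constraints $g([0,2]) \subset [0,1]$ or $\sum_i g(\lambda_i) > 1$, so that the infinitesimal perturbation cannot be realized. Handling these cases will require a non-infinitesimal choice of $g_2$, for instance taking $g_2$ to be the optimal first-order filter identified in Theorem \ref{thm.low/high fitler}, and checking case-by-case that its ratio and minimum dominate those of $g_1$; this auxiliary casework is where I expect most of the technical effort to concentrate.
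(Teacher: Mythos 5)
Your architecture is genuinely different from the paper's and, where you carry it out, correct. You decouple the problem into (a) strictly improving the ratio $h/s=\mathcal H_1(\tilde g(I-\tilde\Pi)|\mathcal G_n)\big/\sum_ig(\lambda_i)$ and (b) not decreasing $m_g$, using linearity of $h$, $s$ and (within a fixed pass-type) of $m$ under convex combination, the mediant/monotonicity property of the linear-fractional map $l_1\mapsto h/s$, and the signs of the logarithms. The paper instead works with the single combined functional $H(\epsilon_1,\epsilon_2)=m_g\log(h/s)$, computes its partial derivatives, and shows by case analysis (sign of $\epsilon_2$; $C(\tilde\Pi)$ versus $\bar\lambda$; $A=0$ or not) that $H$ is strictly monotone in one coordinate, which is what produces $g_2$. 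Your closing chain $(m_{g_1}/m_g)\log(h/s)\ge\log(h/s)>\log(h_1/s_1)$ is valid given your two conditions (the positivity $h>0$ that makes the logarithm well defined is Lemma~\ref{lm.filter bank}, not the remark after Definition~\ref{def.homophily}, which only gives the range of $\mathcal H_k(\tilde\Pi|\cdot)$).

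The gap is that the existence of $g_2$ --- which is the entire content of the theorem and where the paper spends all of its effort --- is only sketched, and your sufficient conditions are strictly stronger than needed and are not always satisfiable. You require a same-pass-type $g_2$ with $h_2/s_2>h_1/s_1$ \emph{strictly} and $m_{g_2}\ge m_{g_1}$. But $h/s=\frac{(\epsilon_1+\epsilon_2C(\tilde\Pi))A}{(\epsilon_1+\epsilon_2\bar\lambda)n^2}$ is linear-fractional on the admissible polytope, so there are graphs on which it is \emph{constant} over an entire pass-type (exactly the paper's case $C(\tilde\Pi)=\bar\lambda$, where $h/s\equiv A/n^2$), and there are filters $g_1$ sitting at a vertex where the ratio is maximal among admissible same-pass-type filters; in either configuration condition (a) cannot be met and your chain collapses. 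The statement can be rescued there by a different mechanism --- hold the ratio fixed and take $m_{g_2}>m_{g_1}$ strictly, so that $(m_{g_1}/m_g)\log(h_1/s_1)>\log(h_1/s_1)$ --- but this requires its own boundary analysis of the constraints $g([0,2])\subseteq[0,1]$ and is precisely the casework you defer; the paper's combined functional $H$ sidesteps this by allowing $m$ and the ratio to be traded against each other (indeed, in its case $C(\tilde\Pi)=\bar\lambda$ it \emph{decreases} $\epsilon_1$, hence $m_g$), a flexibility your decoupling forgoes. As it stands, the decisive existence argument is missing rather than merely routine.
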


This theorem can be generalized to the family of second-order graph filters $\mathcal S^2_g$. 

\subsection{A Strategy for Filter Design}\label{subsec.filter design}
According to our theoretical framework and conclusion on prediction error of a specific label, a graph filter that has a high response efficiency on this label difference or which can strengthen the internal connection of this class is the key to the success of \textrm{GNNs}. Considering a $K$-class classification problem, as we showed in Sect.\ref{sec.prediction error}, we will obtain $K$ related binary classification problems with $K$ label differences. When these label differences have a high diversity of frequency distributions, a low order single filter is prone to fail since it can barely have high response efficiencies on all of them. In these cases, it is necessary to leverage a filter bank to handle different frequency distributions.

\textbf{How to design a powerful filter bank efficiently?}
Theoretically, piling up sufficient numbers of graph filters to capture all the frequency components can improve prediction performance while it is very expensive. On the other hand, following our theoretical results, it is not difficult to design an ideal filter bank with the frequency distributions of label differences or graph interaction probability on hand. However, such graph indicators are usually unknown. Nevertheless, node features are accessible in attribute graphs and provide valid information for classification. As illustrated in Theorem \ref{thm.upper bound 1}, an ideal filter should have a high response frequency on both input difference and label difference. Here we assume that for each label there is some information provided by (partial) features useful for identification. Also, target frequency components and features may differ for different labels. When we classify labels with different target components having the same frequency, we need to separate objects with different target components before filtering. Otherwise, it is likely to bring noise and hurt the performance. Therefore, feature disentanglement is necessary. On the other hand, since a filter bank is applied to features to capture valid information, filter design should be targeted at specific graphs and features.

\section{Model and Empirical Study}\label{Sec.model}
As already emphasized, one practical strategy for improving prediction performance is to learn a filter bank in a data-driven manner. In this section, to verify this theoretical strategy, we propose a simple framework - disentangled multi-filter framework (\textrm{DEMUF})

\begin{figure*}[h!]
\centering 
\vskip 0.2in
\includegraphics[width=0.85\textwidth]{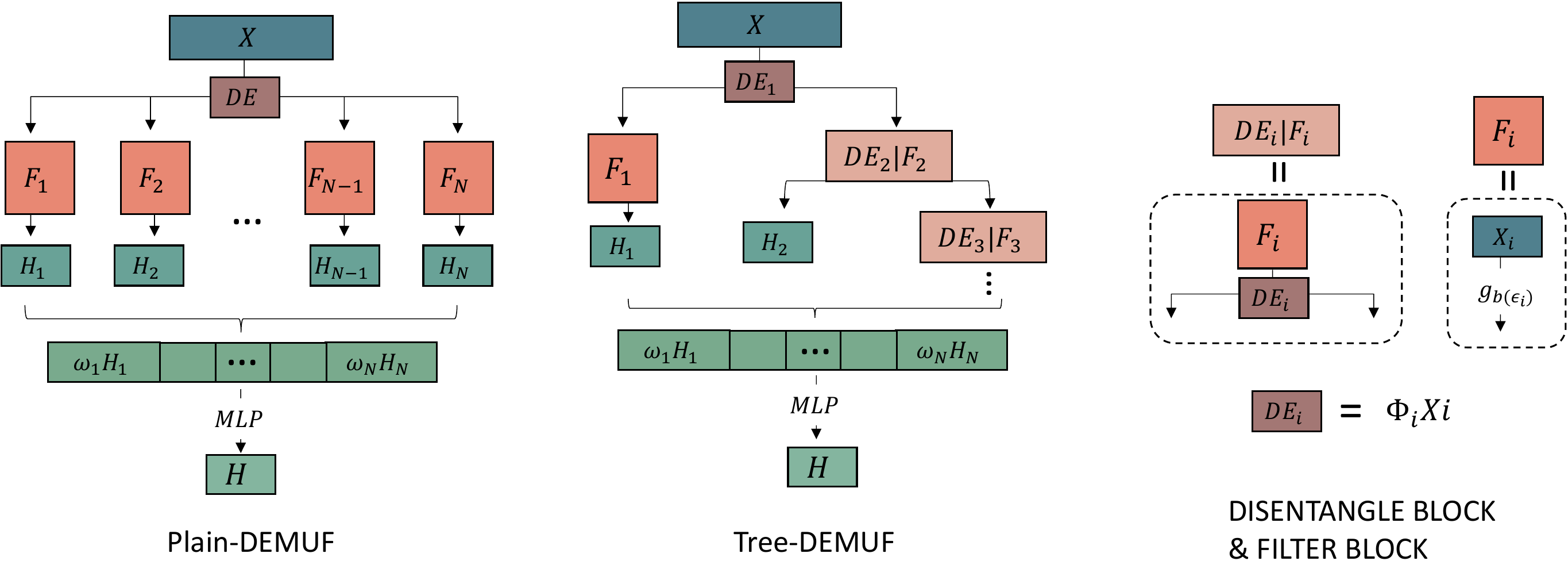}
\caption{Illustration of \textrm{Plain-DEMUF} and \textrm{Tree-DEMUF}. There are two main model blocks of \textrm{DEMUF} frameworks: \textit{disentangle block} and \textit{filter block}. In \textrm{Plain-DEMUF}, all filter blocks run in parallel as their disentangled input are generated through a single disentangle block at the same time. Differently, each \textrm{Tree-DEMUF} layer contains two branches - one is early stopped while the other will be disentangled into two branches of the next layer after going through a filter.} 
\label{Fig.model}
\vskip -0.2in
\end{figure*}

\subsection{Architecture of Two Frameworks of \textrm{DEMUF}} 
Following the conclusions in Sect.\ref{subsec.filter design}, i.e., different features should be assigned to different filters, we assemble \textbf{feature disentanglement} and \textbf{frequency filtering} blocks to our framework. The block of feature disentanglement is to divide features into different families in a learnable way. Then in the frequency filtering block, learnable graph filters are applied to targeted families of features. We provide two frameworks with different structures: \textrm{Plain-DEMUF} and \textrm{Tree-DEMUF} (depicted in Fig.~\ref{Fig.model}). 

The \textsc{Disentangle} block and \textsc{Filter} block are formulated as follows:
\begin{align*}\label{Block}
&X_k = \textrm{DISENTANGLE}(X,\Phi_k)=\Phi_k(X),\\
&H_k = \textrm{FILTER}\Big(X_k,\epsilon_k,h_k\Big)=(g_{\epsilon_k}(\tilde L))^{h_k}X_k.
\end{align*}
In our implementation, we provide two samples of \textsc{Disentangle} functions $\Phi_k$: one is linear transformations, the other is \textsc{gumbel\_softmax}~\citep{jang2016categorical} used to generate learnable masks for feature selection. In terms of the \textsc{Filter} block, we use the normalized second order filter $g_\epsilon(\tilde L)=I-\frac {((1-\epsilon)I-\tilde L)^2}{(1+|\epsilon|)^2}$ with learnable parameter $\epsilon\in(-1,1)$. In each \textsc{Filter} block, $h$ is the number of layers. 
The framework of \textrm{Plain-DEMUF} with $N$ filters is:
\begin{align*}
&H_{k}=\textrm{FILTER}\Big(\textrm{DISENTANGLE}\Big(X,\Phi_k\Big),\epsilon_k,h_k\Big),\\
&H = \textrm{MLP}\Big(\textrm{CONCAT}\Big(\Big\{H_k,\omega_k\Big|k=1,\dots,N\Big\}\Big)\Big).
\end{align*}
Based on this, we implement a simple model called \textrm{P-DEMUF}. 
Precisely, we leverage a \textsc{gumbel\_softmax} to generate $N$ learnable masks $\{M_1,\dots,M_{N}\}$ for feature sampling at once followed by different \textrm{MLP}. That is, $\Phi_k(X)=\text{MLP}_k(X\odot M_k)$. 

Similarly, we develop a model, \textrm{T-DEMUF}, under the framework of \textrm{Tree-DEMUF} formulated by:
\begin{align*}
&X_1 =\textrm{FILTER}\Big(\textrm{DISENTANGLE}\Big(X,\Psi_1\Big),\epsilon_1,h_1\Big),\\
&H_1= \textrm{FILTER}\Big(\textrm{DISENTANGLE}\Big(X,\Phi_1\Big),\epsilon,h\Big),\\
&H_{k+1}=\textrm{DISENTANGLE}\Big(X_k,\Phi_k\Big),\\
&X_{k+1} =\textrm{FILTER}\Big(\textrm{DISENTANGLE}\Big(X_k,\Psi_k\Big),\epsilon_k,h_k\Big),\\
&H = \textrm{MLP}\Big(\textrm{CONCAT}\Big(\Big\{\omega_kH_k,k=1,\dots,N\Big\}\Big)\Big).
\end{align*}
In each \textrm{T-DEMUF} layer, we use \textsc{gumbel\_softmax} with different parameters to generate two masks $M_k$ and $M'_k$ and $\Phi_k(X_k)=X_k\odot M_k$ and $\Psi_k(X_k)=X_k\odot M'_k$. In each layer, we stop further disentangling of the branch of $H_k$ by utilizing an additional constraint 
\[\mathcal{L}(X_{k-1},H_k) = \parallel X_{k-1}\odot M'_k-H_k\parallel_2^2.\] Noting that $H_k=(g_{\epsilon_k}(\tilde
L))^{h_k}X_{k-1}\odot M'_k$, this constraint is to make $(g_{\epsilon_k}(\tilde L))^{h_k}$ has high response efficiency on $H_k$.

\textbf{Model discussion.} 
Compared with filter-bank learning methods which directly apply an array of filters to features, our models use subsets of features. It can greatly reduce the amount of computation and parameters and help learning filters more effectively. \textrm{T-DEMUF} uses an additional constraint to guide the filter learning process while filters in \textrm{P-DEMUF} do not interfere with each other. We provide further discussion and ablation study in Appendix \ref{appendix}. 

\subsection{Experiments}
To validate \textrm{DEMUF}, we compare the performances of \textrm{P-DEMUF} and \textrm{T-DEMUF} with that of spectral \textrm{GNNs}, spatial \textrm{GNNs} and \textrm{MLP} on extensive datasets.

\subsubsection{Experiment Settings}

\textbf{Datasets.} We use various types of real datasets including two graphs of \textit{Citation network}~\citep{sen2008collective} - \textrm{Cora} and \textrm{Citeseer}; three subgraphs of \textit{WebKB}~\citep{pei2020geom} - \textrm{Cornell}, \textrm{Texas}, and \textrm{Wisconsin}; two \textit{Wikipedia network} - \textrm{Chameleon} and \textrm{Squirrel}~\citep{rozemberczki2021multi}; two \textit{relabeled Wikipedia network} proposed by \citet{bo2021beyond} - \textrm{Chameleon2} and \textrm{Squirrel2}; and one graph of \textit{Actor co-occurrence network} - \textrm{Actor}~\citep{tang2009social}, to validate our proposed models. More statistics of datasets and the experimental setup can be found in Appendix \ref{appendix}.

\textbf{Baselines.} We compare our models with four spectral \textrm{GNNs}: \textrm{GCN}~\citep{kipf2016semi}, \textrm{ChebNet}~\citep{defferrard2016convolutional}, \textrm{GIN}~\citep{xu2018powerful}, \textrm{ARMA}~\citep{bianchi2021graph}. We list their spectral filter forms in Appendix \ref{appendix}. 
In addition, we also add four spatial \textrm{GNNs}: \textrm{GAT}~\citep{velivckovic2017graph}, \textrm{FAGCN}~\citep{bo2021beyond}, \textrm{Geom\_GCN}~\citep{pei2020geom} and \textrm{GPRGNN}~\citep{chien2020adaptive}. Both \textrm{GAT} and \textrm{FAGCN} utilize attention mechanism, \textrm{Geom\_GCN} is a novel aggregation method based on the geometry of graph, \textrm{GPRGNN} can handle both homophilic and heterophilic graphs through learning GPR weights. Finally, we also compare with \textrm{MLP} which is an all-pass filter.

\begin{table*}[ht]
\small
\addtolength{\tabcolsep}{-2.5pt}
\caption{\textbf{Node classification accuracy.} The first row is the homophily degree. }
\vskip 0.15in
\begin{center}
\begin{tabular}{llcccccccccc}
\hline
      &           & Cora & Cite. & Cornell        & Texas          & Wisc.      & Cham.      & Squi.       & Cham.2 & Squi.2  & Actor           \\
      & $\mathcal H_1(\tilde\Pi|\mathcal G_n)$ &   0.637 &    0.586   &   0.103  &0.072 & 0.018    & -0.116  &   -0.279   &   0.127  &   0.044  &   -0.215
      \\
      \hline
\multirow{4}{*}{\rotatebox{90}{Spectral}}   & GCN       & \textrm{88.50}                     & 76.20                      & 69.02          & 66.07          & 58.50          & 64.11          & 47.60          & 75.49          & 70.87          & 32.89          \\
                          & Cheby     & 88.21                     & \textrm{76.26}                     & 81.97          & 82.79          & 82.50          & 63.61          & 50.93          & 77.68          & 69.13          & \textrm{37.23}          \\
                          & GIN       & 87.06                     & 74.10                      & 56.89          & 69.84          & 51.25          & 37.42          & 23.73          & 58.10          & 49.24          & 29.64          \\
                          & ARMA      & 87.56                     & 74.86                     & \textrm{85.25}          & \textrm{85.25}          & \textrm{92.63}          & \textbf{69.34} & \textbf{51.60} & \textbf{78.93} & \textbf{73.95} & 35.49          \\

\hline
\multirow{4}{*}{\rotatebox{90}{Spatial}}   & GAT       & 88.32                     & 76.85                     & 60.82          & 72.30           & 61.63          & 66.17          & 44.75          & 77.18          & 69.67          & 36.40           \\
                          & FAGCN     & \textbf{89.19}            & 77.15                     & 73.51          & 65.41          & 76.86          & 61.70           & 39.70           & 76.10           & 66.70           & 34.61          \\
                          & Gemo\_GCN & 85.27                     & \textbf{77.90}             & 60.81          & 67.57          & 64.12          & 60.90           & 38.14          & 73.20          & 63.30          & 31.63          \\
                          & GPRGNN    & /                         & /                         & \textbf{91.36} & \textbf{92.92} & /              & 67.48          & 49.93          & /              & /              & \textbf{39.30} \\

\hline   & \textrm{MLP}
                             & 75.33                     & 71.40                      & 92.46          & 92.46          & \textbf{95.00} & 49.56          & 34.89          & 77.28          & 63.19          & 37.38          \\
                            
\hline
\multirow{3}{*}{\rotatebox{90}{Ours}}       & T-DEMUF   & 86.72                     & 74.57                     & \textbf{92.97} & \textbf{92.79} & 93.21          & \textbf{72.03} & \textbf{59.09} & \textbf{83.31} & \textbf{74.92} & \textbf{41.11} \\
                          & P-DEMUF   & \textbf{87.85}            & \textbf{75.69}            & 91.60          & 92.04          & \textbf{94.38} & 71.47          & 57.58          & 82.40          & 74.54          & 39.18          \\
                          &v.s. Spectral& $\downarrow$ 0.65& $\downarrow$ 0.57& $\uparrow$  7.72         & $\uparrow$  7.54        & $\uparrow$ 1.75        & $\uparrow$  2.69         &$\uparrow$  7.49          & $\uparrow$  4.38         & $\uparrow$   0.97        & $\uparrow$ 3.88\\
                          &v.s. all          & $\downarrow$1.34 & $\downarrow$2.21 & $\uparrow$0.51           & $\downarrow$0.13          & $\downarrow$0.625         & $\uparrow$2.69           &$\uparrow$ 7.49           & $\uparrow$4.38           & $\uparrow$0.97           & $\uparrow$1.81 \\
\hline
\label{result}
\end{tabular}
\end{center}
\vskip -0.1in
\end{table*}

\begin{figure*}[ht]
\label{vis_result}
\centering
\vskip 0.2in 
\includegraphics[width=0.8\textwidth]{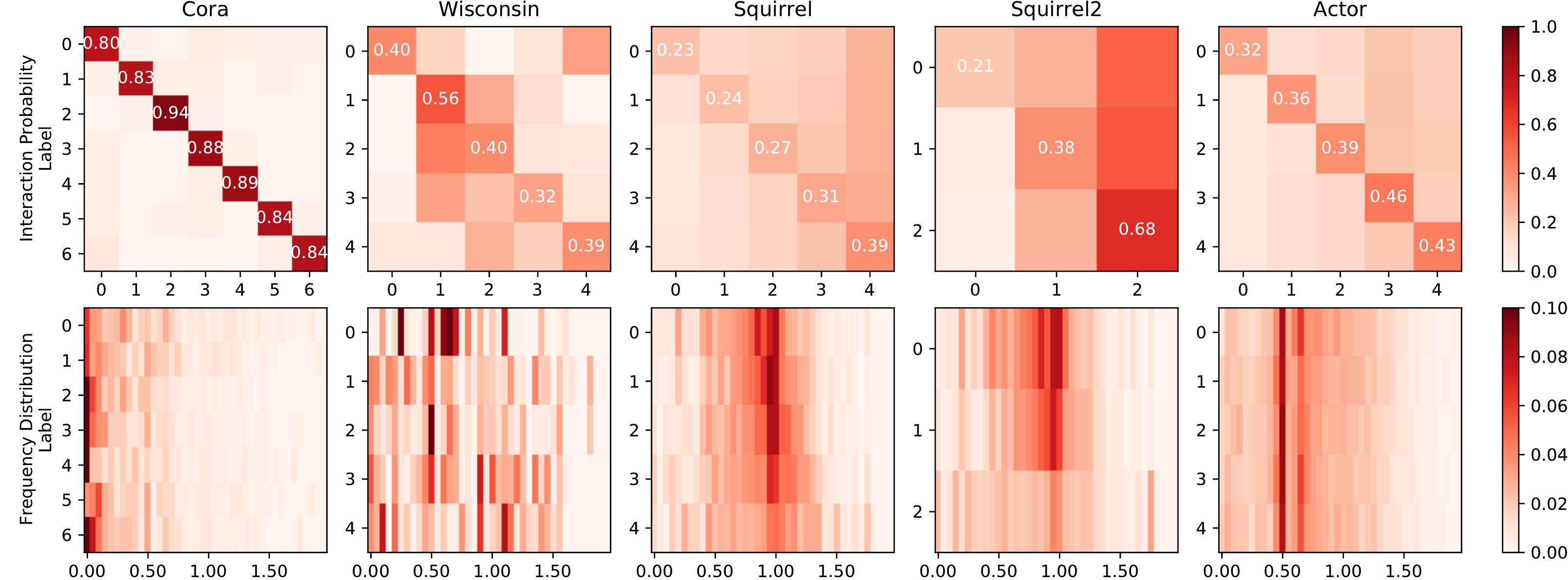}
\caption{Visualizations of interaction probability matrix and label frequency distribution of five datasets.} 
\label{Fig.compact}
\vskip -0.2in
\end{figure*}

\subsection{Result and Analysis}
We summarize the experimental results in Table \ref{result} and visualize the interaction probability and label frequency distribution of some typical graphs in Fig.~\ref{Fig.compact}. 

Our models show consistent superiority on most benchmarks and outperform all spectral \textrm{GNN} baselines. These promising results strongly suggest that our strategy of filter design proposed in Sect.\ref{subsec.filter design} is effective to improve spectral \textrm{GNNs}' prediction performance. 
Precisely, \textrm{T-DEMUF} yields over $7.49\%$ higher accuracy than the best baselines (\textrm{ARMA}) on \textrm{Squirrel}. Compared with baselines with first-order filters (\textrm{GCN} and \textrm{GIN}), our models and other baselines with high-order filters (\textrm{Cheby} and \textrm{ARMA}) have a huge lead on most of datasets apart from \textrm{Cora} and \textrm{Citeseer}. It empirically validates our argument about the superiority of high-order filters proposed in Theorem \ref{thm.first/second fitler}. On the other hand, the poor performance of \textrm{GCN} and \textrm{GIN} we implemented which are low-pass filters on graphs with low homophily degree verify our analysis on low-pass filters provided by Theorem \ref{thm.low/high fitler}. In addition, the visualization of \textrm{Cora}'s indicators confirms our deduction in Sect.\ref{Sec.Frequency distribution} - high interaction probability brings high homophily degree and high concentration of label frequency distribution on low-frequency.

Interestingly, all models have poor performance on \textrm{Actor} (less than $42\%$) while partial models including \textrm{T-DEMUF}, \textrm{P-DEMUF} obtain over $90\%$ prediction accuracy on \textrm{WebKB} networks and all have good results on \textrm{Cora}. We indicate that these observations strongly support our theoretical analysis on prediction error in terms of graph indicators in Sect.\ref{subsec. analysis response efficiency} and \ref{subsec.analysis homophily}. For example: 1. high homophily degree makes \textrm{Cora} easy to be classified; 2. highly diverse label frequency distributions which imply high label difference information content give a guarantee of good performance of appropriate filters on \textrm{Wisconsin}; 3. low homophily degree and similar label frequency distributions of \textrm{Actor} make all models fail. 


\section{Conclusion}
In this paper, we conduct a theoretical analysis on the prediction error of spectral \textrm{GNNs} and develop a deep analysis of graph filters' performance based on the introduction of significant graph indicators. We also propose an effective and practical strategy for filter design which has been empirically validated by a simple framework we developed.


\bibliography{main}
\bibliographystyle{icml2022}

\newpage
\appendix
\onecolumn
\section{Benchmarks and Model Discussion}\label{appendix}
\subsection{Statistics information of benchmarks.}
We use four types of real datasets - \textit{Citation network}, \textit{WebKB}, \textit{Actor co-occurrence network} and \textit{Wikipedia network}, to validate our proposed models. \textrm{Cora} and \textrm{Citeseer}~\citep{sen2008collective} are widely used citation benchmarks which represent paper as nodes and citation between two papers as edges. \textrm{Cornell}, \textrm{Texas}, and \textrm{Wisconsin}~\citep{pei2020geom} are three subgraphs of \textrm{WebKB} which is a webpage network with web pages as nodes and hyperlinks between them as edges. \textrm{Chameleon} and \textrm{Squirrel}~\citep{rozemberczki2021multi} are two \textrm{Wikipedia} networks with web pages as nodes and links between pages as edges. The nodes originally have five classes while \citet{bo2021beyond} proposed a new classification criteria which divides nodes into three main categories. In this paper, the relabeled networks are called \textrm{Chameleon2} and \textrm{Squirrel2}. \textrm{Actor}~\citep{tang2009social} is a subgraph of the fillm-director-actor-writer network whose nodes only represent actors and edges represent their collaborations.

We provide statistics information of our benchmarks in Table. \ref{Table.benchmark}.
\begin{table}[H]
\small
\addtolength{\tabcolsep}{-1.9pt}
\label{Table.benchmark}
\caption{Datasets statistics.}
\vskip 0.15in
\begin{center}
\begin{tabular}{lcccccccccc}
\hline
   Dataset        & Cora & Cite. & Cornell        & Texas          & Wisc.      & Cham.      & Squi.       & Cham.2 & Squi.2  & Actor           \\
 \# Nodes     & 2708 & 3327    & 183            & 183            & 251            & 2277           & 5201           & 2277           & 5201           & 7600           \\
                             \# Edges     & 5429 & 4732     & 295            & 309            & 499            & 36101          & 217073         & 36101          & 217073         & 33544          \\
                             \# Features  & 1433 & 3703     & 1703           & 1703           & 1703           & 2325           & 2089           & 2325           & 2089           & 931            \\
                            \# Classes   & 7    & 6        & 5              & 5              & 5              & 5              & 5              & 3              & 3              & 5              \\
\hline
\end{tabular}
\end{center}
\vskip -0.1in
\end{table}

\subsection{Spectral filters.} In our paper, we use four spectral \textrm{GNNs} as baselines whose spectral filters are listed as Table.\ref{Table.filter} and define a normalized second order filter $g_\epsilon(\tilde L)$ with $\epsilon\in[-1,1]$.

\begin{table}[H]
\centering
\label{Table.filter}
\caption{Spectral filters.}
\vskip 0.15in
\begin{center}
\begin{tabular}{ll}
\hline
 Model & Filter \\
 \textrm{GCN}& $I-\tilde{L}$ \\
 \textrm{GIN}& $(2+\epsilon)I-\tilde{L}$ \\
 \multirow{2}{*}{\textrm{ChebNet}}& $C^{(s)}=2C^{(2)}C^{(s-1)}-C^{(s-2)}$;\\ & $C^{(2)}=2L/\lambda_{max}-I$; $C_1=I$  \\
 \textrm{ARMA}& $(I+\sum_{k=1}^Kq^kL^k)^{-1}(\sum_{k=0}^{K-1}p_kL^k)$\\
 \textrm{Ours}& $I-\frac {((1-\epsilon)I-\tilde L)^2}{(1+|\epsilon|)^2}$\\
 \hline
\end{tabular}
\end{center}
\vskip -0.1in
\end{table}

 Noting that $g_\epsilon(\tilde L)=I-\frac {((1-\epsilon)I-\tilde L)^2}{(1+|\epsilon|)^2}=\frac {((2+|\epsilon|-\epsilon)I-\tilde L)((|\epsilon|+\epsilon)I+\tilde L)}{(1+|\epsilon|)^{2}}$, it is exactly an overlap between a low-pass filter $(2+|\epsilon|-\epsilon)I-\tilde L$ and a high-pass filter $(|\epsilon|+\epsilon)I+\tilde L$.
 
 For a given graph $\mathcal G_n$ with interaction probability $\tilde \Pi$, we investigate $\mathcal H_1(\tilde g(I-\tilde \Pi)|\mathcal G_n)$ for our second-order filter and the filter of \textrm{GCN}. We first normalize \textrm{GCN}'s filter as $g_{gcn}(\tilde L)=I-\frac {\tilde L}2$, then we have
 \begin{align*}
     &\mathcal H_1(\tilde g_{gcn}(I-\tilde \Pi)|\mathcal G_n)= \frac {(R_0-R_1)^2}{2n}+\frac 12\mathcal H_1(\tilde \Pi|\mathcal G_n)\\
     &\mathcal H_1(\tilde g_{\epsilon}(I-\tilde \Pi)|\mathcal G_n)=\frac {1+2|\epsilon|}{(1+|\epsilon|)^2}\frac {(R_0-R_1)^2}{n}+\frac {2\epsilon \mathcal H_1(\tilde \Pi|\mathcal G_n)-\mathcal H_2(\tilde \Pi|\mathcal G_n)}{(1+|\epsilon|)^2}
 \end{align*}

\subsection{Model Discussion.}


\textbf{How can our filter bank selection be data-driven.}

As we clarified in Sect.\ref{Sec.model}, our implementation of disentanglement is not random masking but learnable masking leveraging \textrm{GUMBEL-SOFTMAX}. These learnable maskings disentangle node features into several subsets of features. In our algorithm, although the form of our second order filter  $g_\epsilon(\tilde L)$ is predefined, its parameters including $\epsilon$ and weight $\omega$ are learned from specified graphs. Moreover, the parameters of our feature disentanglement blocks (the linear transformations and learnable masking) are also learned from data which will affect the learning of the filter bank. Therefore, our filter bank selection is data-driven.
 
\subsection{Experimental Setup and Additional Results}

\textbf{Experimental Setup.}

For all data, we use $60\%$ nodes for training, $20\%$ for validation and $20\%$ for testing. For all experiments, we report the mean prediction accuracy on the testing data for 10 runs. We search learning rate, hidden unit, weight decay and dropout for all models in the same search space. Finally, we choose learning rate of $0.01$, dropout rate of $0.5$, and hidden unit of $64$ over all datasets. The number of filters are searched between 2 to 10, and the final setting is: both \textrm{T-DEMUF} and  \textrm{P-DEMUF} use 2 filters with 1 layer and 3-layer MLP on \textrm{WebKB}; 2 filters with 18 layers and 3-layer MLP on \textrm{Wikipedia} and 5 filters with 1 layer and 4-layer MLP on \textrm{Actor}. For \textrm{Citation} networks, \textrm{T-DEMUF} uses 4 filters with 7 layers while \textrm{P-DEMUF} uses 3 filters with 8 layers. In addition, as the setting of benchmarks are the same as that in \textrm{Geom\_GCN}, we refer to the results reported in \cite{pei2020geom}.

\textbf{Ablation Study.}

To show the advantage of using disentanglement, we provide an ablation study on five benchmarks. Here, we propose two ablation models based on \textrm{P-DEMUF}. Recall that the disentanglement block of \textrm{P-DEMUF} consists of learnable masking and linear transformations, we design our ablation models by taking off the component of masking and linear transformation. Also, for fair and intuitive comparison, we simply fix the number of filters as 2. The results shown as Table.\ref{Table.ablation} validate that if we take off the disentanglement blocks of \textrm{P-DEMUF}, the results become worse in most of benchmarks.

\begin{table}[H]
\centering
\label{Table.ablation}
\caption{Node classification accuracy of P-DEMUF and its ablation models. We fix the number of second order filters as 2.}
\vskip 0.1in
\begin{center}
\begin{tabular}{llllll}
\hline
Dataset                         & Cornell  & Texas    & Cham.    & Squi.    & Actor    \\
P-DEMUF                         & 81.08 & 85.14 & 68.46 & 55.45 & 36.95 \\
P-DEMUF (w/o masking )          & 80.00 & 84.86 & 67.20 & 53.54 & 37.20 \\
P-DEMUF (w/o masking \& linear) & 77.30 & 86.49 & 62.5  & 44.19 & 36.71 \\
 \hline
\end{tabular}
\end{center}
\vskip -0.1in
\end{table}

\section{Proof of Theorem}\label{append.proof}

\subsection{Proof of Theorem \ref{thm.lower bound}}\label{AP.proof lower bound}
Before presenting the proof of Theorem \ref{thm.lower bound}, we start with a useful lemma.
\begin{lemma}\label{lm.lower bound}
Denote $x\in \mathbb R$, $y\in\{0,1\}$, $\mathcal S(x,y)=\{x<-1,y=1\; \text{or}\;x>1,y=0\}$ and $\psi$ is the clamp function defined as
$$
\psi(x)=\min\{\max\{x,-1\},1\}=  \left\{
\begin{array}{ll}
      1 & x>1 \\
      x & -1<x<1 \\
      -1 & x<-1 \\
\end{array} 
\right.,
$$
then we have
$$
(\frac 1{1+e^x}-y)^2\ge \frac 14-\frac {(1-2y)\psi(x)}4+\frac {\psi(x)^2}{16}-\frac {|\psi(x)|^3}{48}-\frac {|\psi(x)|^4}{96}-\frac {1}{(1+e)^2}\mathbf 1_{\mathcal S}(x,y),
$$
where, $\mathbf 1_{\mathcal S}(x,y)$ is the indication function of ${\mathcal S(x,y)}$.
\end{lemma}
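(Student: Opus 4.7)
The plan is to reduce the bivariate lemma to a univariate inequality via the substitution $t = (1-2y)x$. Checking $y = 0$ and $y = 1$ separately, one verifies $\bigl(\tfrac{1}{1+e^x} - y\bigr)^2 = \tfrac{1}{(1+e^t)^2}$, $(1-2y)\psi(x) = \psi(t)$, $|\psi(x)|^k = |\psi(t)|^k$, and $\mathcal{S}(x,y)$ corresponds exactly to $\{t > 1\}$. It therefore suffices to prove, for every $t \in \mathbb{R}$,
\begin{equation*}
f(t) := \frac{1}{(1+e^t)^2} \;\ge\; \frac{1}{4} - \frac{\psi(t)}{4} + \frac{\psi(t)^2}{16} - \frac{|\psi(t)|^3}{48} - \frac{|\psi(t)|^4}{96} - \frac{\mathbf{1}_{\{t>1\}}}{(1+e)^2}.
\end{equation*}

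I would then dispatch three regimes. For $t > 1$ the correction term makes the RHS equal to $\tfrac{1}{32} - \tfrac{1}{(1+e)^2} < 0$, so the inequality is automatic since $f > 0$. For $t < -1$ the RHS collapses to the constant $\tfrac{17}{32}$, while $f$ is monotonically decreasing and a single numerical check gives $f(-1) = e^2/(e+1)^2 \approx 0.5344 > \tfrac{17}{32}$, so $f(t) \ge f(-1) > \tfrac{17}{32}$. The substantive regime is $|t| \le 1$, where a direct Taylor expansion around $0$ (conveniently obtained by squaring $\tfrac{1}{1+e^t} = \tfrac{1}{2} - \tfrac{1}{2}\tanh(t/2) = \tfrac{1}{2} - \tfrac{t}{4} + \tfrac{t^3}{48} - \tfrac{t^5}{480} + \cdots$) yields $f(t) = P_4(t) - \tfrac{t^5}{480} + O(t^6)$ with
\begin{equation*}
P_4(t) \;=\; \frac{1}{4} - \frac{t}{4} + \frac{t^2}{16} + \frac{t^3}{48} - \frac{t^4}{96}.
\end{equation*}
Since $|\psi(t)|^3 = -t^3$ on $[-1,0]$ and $|\psi(t)|^3 = t^3$ on $[0,1]$, the target inequality reduces to $f(t) \ge P_4(t)$ on $[-1,0]$ and to the strictly weaker $f(t) \ge P_4(t) - \tfrac{t^3}{24}$ on $[0,1]$.

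The main obstacle is closing these two Taylor-remainder inequalities on $[-1, 1]$. For $t \in [-1, 0]$ I would use Lagrange's form $f(t) - P_4(t) = \tfrac{1}{120} f^{(5)}(\xi)\,t^5$ and show $f^{(5)} \le 0$ on $[-1, 0]$; substituting $u = e^t \in [e^{-1}, 1]$ turns $f^{(5)}$ into a rational function $P_5(u)/(1+u)^7$ for an explicit low-degree polynomial $P_5$, reducing the sign check to an elementary polynomial inequality. Combined with $t^5 \le 0$ this forces the remainder to be nonnegative. On $[0, 1]$ the analogous estimate gives $P_4(t) - f(t) \le \tfrac{1}{120}\|f^{(5)}\|_{\infty,[0,1]}\, t^5$, which is comfortably dominated by the slack $\tfrac{t^3}{24}$ once one bounds $\|f^{(5)}\|_{\infty,[0,1]}$ by a moderate constant (again a finite polynomial check in $u$). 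All remaining ingredients are straightforward numerical verifications at $t \in \{-1, 0, 1\}$; the heart of the argument is the sign/magnitude analysis of $f^{(5)}$ through its rational-function form in $u$.
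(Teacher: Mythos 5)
Your proof is correct in outline, and every numerical claim I checked holds, but it takes a genuinely different route from the paper's. The paper keeps $y$ explicit and works in two stages: it first replaces $x$ by $\psi(x)$ and bounds the replacement error $\bigl(\tfrac{1}{1+e^{x}}-y\bigr)^2-\bigl(\tfrac{1}{1+e^{\psi(x)}}-y\bigr)^2$ from below by $-\tfrac{1}{(1+e)^2}\mathbf 1_{\mathcal S}(x,y)$; then on $[-1,1]$ it uses only the \emph{first-order} Taylor expansion of the sigmoid, $\tfrac{1}{1+e^{x}}=\tfrac12-\tfrac{x}{4}+R(x)$ with $|R(x)|\le \tfrac{|x|^3}{48}$ from a third-derivative bound, and expands the square $\bigl(\tfrac12-\tfrac14\psi(x)-y+R(\psi(x))\bigr)^2$, so that the terms $-\tfrac{|\psi|^3}{48}-\tfrac{|\psi|^4}{96}$ arise as the absorbed cross and remainder terms. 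You instead eliminate $y$ by the symmetry $t=(1-2y)x$ (a clean observation the paper does not exploit), recognize the right-hand side on $[-1,1]$ as the fourth-order Taylor polynomial of $(1+e^{t})^{-2}$ up to the sign flip of the cubic term, and control the remainder through the fifth derivative. What the paper's route buys is economy: it needs only a crude third-derivative bound of the sigmoid itself and makes transparent where the coefficients $\tfrac1{48}$ and $\tfrac1{96}$ come from. What your route buys is conceptual clarity (the bound is ``the Taylor polynomial of the squared sigmoid, made one-sided by the $|\psi|^3$ term'') at the price of a more delicate fifth-derivative analysis that you only sketch; both the sign claim $f^{(5)}\le 0$ for $t\in[-1,0]$ (equivalently $720s^5-2400s^4+3000s^3-1710s^2+422s-32\le 0$ for $s=\tfrac{1}{1+e^{t}}\in[\tfrac12,\tfrac{e}{1+e}]$, which holds with margin at least about $0.4$) and the bound $\|f^{(5)}\|_{\infty,[0,1]}\le 5$ are true, but the latter does not follow from a naive triangle-inequality estimate of the polynomial's coefficients, so you would need to carry out the grouping or interval check explicitly to close the argument.
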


\begin{proof}

Noting that
\[
error(x,\psi(x))= (\frac {1}{1+e^x}-y)^2-(\frac {1}{1+e^{\psi(x)}}-y)^2\in \left\{
\begin{array}{ll}
      [-(\frac 1{1+e})^2,0] & x>1,\;y=0 \\
      
      [0,(\frac 1{1+e})^2] & x>1,\;y=1 \\
      
      [0,(\frac 1{1+e})^2] & x<-1,\;y=0 \\
      
      [-(\frac 1{1+e})^2,0] & x<-1,\;y=1 \\
\end{array} 
\right.
\]
then we have 
$$
error(x,\psi(x))\ge -\frac 1{(1+e)^2}\mathbf 1_{\mathcal S}(x,y).
$$
For $x\in[-1,1]$, the first-order (also the second order) Taylor expansion of $\frac 1{1+e^x}$ is $\frac 12 -\frac 14x$. 

Denote $R(x)$ as the remainder term, i.e. $R(x)=\frac 1{1+e^x}-\frac 12 +\frac 14x$, since 
$$
(\frac 1{(1 + e^x)^2})''' = -\frac {e^x (-4 e^x + e^{2x} + 1)}{(1 + e^x)^4}\le (\frac 1{(1 + e^x)^2})'''\Big|_{x=0}=\frac 18,
$$
we have 
$$
|R(x)|\le \max\Big|(\frac 1{(1 + e^x)^2})'''\Big|\frac {|x|^{3}}{3!}=\frac {|x|^3}{48}.
$$
Therefore, 
\begin{align*}
&(\frac 1{1+e^x}-y)^2=(\frac 1{1+e^{\psi(x)}}-y)^2+error(x,\psi(x))\\
\ge& (\frac 1{1+e^{\psi(x)}}-y)^2-\frac {1}{(1+e)^2}\mathbf 1_{\mathcal S}(x,y)\\
=&(\frac 12 -\frac 14\psi(x)-y+R(\psi(x)))^2-\frac {1}{(1+e)^2}\mathbf 1_{\mathcal S}(x,y)\\
\ge &(\frac 12 -\frac 14\psi(x)-y)^2-2|R(\psi(x))||\frac 12 -\frac 14\psi(x)-y|-\frac {1}{(1+e)^2}\mathbf 1_{\mathcal S}(x,y)\\
\ge& (\frac 12 -\frac 14\psi(x)-y)^2-\frac {|\psi(x)|^3}{24}\big|\frac 12 -\frac 14\psi(x)-y\big|-\frac {1}{(1+e)^2}\mathbf 1_{\mathcal S}(x,y)\\
\ge& (\frac 12 -\frac 14\psi(x)-y)^2-\frac {|\psi(x)|^3}{48}-\frac {|\psi(x)|^4}{96}-\frac {1}{(1+e)^2}\mathbf 1_{\mathcal S}(x,y)\\
=& \frac 14-\frac {(1-2y)\psi(x)}4+\frac {\psi(x)^2}{16}-\frac {|\psi(x)|^3}{48}-\frac {|\psi(x)|^4}{96}-\frac {1}{(1+e)^2}\mathbf 1_{\mathcal S}(x,y)
\end{align*}
\end{proof}

Below, we provide the proof of Theorem \ref{thm.lower bound}.

\textbf{Proof of Theorem \ref{thm.lower bound}}
\begin{proof}
According to Lemma \ref{lm.lower bound},
\begin{align*}
&Er(\mathbf x_0,\mathbf y_0)=\parallel \frac {e^{g(\tilde L)\mathbf x_0}}{e^{g(\tilde L)\mathbf x_0}+e^{g(\tilde L)\mathbf x_1}}-\mathbf y_0\parallel_2^2=\sum_l(\frac 1{1+e^{g(\tilde L)(\mathbf x_{1l}-\mathbf x_{0l})}}-\mathbf y_{0l})^2\\
\ge& \sum_l\frac 14-\frac {(1-2\mathbf y_{0l})\psi(\mathbf z_l)}4+\frac {\psi(\mathbf z_l)^2}{16}-\frac {|\psi(\mathbf z_l)|^3}{48}-\frac {|\psi(\mathbf z_l)|^4}{96}-\frac {I_{\mathcal S}(\mathbf z_l,y)}{(1+e)^2}
\\
=& \frac n4+\frac {\parallel \psi(\mathbf z)\parallel_2^2}{16}-\frac 12(\frac 12-\mathbf y_0)^\top \psi(\mathbf z)-\frac C{(1+e)^2}-\sum_l\frac {|\psi(\mathbf z_l)|^3}{24}\big|\frac 12 -\frac 14\psi(\mathbf z_l)-\mathbf y_{0l}\big|\\
\ge& \frac n4-\frac 14(\mathbf y_1-\mathbf y_0)^\top \psi(\mathbf z)+\frac {\parallel \psi(\mathbf z)\parallel_2^2}{16}-\frac {\parallel \psi(\mathbf z)\parallel_3^3}{48}-\frac {\parallel \psi(\mathbf z)\parallel_4^4}{96}-\frac C{(1+e)^2}\\
=&\tilde {Er}(\mathbf x_0,\mathbf y_0).
\end{align*}

Noting that $C\le \parallel \psi(\mathbf z)\parallel_4^4\le \parallel \psi(\mathbf z)\parallel_3^3\le \parallel \psi(\mathbf z)\parallel_2^2\le n$, then we have
\begin{align*}
\tilde {Er}(\mathbf x_0,\mathbf y_0)&\ge (\frac n4-\frac {\parallel \psi(\mathbf z)\parallel_3^3}{32}+(\frac 1{16}-\frac 1{(1+e)^2})C-\frac 14(\mathbf y_1-\mathbf y_0)^\top \psi(\mathbf z)\\
&>\frac n4-\frac {\parallel \psi(\mathbf z)\parallel_3^3}{32}-\frac C{100}-\frac 14(\mathbf y_1-\mathbf y_0)^\top \psi(\mathbf z)\\
&>\frac n4-\frac {33}{800}\parallel \psi(\mathbf z)\parallel_3^3-\frac 14(\mathbf y_1-\mathbf y_0)^\top \psi(\mathbf z)\\
&>\frac {167}{800}n-\frac 14(\mathbf y_1-\mathbf y_0)^\top \psi(\mathbf z)\\
&=\frac {167}{800}n-\frac 14(\mathbf y_1-\mathbf y_0)^\top \psi(g(\tilde L)(\mathbf x_{1}-\mathbf x_{0}))\\
&=\frac {167}{800}n-\frac 14 \sum_l\psi\big((\mathbf y_{1l}-\mathbf y_{0l})(g(\tilde L)(\mathbf x_{1}-\mathbf x_{0})_l\big).
\end{align*}
\end{proof}

\subsection{Proof of Proposition \ref{pro.interaction inequality} and \ref{The.moment}} \label{AP.proof interaction}
We first introduce two useful lemma.

\begin{lemma}\label{lm.moment}
For $\mathcal{G}=\{\mathcal{V},\mathcal{E}\}$, let $\mathbf{f}$ be the frequency of signal $\mathbf{x}$, then $\mathbb{E}[\mathbf{f}^n]=\frac{\textbf{x}^\top(I-\tilde{A})^n\textbf{x}}{\textbf{x}^\top\textbf{x}}$.
\end{lemma}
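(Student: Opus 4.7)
The plan is to prove the identity by a direct spectral decomposition, reducing both sides to the same expression $\sum_i \lambda_i^n \alpha_i^2 / \sum_j \alpha_j^2$. Since $\tilde{L} = I - \tilde{A}$ is symmetric, its eigenvectors $\{\mathbf{u}_i\}$ form an orthonormal basis of $\mathbb{R}^n$, and the signal admits the expansion $\mathbf{x} = \sum_i \alpha_i \mathbf{u}_i$ with $\alpha_i = \langle \mathbf{u}_i, \mathbf{x}\rangle$. This expansion is the bridge between the probabilistic definition of $\mathbf{f}$ and the algebraic expression on the right.

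First, I would rewrite the denominator. By Parseval's identity applied to the orthonormal system $\{\mathbf{u}_i\}$, we have $\mathbf{x}^\top \mathbf{x} = \sum_i \alpha_i^2$, which coincides with the normalizing constant in $\Pr(\mathbf{f} = \lambda_k) = \alpha_k^2/\sum_i \alpha_i^2$. Next, I would attack the numerator. The identity $(I - \tilde{A})^n \mathbf{u}_i = \tilde{L}^n \mathbf{u}_i = \lambda_i^n \mathbf{u}_i$ follows by induction from $\tilde{L}\mathbf{u}_i = \lambda_i \mathbf{u}_i$. Substituting the expansion of $\mathbf{x}$ twice and using orthonormality of the basis yields
\begin{align*}
\mathbf{x}^\top (I - \tilde{A})^n \mathbf{x} = \Big(\sum_i \alpha_i \mathbf{u}_i\Big)^{\!\top}\!\Big(\sum_j \alpha_j \lambda_j^n \mathbf{u}_j\Big) = \sum_i \alpha_i^2 \lambda_i^n.
\end{align*}

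Combining these two computations, the ratio $\mathbf{x}^\top (I-\tilde{A})^n \mathbf{x} / \mathbf{x}^\top \mathbf{x}$ equals $\sum_i \lambda_i^n \cdot \alpha_i^2 / \sum_j \alpha_j^2$, which by the definition of the frequency random variable $\mathbf{f}$ is precisely $\sum_i \lambda_i^n \Pr(\mathbf{f} = \lambda_i) = \mathbb{E}[\mathbf{f}^n]$.

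There is no real obstacle in this argument; the only point requiring mild care is that the graph Laplacian may have repeated eigenvalues, so the map $k \mapsto \lambda_k$ is not injective. However, the probability is defined per basis vector $\mathbf{u}_k$ rather than per distinct eigenvalue, so grouping identical eigenvalues on either side of the computation yields the same total contribution, and the identity is unaffected.
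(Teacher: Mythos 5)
Your proof is correct and follows essentially the same route as the paper's: expand $\mathbf{x}$ in the orthonormal eigenbasis of $\tilde{L}=I-\tilde{A}$, use $\tilde{L}^n\mathbf{u}_i=\lambda_i^n\mathbf{u}_i$ and orthonormality to identify the numerator with $\sum_i\alpha_i^2\lambda_i^n$ and the denominator with $\sum_i\alpha_i^2$, and match this with $\sum_i\lambda_i^n\Pr(\mathbf{f}=\lambda_i)$. Your closing remark about repeated eigenvalues is a sensible clarification that the paper omits but does not change the argument.
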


\begin{proof}
Since $\mathbf{x}=\sum\limits_{i=0}^{n-1}\alpha_i \mathbf{u}_i$, $\mathbf{u}_i$ is the $i$-th unit eigenvector of $\tilde{L}$ and $\lambda^n=\mathbf{u}_i^\top\tilde{L}^n\mathbf{u}_i$ then we have
\begin{align*}
     \mathbb{E}[\mathbf{f}^n]=\sum\limits_{i=0}^{n-1}P(\mathbf{f}=\lambda_i)\lambda_i^n=\frac {\sum(\alpha_i\mathbf{u}_i)^\top\tilde{L}^n(\alpha_i\mathbf{u}_i)}{\sum\alpha_i^2}=\frac{{\mathbf{x}^\top}\tilde{L}^n\mathbf{x}}{\mathbf{x}^\top \mathbf{x}}=\frac{{\mathbf{x}^\top}(I-\tilde{A})^n\mathbf{x}}{\mathbf{x}^\top \mathbf{x}}.
\end{align*}
\end{proof}

\begin{lemma}\label{lemma}
Let $B\in \mathbb{R}^{n\times n}$ is a symmetric matrix, $\forall ij$, $\mathbf{y}\in \mathbb{R}^n$, we have 
\[\frac{\mathbf{y}^\top B^2\mathbf{y}}{\mathbf{y}^\top \mathbf{y}}\ge \Big(\frac{\mathbf{y}^\top B\mathbf{y}}{\mathbf{y}^\top \mathbf{y}}\Big)^2.\]
\end{lemma}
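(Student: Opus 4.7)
The statement is a standard second-moment/variance inequality, so the plan is to give a one-line proof via Cauchy--Schwarz and then note the spectral interpretation for intuition. Concretely, I would apply the Cauchy--Schwarz inequality to the two vectors $\mathbf{y}$ and $B\mathbf{y}$:
\[
 (\mathbf{y}^\top B\mathbf{y})^2 \;=\; \bigl(\langle \mathbf{y},\,B\mathbf{y}\rangle\bigr)^2 \;\le\; \|\mathbf{y}\|_2^{2}\,\|B\mathbf{y}\|_2^{2} \;=\; (\mathbf{y}^\top \mathbf{y})\,(\mathbf{y}^\top B^\top B\,\mathbf{y}).
\]
Here symmetry of $B$ enters through $B^\top B = B^2$, which turns the right-hand side into $(\mathbf{y}^\top\mathbf{y})(\mathbf{y}^\top B^2\mathbf{y})$. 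Dividing both sides by $(\mathbf{y}^\top\mathbf{y})^2$ (assuming $\mathbf{y}\neq \mathbf{0}$; the case $\mathbf{y}=\mathbf{0}$ is trivial as both sides vanish) yields the claim.

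As a sanity check, I would also record the spectral viewpoint: since $B$ is symmetric, write $B=U\Lambda U^\top$ with $\Lambda=\mathrm{diag}(\lambda_1,\dots,\lambda_n)$, set $\mathbf{z}=U^\top\mathbf{y}$, and define $p_i=z_i^2/\sum_j z_j^2$. Then $\mathbf{y}^\top B\mathbf{y}/\mathbf{y}^\top\mathbf{y}=\sum_i \lambda_i p_i$ and $\mathbf{y}^\top B^2\mathbf{y}/\mathbf{y}^\top\mathbf{y}=\sum_i \lambda_i^2 p_i$, so the inequality reduces to $\mathbb{E}[\lambda^2]\ge (\mathbb{E}[\lambda])^2$, i.e., nonnegativity of variance (equivalently Jensen's inequality for the convex function $t\mapsto t^2$). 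This confirms the Cauchy--Schwarz derivation and makes clear the equality case: it holds iff $B\mathbf{y}$ is a scalar multiple of $\mathbf{y}$, i.e., $\mathbf{y}$ lies in an eigenspace of $B$.

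There is no real obstacle here; the only subtlety worth mentioning in the write-up is the use of symmetry of $B$ to replace $B^\top B$ by $B^2$. I would keep the proof to two or three lines in the appendix, stating the Cauchy--Schwarz step explicitly and noting the $\mathbf{y}=\mathbf{0}$ edge case.
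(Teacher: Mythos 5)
Your proof is correct. Your primary argument — Cauchy--Schwarz applied directly to the vectors $\mathbf{y}$ and $B\mathbf{y}$, with symmetry entering only through $B^\top B = B^2$ — is a coordinate-free shortcut that the paper does not take: the paper first diagonalizes $B = U\Lambda U^\top$, rewrites both sides as weighted sums over the spectrum ($\sum(\alpha_i\lambda_i)^2/\sum\alpha_i^2$ versus $(\sum\alpha_i^2\lambda_i)^2/(\sum\alpha_i^2)^2$), and then invokes H\"older's inequality on those sequences — which is exactly the ``spectral viewpoint'' you record as a sanity check. Both routes rest on the same underlying inequality, but yours isolates the role of symmetry more cleanly and handles the $\mathbf{y}=\mathbf{0}$ edge case explicitly (the paper is silent on it, and indeed the statement as written divides by $\mathbf{y}^\top\mathbf{y}$). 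The paper's spectral route has the advantage of fitting naturally with its surrounding machinery: the quantities $\sum \lambda_i^k p_i$ are precisely the moments $\mathbb{E}[\mathbf{f}^k]$ of the frequency distribution from Lemma~\ref{lm.moment}, so the variance interpretation $\mathrm{Var}(\mathbf{f})\ge 0$ is the form actually used downstream (e.g.\ to conclude $\tilde\pi_l^{2}\ge(\tilde\pi_l)^2$ in Proposition~\ref{pro.interaction inequality}). Your equality characterization ($\mathbf{y}$ an eigenvector of $B$) is a nice addition not present in the paper.
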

\begin{proof}
Since $B$ is symmetric, then we have $B=U\Lambda U^\top$, here $U$ is matrix of unit eigenvectors of $B$. From the proof of Lemma \ref{lm.moment}, we obtain that $\frac{\mathbf{y}^\top B^2\mathbf{y}}{\mathbf{y}^\top \mathbf{y}}=\frac {\sum(\alpha_i\lambda_i)^2}{\sum\alpha_i^2}$ and $\Big(\frac{\mathbf{y}^\top B\mathbf{y}}{\mathbf{y}^\top \mathbf{y}}\Big)^2=\frac{(\sum\alpha_i^2\lambda_i)^2}{(\sum\alpha_i^2)^2}$.

From Hölder's inequality, we have $(\sum(\alpha_i\lambda_i)^2)(\sum\alpha_i^2)\ge (\sum\alpha_i^2\lambda_i)^2$.

Therefore, we have $\frac {\sum(\alpha_i\lambda_i)^2}{\sum\alpha_i^2}\ge \frac{(\sum\alpha_i^2\lambda_i)^2}{(\sum\alpha_i^2)^2}$.
\end{proof}

Below is the proof of Proposition \ref{pro.interaction inequality}.

\textbf{Proof of proposition \ref{pro.interaction inequality}.}
\begin{proof}
For $P=D^{-1}A$ and $\tilde{A}=D^
{-\frac 12}AD^{-\frac 12}$, and $\Pi$, $\tilde{\Pi}$ defined by Definition~\ref{def.interaction},
\begin{align*}
    &R_l\Pi_{lm}^k+R_m\Pi_{ml}^k\ge \sqrt{R_lR_m}\tilde \Pi_{lm}^k\\
    \Longleftrightarrow& (R\Pi^k+(\Pi^k)^\top R)
_{lm}\ge 2(R^{\frac 12}\tilde{\Pi}^{k}R^{\frac 12})_{lm}\\
\Longleftrightarrow&\mathbf{y}_{m}^\top(P^k+(P^k)^\top)\mathbf{y}_l\ge 2\mathbf{y}_{m}^\top \tilde{A}^k\mathbf{y}_l\\
\Longleftrightarrow& \mathbf{y}_{m}^\top(P^k+DP^kD^{-1})\mathbf{y}_l\ge 2\mathbf{y}_{m}^\top D^{\frac 12}P^kD^{-\frac 12}\mathbf{y}_l\\
\Longleftrightarrow& \mathbf{y}_{m}^\top(P^k+DP^kD^{-1}-2D^{\frac 12}P^kD^{-\frac 12})\mathbf{y}_l\ge 0
\end{align*}
Since $(P^k+(P^k)^\top)_{ij}=P^k_{ij}+\frac{d_i}{d_j}P^k_{ij}\ge 2\sqrt{\frac{d_i}{d_j}}P^k_{ij}=2\tilde{A}^k_{ij}$, we have $\mathbf{y}_{m}^\top(P^k+(P^k)^\top-2\tilde{A}^k)\mathbf{y}_{l}\ge 0$. 

Let $m=l$, then we get $\pi_l^k\ge \tilde{\pi}_l^k$. 

According to Lemma \ref{lemma}, let $g(\cdot)=(\cdot)^n$, since $g(\tilde{A})$ is symmetric, then we have 
\[(g^2[\tilde{\Pi}])_{ll}=\frac{\mathbf{y}^\top (g(\tilde{A}))^{2}\mathbf{y}}{\mathbf{y}^\top \mathbf{y}}\ge \Big(\frac{\mathbf{y}^\top g(\tilde{A})\mathbf{y}}{\mathbf{y}^\top \mathbf{y}}\Big)^2=(g[\tilde{\Pi}]_{ll})^2\Rightarrow \tilde \pi_l^{2k}\ge (\tilde \pi_l^k)^2.
\]
\end{proof}

\textbf{Proof of Proposition \ref{The.moment}.}

Proposition \ref{The.moment} can be directly derived from Lemma \ref{lm.moment}.


\subsection{Proof of Theorem \ref{thm.upper bound 1} and \ref{thm.upper bound 2}} \label{AP.proof upper bound}

Denote indicator set  $\mathcal I_g=\{i|g(\lambda_i)\ne 0, i=0,\dots,n-1\}$, $\mathcal I_{\delta}=\{i|\delta_i\ne 0, i=0,\dots,n-1\}$ and $\mathcal I_{g,\delta}=\mathcal I_{g}\cap\mathcal I_{\delta}$, it is obvious that  $\sum_{i=0}^{n-1}\psi(\eta_ig(\lambda_i)\delta_i)=\sum_{i\in \mathcal I_{g,\delta,\eta}}\psi(\eta_ig(\lambda_i)\delta_i)$. For any $\eta$, we construct $\tilde \eta_i= \psi_{\frac 1{g(\lambda_i)\delta_i}}(\eta_i)$ such that $|\tilde \eta_ig(\lambda_i)\delta_i|\le 1$ and $\sum_{i\in \mathcal I_{g,\delta,\eta}}\psi(\eta_ig(\lambda_i)\delta_i)=\sum_{i\in \mathcal I_{g,,\delta,\tilde \eta}}\tilde \eta_ig(\lambda_i)\delta_i$.

\begin{lemma}\label{lm.upper bound}
For any $g(\cdot)$ and $\delta$,
\begin{align}
\sum_{i\in \mathcal I_{g,\delta}}g(\lambda_i)\le \frac {\sum_{i\in \mathcal I_{g,\delta}}\log p_i}{\log (1+c(g,\delta))\mu_g(\delta)}\le \frac {\sum_{i\in \mathcal I_{\delta}}\log p_i}{\log (1+c(g,\delta))\mu_g(\delta)},
\end{align}
here $c(g,\delta)=\frac {\sum_{i\in \mathcal I_{g}-\mathcal I_{g,\delta}}g(\lambda_i)}{\sum_{i\in \mathcal I_{g,\delta}}g(\lambda_i)}$ and  $p_i=\frac {\delta_i^2}{\sum_{k=0}^{n-1}\delta_k^2}$.
\end{lemma}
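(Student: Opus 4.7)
The plan is to recognize $(1+c(g,\delta))\mu_g(\delta)$ as a weighted average of the spectral weights $p_i$ and then apply Jensen's inequality to the concave function $\log$. First I would note that since $p_i=\delta_i^2/\sum_k\delta_k^2$ vanishes for $i\notin\mathcal I_\delta$, we have $\sum_i g(\lambda_i)p_i=\sum_{i\in\mathcal I_{g,\delta}}g(\lambda_i)p_i$, and directly from the definition of $c(g,\delta)$ one has $\sum_i g(\lambda_i)=(1+c(g,\delta))\sum_{i\in\mathcal I_{g,\delta}}g(\lambda_i)$. Combining these two identities reveals the key reformulation
\[
(1+c(g,\delta))\,\mu_g(\delta)\;=\;\frac{\sum_{i} g(\lambda_i)p_i}{\sum_{i\in\mathcal I_{g,\delta}}g(\lambda_i)}\;=\;\sum_{i\in\mathcal I_{g,\delta}} q_i\,p_i,
\]
where $q_i=g(\lambda_i)\big/\sum_{j\in\mathcal I_{g,\delta}}g(\lambda_j)$ is a probability distribution supported on $\mathcal I_{g,\delta}$.

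Next I would apply Jensen's inequality to $\log$ against the distribution $\{q_i\}$, which yields $\sum_{i\in\mathcal I_{g,\delta}}q_i\log p_i \;\le\; \log\bigl[(1+c(g,\delta))\mu_g(\delta)\bigr]$. The crucial observation is that both sides are nonpositive: $p_i\in[0,1]$ forces $\log p_i\le 0$, and $(1+c(g,\delta))\mu_g(\delta)\le 1$ since each $p_i\le 1$ implies $\sum_i g(\lambda_i)p_i\le\sum_{i\in\mathcal I_{g,\delta}}g(\lambda_i)$. I would then clear the positive denominator $\sum_{j\in\mathcal I_{g,\delta}}g(\lambda_j)$ and divide through by the strictly negative quantity $\log[(1+c(g,\delta))\mu_g(\delta)]$; this flip produces
\[
\sum_{i\in\mathcal I_{g,\delta}}g(\lambda_i)\;\le\;\frac{\sum_{i\in\mathcal I_{g,\delta}}g(\lambda_i)\log p_i}{\log[(1+c(g,\delta))\mu_g(\delta)]}.
\]

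To reduce the weighted numerator to the unweighted $\sum\log p_i$ displayed in the lemma, I would invoke the filter-family constraint $g([0,2])\subseteq[0,1]$ carried over from $\mathcal S_g$. Since $0\le g(\lambda_i)\le 1$ and $\log p_i\le 0$, we have $g(\lambda_i)\log p_i\ge \log p_i$ term by term; summing over $\mathcal I_{g,\delta}$ and dividing once more by the negative denominator flips the inequality into the first bound of the lemma. The second inequality is then immediate from $\mathcal I_{g,\delta}\subseteq\mathcal I_\delta$: the extra contributions $\sum_{i\in\mathcal I_\delta\setminus\mathcal I_{g,\delta}}\log p_i$ are nonpositive, so $\sum_{i\in\mathcal I_\delta}\log p_i\le \sum_{i\in\mathcal I_{g,\delta}}\log p_i$, and dividing by the negative $\log[(1+c(g,\delta))\mu_g(\delta)]$ reverses the direction to give the stated chain.

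The main obstacle is simply bookkeeping: every rearrangement hinges on $\log[(1+c(g,\delta))\mu_g(\delta)]<0$ and $\log p_i\le 0$, so one must verify the strict inequality $(1+c(g,\delta))\mu_g(\delta)<1$ (which degenerates only in trivial cases where $\delta$ is supported on a single nonzero mode) before being allowed to divide, and then track three successive sign flips without error. A secondary subtlety is the parsing of the denominator: the argument works only if we read $\log(1+c(g,\delta))\mu_g(\delta)$ as $\log\!\bigl[(1+c(g,\delta))\mu_g(\delta)\bigr]$, which is also the only parsing consistent with the way $\mathcal M(g,\delta)$ is invoked in Theorem~\ref{thm.upper bound 1} and Corollary~\ref{co.upper bound}.
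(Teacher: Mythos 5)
Your proposal is correct and follows essentially the same route as the paper: your Jensen step for $\log$ against the weights $q_i=g(\lambda_i)/\sum_j g(\lambda_j)$ is exactly the logarithm of the paper's weighted AM--GM inequality, and your reduction of the weighted numerator to $\sum\log p_i$ via $0\le g(\lambda_i)\le 1$, $\log p_i\le 0$ matches the paper's second inequality $\prod_i p_i^{q_i}\ge\bigl(\prod_i p_i\bigr)^{1/\sum_k g(\lambda_k)}$. Your explicit attention to the sign of $\log\bigl[(1+c(g,\delta))\mu_g(\delta)\bigr]$ and to the parsing of the denominator is a useful clarification of points the paper leaves implicit.
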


\begin{proof}
According to the weighted AM-GM inequality, we have
\begin{align*}
&\frac {\sum_{i\in \mathcal I_{g,\delta}}g(\lambda_i)p_i}{\sum_{i\in \mathcal I_{g,\delta}}g(\lambda_i)}\ge \prod_{i\in \mathcal I_{g,\delta}}p_i^{\frac {g(\lambda_i)}{\sum_{k\in \mathcal I_{g,\delta}}g(\lambda_k)}}\ge (\prod_{i\in \mathcal I_{g,\delta}}p_i)^{\frac {1}{\sum_{k\in \mathcal I_{g,\delta}}g(\lambda_k)}}\\
\Rightarrow & \sum_{i\in \mathcal I_{g,\delta}}g(\lambda_i)\le \frac {\sum_{i\in \mathcal I_{g,\delta}}\log p_i}{\log \sum_{i\in \mathcal I_{g,\delta}}g(\lambda_i)p_i-\log \sum_{i\in \mathcal I_{g,\delta}}g(\lambda_i)}
\end{align*}

Noting that $\mu_g(\delta)=\frac {\sum_{i=0}^{n-1}g(\lambda_i)p_i}{\sum_{i=0}^{n-1}g(\lambda_i)}=\frac {\sum_{i\in \mathcal I_{g,\delta}}g(\lambda_i)p_i}{\sum_{i\in \mathcal I_{g}}g(\lambda_i)}$, we have 
$$
\frac {\sum_{i\in \mathcal I_{g,\delta}}g(\lambda_i)p_i}{\sum_{i\in \mathcal I_{g,\delta}}g(\lambda_i)}=\frac {\sum_{i\in \mathcal I_{g}}g(\lambda_i)}{\sum_{i\in \mathcal I_{g,\delta}}g(\lambda_i)}\mu_g(\delta)=(1+\frac {\sum_{i\in \mathcal I_{g}-\mathcal I_{g,\delta}}g(\lambda_i)}{\sum_{i\in \mathcal I_{g,\delta}}g(\lambda_i)})\mu_g(\delta)=(1+c(g,\delta))\mu_g(\delta)< 1,
$$
here $c(g,\delta)=\frac {\sum_{i\in \mathcal I_{g}-\mathcal I_{g,\delta}}g(\lambda_i)}{\sum_{i\in \mathcal I_{g,\delta}}g(\lambda_i)}$. 

Then we obtain that 
$$
\sum_{i\in \mathcal I_{g,\delta}}g(\lambda_i)\le \frac {\sum_{i\in \mathcal I_{g,\delta}}\log p_i}{\log (1+c(g,\delta))\mu_g(\delta)}\le \frac {\sum_{i\in \mathcal I_{\delta}}\log p_i}{\log (1+c(g,\delta))\mu_g(\delta)}.
$$

\end{proof}

\begin{corollary}\label{co.lower bound}
For any $g(\cdot)$, $\delta$ and $\tilde \eta$ with  $p_i=\frac {\delta_i^2}{\sum_{k=0}^{n-1}\delta_k^2}$ and $q_i=\frac {\tilde \eta_i^2}{\sum_{k=0}^{n-1}\tilde \eta_k^2}$, we have
\begin{align}
\sum_{i\in \mathcal I_{g,\delta,\tilde \eta}}g(\lambda_i)\le \min\{\frac {\sum_{i\in \mathcal I_{g,\delta}}\log p_i}{\log (1+c(g,\delta))\mu_g(\delta)}, \frac {\sum_{i\in \mathcal I_{g,\tilde \eta}}\log q_i}{\log (1+c(g,\tilde \eta))\mu_g(\tilde \eta)}\}.
\end{align}
\end{corollary}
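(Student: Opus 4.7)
The plan is to derive Corollary \ref{co.lower bound} as an almost immediate consequence of Lemma \ref{lm.upper bound}, applied twice, combined with the monotonicity of a nonnegative sum with respect to its index set. No new analytical ideas seem to be required beyond what already powers the lemma.

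First, I would apply Lemma \ref{lm.upper bound} directly to the pair $(g,\delta)$ to obtain
\begin{align*}
\sum_{i\in \mathcal I_{g,\delta}} g(\lambda_i) \le \frac{\sum_{i\in \mathcal I_{g,\delta}}\log p_i}{\log (1+c(g,\delta))\mu_g(\delta)}.
\end{align*}
The hypotheses of the lemma only involve a generic spectrum, so I can apply it a second time with $\tilde\eta$ playing the role of $\delta$ and $q_i$ playing the role of $p_i$, which yields the symmetric bound
\begin{align*}
\sum_{i\in \mathcal I_{g,\tilde\eta}} g(\lambda_i) \le \frac{\sum_{i\in \mathcal I_{g,\tilde\eta}}\log q_i}{\log (1+c(g,\tilde\eta))\mu_g(\tilde\eta)}.
\end{align*}

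Next I would relate the left-hand sides to the quantity of interest $\sum_{i\in\mathcal I_{g,\delta,\tilde\eta}}g(\lambda_i)$. By definition $\mathcal I_{g,\delta,\tilde\eta}=\mathcal I_g\cap\mathcal I_\delta\cap \mathcal I_{\tilde\eta}\subseteq \mathcal I_{g,\delta}$, and likewise $\mathcal I_{g,\delta,\tilde\eta}\subseteq \mathcal I_{g,\tilde\eta}$. Since $g\in\mathcal S_g$ forces $g(\lambda_i)\in[0,1]$ for every $i$, dropping indices from either of these supersets can only decrease a sum of nonnegative terms, so
\begin{align*}
\sum_{i\in \mathcal I_{g,\delta,\tilde\eta}} g(\lambda_i) \le \sum_{i\in \mathcal I_{g,\delta}} g(\lambda_i), \qquad \sum_{i\in \mathcal I_{g,\delta,\tilde\eta}} g(\lambda_i) \le \sum_{i\in \mathcal I_{g,\tilde\eta}} g(\lambda_i).
\end{align*}
Chaining these with the two lemma bounds and taking the smaller of the two right-hand sides produces exactly the minimum appearing in the statement.

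The only technical point worth flagging, rather than a genuine obstacle, is a sign check: $p_i\le 1$ gives $\log p_i\le 0$, and the argument inside the proof of Lemma \ref{lm.upper bound} shows $(1+c(g,\delta))\mu_g(\delta)<1$, so $\log(1+c(g,\delta))\mu_g(\delta)<0$. The quotient is therefore nonnegative, and the identical reasoning applies to the $(\tilde\eta,q)$ side; consequently both candidate bounds are real numbers of the correct sign, and taking their minimum preserves the inequality. Everything else is routine bookkeeping layered on top of Lemma \ref{lm.upper bound}, so I would expect the formal proof to be essentially two lines after the preparatory invocations.
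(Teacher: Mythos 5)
Your proposal matches the paper's proof exactly: the paper also observes that $\mathcal I_{g,\delta,\tilde\eta}\subseteq\mathcal I_{g,\delta}$ and $\mathcal I_{g,\delta,\tilde\eta}\subseteq\mathcal I_{g,\tilde\eta}$ together with $g(\lambda_i)\in[0,1]$ to bound the sum by the minimum of the two larger sums, then invokes Lemma \ref{lm.upper bound} on each. Your additional sign check on the logarithms is a reasonable extra precaution but not something the paper spells out.
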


\begin{proof}
Since $g(\lambda_i)\in[0,1]$, $\mathcal I_{g,\delta,\tilde \eta}\subseteq \mathcal I_{g,\delta}$ and $\mathcal I_{g,\delta,\tilde \eta}\subseteq \mathcal I_{g,\tilde \eta}$, then $$\sum_{i\in \mathcal I_{g,\delta,\tilde \eta}}g(\lambda_i)\le \min\{\sum_{i\in \mathcal I_{g,\delta}}g(\lambda_i),\sum_{i\in \mathcal I_{g,\tilde \eta}}g(\lambda_i)\}.$$ 
With this observation and Lemma \ref{lm.upper bound}, we can obtain the corollary.
\end{proof}

Below is the proof of Theorem \ref{thm.upper bound 1} and \ref{thm.upper bound 2}.

\textbf{Proof of Theorem \ref{thm.upper bound 1}.}
\begin{proof}

Since  $m_g=\min_{i\in \mathcal I_{g,\delta,\tilde \eta}}g(\lambda_i)>0$,  $\max_{i\in \mathcal I_{g,\delta,\tilde \eta}}|\tilde \eta_i\delta_i|\le \frac 1{m_g}$. With Lemma \ref{lm.lower bound} and its corollary \ref{co.lower bound}, we obtain the upper bound of $\sum_{i=0}^{n-1}\psi(\eta_ig(\lambda_i)\delta_i)$:
\begin{align*}
&\sum_{i=0}^{n-1}\psi(\eta_ig(\lambda_i)\delta_i)=\sum_{i\in \mathcal I_{g,\delta,\tilde \eta}}\psi(\eta_ig(\lambda_i)\delta_i)=\sum_{i\in \mathcal I_{g,\delta,\tilde \eta}}\tilde \eta_ig(\lambda_i)\delta_i\\
\le& \sum_{i\in \mathcal I_{g,\delta,\tilde \eta}}g(\lambda_i)|\tilde \eta_i\delta_i|\le \frac {\sum_{i\in \mathcal I_{g,\delta,\tilde \eta}}g(\lambda_i)}{m_g}\\
\le&\frac 1{m_g}\min\{\frac {\sum_{i\in \mathcal I_{g,\delta}}\log p_i}{\log (1+c(g,\delta))\mu_g(\delta)}, \frac {\sum_{i\in \mathcal I_{g,\tilde \eta}}\log q_i}{\log (1+c(g,\tilde \eta))\mu_g(\tilde \eta)}\}\\
\le&\frac 1{m_g}\min\{\frac {\sum_{i\in \mathcal I_{\delta}}\log p_i}{\log (1+c(g,\delta))\mu_g(\delta)}, \frac {\sum_{i\in \mathcal I_{\tilde \eta}}\log q_i}{\log (1+c(g,\tilde \eta))\mu_g(\tilde \eta)}\}\\
=&\frac 1{m_g}\min\{\frac {-\mathbf I(\delta)}{\log (1+c(g,\delta))\mu_g(\delta)}, \frac {-\mathbf I(\tilde \eta)}{\log (1+c(g,\tilde \eta))\mu_g(\tilde \eta)}\}
\end{align*}
\end{proof}

\textbf{Proof of Theorem \ref{thm.upper bound 2}.}
\begin{proof}

Let $\Delta y=\mathbf y_0-\mathbf y_1$ and $\gamma,\omega$ be the spectra of $\mathbf y_0$ and $\mathbf y_1$, respectively. Then $\delta=\gamma-\omega$.

Since $\parallel\gamma-\omega\parallel_2^2=\parallel\mathbf y_0-\mathbf y_1\parallel_2^2=n$, $\mathbf I(\gamma-\omega)=\sum_i\log\frac {(\gamma_i-\omega_i)^2}{\parallel\gamma-\omega\parallel_2^2}=2\sum_i\log|\gamma_i-\omega_i|-n\log n$.

Denote $p_i$ as the frequency probability of $\Delta y$, according to the definition, we have $p_i=\frac {(\gamma_i-\omega_i)^2}{\parallel\gamma-\omega\parallel_2^2}=\frac {\gamma_i^2+\omega_i^2-2\gamma_i\omega_i}{n}$ . Recall the definition of the measure $\mu_g(\cdot)$ and symmetric interaction probability, we represent $\mu_g(\delta)$ as:
\begin{align*}
&\mu_g(\delta)=\frac {\sum_ig(\lambda_i)p_i}{\sum_ig(\lambda_i)}\\
=&\frac {\sum_ig(\lambda_i)(\gamma_i^2+\omega_i^2-2\gamma_i\omega_i)}{n\sum_ig(\lambda_i)}\\
=&\frac {\gamma^\top g(\Lambda)\gamma+\omega^\top g(\Lambda)\omega-2\gamma^\top g(\Lambda)\omega}{n\sum_ig(\lambda_i)}\\
=&\frac {\mathbf y_0^\top g(\tilde L)\mathbf y_0+\mathbf y_1^\top g(\tilde L)\mathbf y_1-2\mathbf y_0^\top g(\tilde L)\mathbf y_1}{n\sum_ig(\lambda_i)}\\
=&\frac {R_0(\tilde g(I-\tilde \Pi))_{00}+R_1\tilde g(I-\tilde \Pi))_{11}-2\sqrt {R_0R_1}\tilde g(I-\tilde \Pi))_{01}}{n\sum_ig(\lambda_i)}
\end{align*}

 As we show in the proof of Lemma \ref{lm.upper bound},
 $$(1+c(g,\delta))\mu_g(\delta)=\frac {\sum_{i\in \mathcal I_{g}}g(\lambda_i)}{\sum_{i\in \mathcal I_{g,\delta}}g(\lambda_i)}\mu_g(\delta).$$ 
 Since $\sum_{i\in \mathcal I_{g}}g(\lambda_i)=\sum_{i}g(\lambda_i)$, we have

\begin{align*}
&(1+c(g,\delta))\mu_g(\delta)\\
=&\frac {R_0(\tilde g(I-\tilde \Pi))_{00}+R_1\tilde g(I-\tilde \Pi))_{11}-2\sqrt {R_0R_1}\tilde g(I-\tilde \Pi))_{01}}{n\sum_{i\in \mathcal I_{g,\delta}}g(\lambda_i)}\\
=&\frac{\mathcal H_1(\tilde g(I-\tilde \Pi)|\mathcal G_n)}{\sum_{i\in \mathcal I_{g,\delta}}g(\lambda_i)}.
\end{align*}
Based on the Corollary \ref{co.upper bound}, we have
\begin{align*}
    Er(X,Y)=2Er(\mathbf x_0,\mathbf y_0)>\frac {167}{400}n +\frac {\mathbf I(\delta)}{2m_g\log\frac{\mathcal H_1(\tilde g(I-\tilde \Pi)|\mathcal G_n)}{\sum_ig(\lambda_i)}}.
\end{align*}
\end{proof}

\subsection{Proof of Theorem \ref{thm.low/high fitler} and \ref{thm.first/second fitler}}

\textbf{Proof of Theorem \ref{thm.low/high fitler}.}
\begin{proof}
To prove this theorem is equivalent to prove that for $g_1(\tilde L)=\epsilon_1I+\epsilon_2\tilde L\in\mathcal S_g^1$ and $g_2(\tilde L)=\epsilon'_1I+\epsilon'_2\tilde L+\epsilon'_3\tilde L^2\in\mathcal S_g^2$,  then for homophilic graphs with $\mathcal H_1(\tilde \Pi|\mathcal G_n)>0$, we can always find a low-pass filter work better than a high-pass filter; otherwise, we can always find a high-pass filter work better than a low-pass filter.

Dentoe $\int_{0}^2g_1(\lambda)d\lambda=2(\epsilon_1+\epsilon_2)=a$ and $\int_{0}^2g_2(\lambda)d\lambda=\epsilon'_1+\epsilon'_2+\frac 43 \epsilon'_3=b$, then we have

\begin{align*}
\mathcal H_1(\tilde g_1(I-\tilde \Pi)|\mathcal G_n)&=\frac {R_0}n(\tilde g_1(I-\tilde \Pi))_{00}+\frac {R_1}n(\tilde g_1(I-\tilde \Pi))_{11}-2\frac {\sqrt {R_0R_1}}n(\tilde g_1(I-\tilde \Pi))_{01}\\
&=(\epsilon_1+\epsilon_2)\frac {(\sqrt{R_0}-\sqrt{R_1})^2}{n}-\epsilon_2\frac {(R_0\tilde \pi_0+R_1\tilde \pi_1-2\sqrt{R_0R_1}\tilde \Pi_{01})}n\\
&= (\epsilon_1+\epsilon_2)(1-\frac {2\sqrt{R_0R_1}}{n})-\epsilon_2\mathcal H_1(\tilde \Pi|\mathcal G_n)\\
&= \frac a2(1-\frac {2\sqrt{R_0R_1}}{n})-\epsilon_2\mathcal H_1(\tilde \Pi|\mathcal G_n)\\
\mathcal H_1(\tilde g_2(I-\tilde \Pi)|\mathcal G_n)&=\frac {R_0}n(\tilde g_2(I-\tilde \Pi))_{00}+\frac {R_1}n(\tilde g_2(I-\tilde \Pi))_{11}-2\frac {\sqrt {R_0R_1}}n(\tilde g_2(I-\tilde \Pi))_{01}\\
&=(\epsilon'_1+\epsilon'_2+\epsilon'_3)\frac {(\sqrt{R_0}-\sqrt{R_1})^2}{n}-(\epsilon'_2+2\epsilon'_3)\frac {(R_0\tilde \pi_0+R_1\tilde \pi_1-2\sqrt{R_0R_1}\tilde \Pi_{01})}n\\
&\quad+ \epsilon'_3\frac {(R_0\tilde \pi_0^2+R_1\tilde \pi_1^2-2\sqrt{R_0R_1}\tilde \Pi_{01}^2)}n\\
&= (\epsilon'_1+\epsilon'_2+\epsilon'_3)(1-\frac {2\sqrt{R_0R_1}}{n})-(\epsilon'_2+2\epsilon'_3)\mathcal H_1(\tilde \Pi|\mathcal G_n)+\epsilon'_3\mathcal H_2(\tilde \Pi|\mathcal G_n)\\
&= (\frac b2-\frac 13\epsilon'_3)(1-\frac {2\sqrt{R_0R_1}}{n})-(\epsilon'_2+2\epsilon'_3)\mathcal H_1(\tilde \Pi|\mathcal G_n)+\epsilon'_3\mathcal H_2(\tilde \Pi|\mathcal G_n)\\
\end{align*}

Given a graph with $\mathcal H_1(\tilde \Pi|\mathcal G_n)>0$, for a given first-order high-pass filter $g(\tilde L)=\epsilon_1I+|\epsilon_2|\tilde L$, we can always find a low-pass filter such as $g'(\tilde L)=(\epsilon_1+2|\epsilon_2|)I-|\epsilon_2|\tilde L$ such that
$$
\mathcal H_1(\tilde g(I-\tilde \Pi)|\mathcal G_n)<\mathcal H_1(g'(I-\tilde \Pi)|\mathcal G_n)=\frac a2(1-\frac {2\sqrt{R_0R_1}}{n})+|\epsilon_2|\mathcal H_1(\tilde \Pi|\mathcal G_n).
$$
For a given second order high-pass filter $g(\tilde L)=\epsilon'_1I+\epsilon'_2\tilde L+\epsilon'_3\tilde L^2$ with $\epsilon'_2+2\epsilon'_3>0$, we can always find a low-pass filter such as $g'(\tilde L)=(\epsilon'_1+2\epsilon'_2+4\epsilon'_3)I-(\epsilon'_2+4\epsilon'_3)\tilde L+\epsilon'_3\tilde L^2$ such that
$$
\mathcal H_1(\tilde g(I-\tilde \Pi)|\mathcal G_n)<\mathcal H_1(g'(I-\tilde \Pi)|\mathcal G_n)= (\frac b2-\frac 13\epsilon'_3)(1-\frac {2\sqrt{R_0R_1}}{n})+(\epsilon'_2+2\epsilon'_3)\mathcal H_1(\tilde \Pi|\mathcal G_n)+\epsilon'_3\mathcal H_2(\tilde \Pi|\mathcal G_n).
$$

Similarly, Given a graph with $\mathcal H_1(\tilde \Pi|\mathcal G_n)<0$, for a given first-order low-pass filter $g(\tilde L)=\epsilon_1I-|\epsilon_2|\tilde L$, we can always find a high-pass filter such as $g'(\tilde L)=(\epsilon_1-2|\epsilon_2|)I+|\epsilon_2|\tilde L$ such that
$$
\mathcal H_1(\tilde g(I-\tilde \Pi)|\mathcal G_n)<\mathcal H_1(g'(I-\tilde \Pi)|\mathcal G_n)=\frac a2(1-\frac {2\sqrt{R_0R_1}}{n})+|\epsilon_2\mathcal H_1(\tilde \Pi|\mathcal G_n)|.
$$
For a given second order low-pass filter $g(\tilde L)=\epsilon'_1I+\epsilon'_2\tilde L+\epsilon'_3\tilde L^2$ with $\epsilon'_2+2\epsilon'_3<0$, we can always find a high-pass filter such as $g'(\tilde L)=(\epsilon'_1+2\epsilon'_2+4\epsilon'_3)I-(\epsilon'_2+4\epsilon'_3)\tilde L+\epsilon'_3\tilde L^2$ such that
\begin{align*}
\mathcal H_1(\tilde g(I-\tilde \Pi)|\mathcal G_n)<\mathcal H_1(g'(I-\tilde \Pi)|\mathcal G_n)= (\frac b2-\frac 13\epsilon'_3)(1-\frac {2\sqrt{R_0R_1}}{n})+|(\epsilon'_2+2\epsilon'_3)\mathcal H_1(\tilde \Pi|\mathcal G_n)|+\epsilon'_3\mathcal H_2(\tilde \Pi|\mathcal G_n).
\end{align*}

Based on the above conclusions, for a given graph $\mathcal G_n$, for given filter $g_1$,  $g_2$ there exists $g_1'$ and $g_2'$ such that
\begin{align*}
\mathcal H_1(\tilde g'_1(I-\tilde \Pi)|\mathcal G_n)&=\frac a2(1-\frac {2\sqrt{R_0R_1}}{n})+|\epsilon_2\mathcal H_1(\tilde \Pi|\mathcal G_n)|\\
\mathcal H_1(\tilde g'_2(I-\tilde \Pi)|\mathcal G_n)&=(\frac b2-\frac 13\epsilon'_3)(1-\frac {2\sqrt{R_0R_1}}{n})+|(\epsilon'_2+2\epsilon'_3)\mathcal H_1(\tilde \Pi|\mathcal G_n)|+\epsilon'_3\mathcal H_2(\tilde \Pi|\mathcal G_n).
\end{align*}
\end{proof}

\textbf{Proof of Theorem \ref{thm.first/second fitler}.}
\begin{proof}
According to Theorem \ref{thm.low/high fitler}, for a given graph $\mathcal G_n$, for given filter $g_1$,  $g_2$ there exists $g_1'$ and $g_2'$ such that

\begin{align*}
\mathcal H_1(\tilde g'_1(I-\tilde \Pi)|\mathcal G_n)&=\frac a2(1-\frac {2\sqrt{R_0R_1}}{n})+|\epsilon_2\mathcal H_1(\tilde \Pi|\mathcal G_n)|\\
\mathcal H_1(\tilde g'_2(I-\tilde \Pi)|\mathcal G_n)&=(\frac b2-\frac 13\epsilon'_3)(1-\frac {2\sqrt{R_0R_1}}{n})+|(\epsilon'_2+2\epsilon'_3)\mathcal H_1(\tilde \Pi|\mathcal G_n)|+\epsilon'_3\mathcal H_2(\tilde \Pi|\mathcal G_n).
\end{align*}

Since $\forall g_1\in\mathcal S_g^1,g_2\in\mathcal S_g^2$, $g_1([0,2])\in[0,1]$ and $g_2([0,2])\in[0,1]$, then we have 
$|\epsilon_2|\le \frac 12$ and $|\epsilon_2+2\epsilon_3|\le \frac 12$.  
\begin{align*}
\mathcal H_1(\tilde g'_1(I-\tilde \Pi)|\mathcal G_n)&\le\frac 12(1-\frac {2\sqrt{R_0R_1}}{n})+\frac 12|\mathcal H_1(\tilde \Pi|\mathcal G_n)|\\
\mathcal H_1(\tilde g'_2(I-\tilde \Pi)|\mathcal G_n)&\le(\frac 12-\frac 23\epsilon'_3)(1-\frac {2\sqrt{R_0R_1}}{n})+\frac 12|\mathcal H_1(\tilde \Pi|\mathcal G_n)|+\epsilon'_3\mathcal H_2(\tilde \Pi|\mathcal G_n).
\end{align*}
These inequalities hold when $\epsilon_1=1,|\epsilon_2|= \frac 12$ and $\epsilon|\epsilon_2+2\epsilon_3|= \frac 12$.  Let $g_1$ with $|\epsilon_2|= \frac 12$ and $g_2$ with $|\epsilon_2+2\epsilon_3|= \frac 12$, then we have 
\begin{align*}
&\mathcal H_1(\tilde g'_2(I-\tilde \Pi)|\mathcal G_n)-\mathcal H_1(\tilde g'_1(I-\tilde \Pi)|\mathcal G_n)\\
=&(-\frac 23\epsilon'_3)(1-\frac {2\sqrt{R_0R_1}}{n})+\epsilon'_3\mathcal H_2(\tilde \Pi|\mathcal G_n)\\
=&-\frac {\epsilon_3'}n(R_0(\frac 23-\tilde \pi_0^2 )+R_1(\frac 23-\tilde \pi_1^2)+2\sqrt{R_0R_1}(\frac 23-\tilde \Pi_{01}^2))\\
\ge&\frac {\epsilon'_3}3\mathcal H_2(\tilde \Pi|\mathcal G_n)
\end{align*}
Therefore,  $\mathcal H_1(\tilde g'_2(I-\tilde \Pi)|\mathcal G_n)>\mathcal H_1(\tilde g'_1(I-\tilde \Pi)|\mathcal G_n)$ when $\mathcal H_2(\tilde \Pi|\mathcal G_n)<\frac 23(1-\frac {2\sqrt{R_0R_1}}{n})$, otherwise then we have $|\mathcal H_1(\tilde g'_2(I-\tilde \Pi)|\mathcal G_n)-\mathcal H_1(\tilde g'_1(I-\tilde \Pi)|\mathcal G_n)|\le -\frac {\epsilon'_3}3\mathcal H_2(\tilde \Pi|\mathcal G_n)$. 
\end{proof}

\subsection{Proof of Theorem \ref{thm.filter bank1}}\label{Ap.proof filter bank}
\begin{lemma}\label{lm.filter bank}
For any function $g(\cdot)$ which is nonnegative on the closed interval $[0,2]$ , 
$$G(g, \tilde \Pi)=R_0(\tilde g(I-\tilde \Pi))_{00}+R_1\tilde g(I-\tilde \Pi))_{11}-2\sqrt {R_0R_1}\tilde g(I-\tilde \Pi))_{01}> 0.$$
\end{lemma}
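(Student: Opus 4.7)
The plan is to unpack the quadratic form $G(g,\tilde\Pi)$ into a single Rayleigh-type expression in $\tilde L$ and then invoke the positive semi-definiteness of $g(\tilde L)$. Recall that by definition $\tilde g(I-\tilde\Pi)=R^{-1/2}Y^\top g(\tilde L)Y R^{-1/2}$, so that
\[
(\tilde g(I-\tilde\Pi))_{lm}=\frac{\mathbf{y}_l^\top g(\tilde L)\mathbf{y}_m}{\sqrt{R_l R_m}}.
\]
Substituting into $G(g,\tilde\Pi)=R_0(\tilde g(I-\tilde\Pi))_{00}+R_1(\tilde g(I-\tilde\Pi))_{11}-2\sqrt{R_0R_1}(\tilde g(I-\tilde\Pi))_{01}$, the factors of $R_l$ cancel cleanly and I obtain the target identity
\[
G(g,\tilde\Pi)=(\mathbf{y}_0-\mathbf{y}_1)^\top g(\tilde L)(\mathbf{y}_0-\mathbf{y}_1).
\]
This is the one serious computation and is purely mechanical.

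Next, I would apply functional calculus. Since the eigenvalues of $\tilde L$ lie in $[0,2]$ and $g\ge 0$ on that interval, the spectral decomposition $\tilde L=\sum_i\lambda_i\mathbf{u}_i\mathbf{u}_i^\top$ gives $g(\tilde L)=\sum_i g(\lambda_i)\mathbf{u}_i\mathbf{u}_i^\top$, which is positive semi-definite. Hence
\[
G(g,\tilde\Pi)=\sum_i g(\lambda_i)\,\delta_i^2\ge 0,
\]
where $\delta_i=\langle\mathbf{u}_i,\mathbf{y}_0-\mathbf{y}_1\rangle$ is the spectrum of the label difference.

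The delicate part is upgrading $\ge 0$ to the claimed strict inequality $>0$. The vector $\mathbf{y}_0-\mathbf{y}_1$ is a nonzero $\pm 1$-valued indicator difference, so its spectrum $\delta$ is nonzero with $\sum_i\delta_i^2=\|\mathbf{y}_0-\mathbf{y}_1\|_2^2=n$. Therefore $G(g,\tilde\Pi)>0$ as soon as at least one index $i\in\mathcal I_\delta$ satisfies $g(\lambda_i)>0$, i.e.\ as soon as $\mathcal I_{g,\delta}=\mathcal I_g\cap\mathcal I_\delta$ is nonempty. This will be the main obstacle: a purely pointwise nonnegativity assumption on $g$ does not rule out $g$ vanishing on the entire support of $\delta$. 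I would handle this by noting that in the context where the lemma is applied (the filter families $\mathcal S_g$, $\mathcal S_g^1$, $\mathcal S_g^2$) the condition $\sum_i g(\lambda_i)>1$ together with the fact that $\mathbf{y}_0-\mathbf{y}_1$ has nontrivial overlap with a frequency-rich basis forces $\mathcal I_{g,\delta}\neq\emptyset$; equivalently, a polynomial $g\ge 0$ on $[0,2]$ can vanish on at most $\deg g$ eigenvalues, whereas $\delta$ is supported on essentially all of the spectrum of $\tilde L$ because $\mathbf{y}_0-\mathbf{y}_1\in\{-1,+1\}^n$ cannot be a sparse combination of the eigenvectors of $\tilde L$ for a connected graph with self-loops. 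Once this nondegeneracy is established, the strict inequality follows immediately from the spectral sum above.
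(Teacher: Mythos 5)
Your core computation is exactly the paper's: expand $(\tilde g(I-\tilde\Pi))_{lm}=\mathbf{y}_l^\top g(\tilde L)\mathbf{y}_m/\sqrt{R_lR_m}$, watch the $R_l$ factors cancel, and collapse $G(g,\tilde\Pi)$ to the quadratic form $(\mathbf{y}_0-\mathbf{y}_1)^\top g(\tilde L)(\mathbf{y}_0-\mathbf{y}_1)=\sum_i g(\lambda_i)\delta_i^2$. Up to that point you and the paper are in lockstep, and the identity is the real content of the lemma.

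Where you diverge is in the last step, and here you are actually \emph{more} careful than the paper: the paper simply asserts that $\tilde L$ is positive definite and hence $g(\tilde L)$ is positive definite for any nonnegative $g$, which is not true ($\tilde L$ has eigenvalue $0$, and a nonnegative $g$ may vanish at eigenvalues of $\tilde L$), so the paper only legitimately obtains $G\ge 0$. You correctly isolate the obstacle --- strict positivity requires $\mathcal I_g\cap\mathcal I_\delta\neq\emptyset$ --- but your proposed patch does not close it. The claim that $\mathbf{y}_0-\mathbf{y}_1\in\{-1,+1\}^n$ must overlap ``essentially all of the spectrum'' is false in general: for the complete graph with self-loops one has $\tilde L=I-J/n$, and with balanced classes $\mathbf{y}_0-\mathbf{y}_1\perp\mathbbm 1$ is supported entirely on the eigenvalue-$1$ eigenspace, so a nonnegative polynomial such as $g(\lambda)\propto(\lambda-1)^2$ gives $G=0$. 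So the lemma as stated, for arbitrary nonnegative $g$, only supports the weak inequality; the strict version needs an explicit nondegeneracy hypothesis (e.g.\ $g>0$ somewhere on the support of $\delta$, which is essentially the condition $\mathcal I_{g,\delta}\neq\emptyset$ already floating around in Theorem~\ref{thm.upper bound 1}). You should state that hypothesis rather than argue it is automatic; with it, your spectral-sum argument finishes the proof cleanly.
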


\begin{proof}

\begin{align*}
&R_0(\tilde g(I-\tilde \Pi))_{00}+R_1\tilde g(I-\tilde \Pi))_{11}-2\sqrt {R_0R_1}\tilde g(I-\tilde \Pi))_{01}\\
=&\mathbf y_0^\top g(\tilde L)\mathbf y_0+\mathbf y_1^\top g(\tilde L)\mathbf y_1-2\mathbf y_0^\top g(\tilde L)\mathbf y_1\\
=&(\mathbf y_0-\mathbf y_1)^\top g(\tilde L)(\mathbf y_0-\mathbf y_1)\\
\end{align*}

Since $\tilde L$ is symmetric positive definite with eigenvalues falling in $[0,2]$, for any nonnegative $g$, $g(\tilde L)$ is also symmetric positive definite. Thus, $(\mathbf y_0-\mathbf y_1)^\top g(\tilde L)(\mathbf y_0-\mathbf y_1)>0$.
\end{proof}

\textbf{Proof of Theorem \ref{thm.filter bank1}}
\begin{proof}

Here we assume that $\mathcal I_{\gamma-\omega}=\emptyset$, that is for any $g$,  $\mathcal I_{g,\gamma-\omega}=\mathcal I_{g}$. Below, we assume that $R_0\ne R_1$.

It is easy to check that for any $g_1(\cdot),g_2(\cdot)\in\mathcal S_g$ and nonnegative constant $l_1,l_2$ with $l_1+l_2=1$, $(l_1g_1+l_2g_2)(\cdot)\in\mathcal S_g$. 

For any $g\in\mathcal S_g$, $\tilde g(I-\tilde \Pi)=(\epsilon_1+\epsilon_2)I-\epsilon_2\tilde \Pi$, thus

\begin{align*}
\frac {G(g, \tilde \Pi)}{\sum_ig(\lambda_i)}&=\frac {R_0(\tilde g(I-\tilde \Pi))_{00}+R_1\tilde g(I-\tilde \Pi))_{11}-2\sqrt {R_0R_1}\tilde g(I-\tilde \Pi))_{01}}{\sum_i(\epsilon_1+\epsilon_2\lambda_i)}\\
&=\frac {(\epsilon_1+\epsilon_2)(\sqrt {R_0}-\sqrt {R_1})^2-\epsilon_2(R_0\tilde \pi_0+R_1\tilde \pi_1-2\sqrt {R_0R_1}\tilde \Pi_{01})}{\epsilon_1n+\epsilon_2\sum_i\lambda_i}
\end{align*}

Denote $A(\tilde \Pi)=(\sqrt {R_0}-\sqrt {R_1})^2\ge 0$, $B(\tilde \Pi)=R_0\tilde \pi_0+R_1\tilde \pi_1-2\sqrt {R_0R_1}\tilde \Pi_{01}$, $C(\tilde \Pi)=\frac {A(\tilde \Pi)-B(\tilde \Pi)}{A(\tilde \Pi)}$ and $\bar \lambda =\frac {\sum_i\lambda_i}{n}$, then
$$
\frac {G(g, \tilde \Pi)}{n\sum_ig(\lambda_i)}=\frac {(\epsilon_1+\epsilon_2)A(\tilde \Pi)-\epsilon_2B(\tilde \Pi)}{(\epsilon_1+\epsilon_2\bar \lambda)n^2}=\frac {(\epsilon_1+\epsilon_2C(\tilde \Pi))A}{(\epsilon_1+\epsilon_2\bar \lambda)n^2}.
$$
If $\epsilon_2>0$, $g$ is a high-pass filter and $m_g=\epsilon_1$; if $\epsilon_2<0$, $g$ is a low-pass filter and $m_g=\epsilon_1+\epsilon_2\lambda_{n-1}$ where $\lambda_{n-1}$ is the maximal eigenvalue.

$$
H(\epsilon_1,\epsilon_2)=m_g\log\frac {G(g, \tilde \Pi)}{n\sum_ig(\lambda_i)}= 
\begin{dcases}
      \epsilon_1\log\frac {(\epsilon_1+\epsilon_2C(\tilde \Pi))A}{(\epsilon_1+\epsilon_2\bar \lambda)n^2} & \epsilon_1\in(0,1), \epsilon_2\in(0,\frac {1-\epsilon_1}2) \\
      (\epsilon_1+\epsilon_2\lambda_{n-1})\log\frac {(\epsilon_1+\epsilon_2C(\tilde \Pi))A}{(\epsilon_1+\epsilon_2\bar \lambda)n^2} & \epsilon_1\in(0,1), \epsilon_2\in(\frac {-\epsilon_1}2,0)  \\
\end{dcases}
$$
Let $f(\epsilon_1, \epsilon_2)=\epsilon_1\log\frac {(\epsilon_1+\epsilon_2C(\tilde \Pi))A}{(\epsilon_1+\epsilon_2\bar \lambda)n^2}$ and $h(\epsilon_1,\epsilon_2)=\epsilon_2\log\frac {(\epsilon_1+\epsilon_2C(\tilde \Pi))A}{(\epsilon_1+\epsilon_2\bar \lambda)n^2}$, then we have:

\begin{align*}
&\frac {\partial \frac f{\epsilon_1}}{\partial\epsilon_1}=\frac{1}{\epsilon_1+C(\tilde \Pi)\epsilon_2}-\frac{1}{\epsilon_1+\bar\lambda\epsilon_2}, \quad \frac {\partial \frac f{\epsilon_1}}{\partial\epsilon_2}=\frac{C(\tilde \Pi)}{\epsilon_1+C(\tilde \Pi)\epsilon_2}-\frac{\bar\lambda}{\epsilon_1+\bar\lambda\epsilon_2},\\
&\frac {\partial^2f}{\partial\epsilon_1^2}=\frac{\epsilon_1+2C(\tilde \Pi)\epsilon_2}{(\epsilon_1+C(\tilde \Pi)\epsilon_2)^2}-\frac{\epsilon_1+2\bar\lambda\epsilon_2}{(\epsilon_1+\bar\lambda\epsilon_2)^2},\quad  \frac {\partial^2f}{\partial\epsilon_2^2}=\frac{\bar\lambda^2\epsilon_1}{(\epsilon_1+\bar\lambda\epsilon_2)^2}-\frac{\big(C(\tilde \Pi)\big)^2\epsilon_1}{(\epsilon_1+C(\tilde \Pi)\epsilon_2)^2},\\
&\frac {\partial^2f}{\partial\epsilon_1\epsilon_2}=\frac{\big(C(\tilde \Pi)\big)^2\epsilon_2}{(\epsilon_1+C(\tilde \Pi)\epsilon_2)^2}-\frac{\bar\lambda^2\epsilon_2}{(\epsilon_1+\bar\lambda\epsilon_2)^2},\quad  \frac {\partial^2h}{\partial\epsilon_1^2}=\frac{\epsilon_2}{(\epsilon_1+\bar\lambda\epsilon_2)^2}-\frac{\epsilon_2}{(\epsilon_1+C(\tilde \Pi)\epsilon_2)^2},\\
&\frac {\partial^2h}{\partial\epsilon_2^2}=C(\tilde \Pi)\frac{2\epsilon_1+C(\tilde \Pi)\epsilon_2}{(\epsilon_1+C(\tilde \Pi)\epsilon_2)^2}-\bar\lambda\frac{2\epsilon_1+\bar\lambda\epsilon_2}{(\epsilon_1+\bar\lambda\epsilon_2)^2},\quad  \frac {\partial^2h}{\partial\epsilon_1\epsilon_2}=\frac{\epsilon_1}{(\epsilon_1+C(\tilde \Pi)\epsilon_2)^2}-\frac{\epsilon_1}{(\epsilon_1+\bar\lambda\epsilon_2)^2}
\end{align*}

According to Lemma \ref{lm.filter bank}, 

\begin{align*}
\frac{A(\tilde \Pi)-B(\tilde \Pi)}{A(\tilde \Pi)}&=\frac {(\sqrt {R_0}-\sqrt {R_1})^2-R_0\tilde \pi_0-R_1\tilde \pi_1+2\sqrt {R_0R_1}\tilde \Pi_{01}}{(\sqrt {R_0}-\sqrt {R_1})^2}\\
&=\frac {R_0(1-\tilde \pi_0)+R_1(1-\tilde \pi_1)-2\sqrt {R_0R_1}(1-\tilde \Pi_{01})}{(\sqrt {R_0}-\sqrt {R_1})^2}>0
\end{align*}

When $C(\tilde\Pi)\le\bar\lambda$, $\epsilon_2<0$ is better.

I. $A(\tilde \Pi)\ne0$ and $\epsilon_2 > 0 $

The Hessian determinant of $H(\epsilon_1,\epsilon_2)$ is :
$$
Hess(H)=\frac{\bar\lambda^2\epsilon_1^2(1-\bar\lambda)}{(\epsilon_1+\bar\lambda\epsilon_2)^4}+\frac{\big(C(\tilde \Pi)\big)^2\epsilon_1^2(1-C(\tilde \Pi))}{(\epsilon_1+C(\tilde \Pi)\epsilon_2)^4}.
$$
When $B(\tilde \Pi)<0$, $C(\tilde \Pi)<1$, then $Hess(H)>0$, $H(\epsilon_1,\epsilon_2)$ is a convex function; 

when $B(\tilde \Pi)>0$, $C(\tilde \Pi)>1>\bar\lambda$, then $Hess(H)$ can be positive  or negative.
\begin{align*}
&\begin{dcases}
\frac {\partial H}{\partial \epsilon_1}=\log\frac {(\epsilon_1+\epsilon_2C(\tilde \Pi))A}{(\epsilon_1+\epsilon_2\bar \lambda)n^2}+\frac{\epsilon_1}{\epsilon_1+C(\tilde \Pi)\epsilon_2}-\frac{\epsilon_1}{\epsilon_1+\bar\lambda\epsilon_2},\\
\frac {\partial H}{\partial \epsilon_2}=\frac{C(\tilde \Pi)\epsilon_1}{\epsilon_1+C(\tilde \Pi)\epsilon_2}-\frac{\bar\lambda\epsilon_1}{\epsilon_1+\bar\lambda\epsilon_2},\\
\end{dcases}
\end{align*}
Since $A(\Pi)=R_0+R_1-2\sqrt{R_0R_1}\le n$, therefore $H(\epsilon_1,\epsilon_2)$ has no stationary point. 

\begin{itemize}
   
 \item a. When $C(\tilde \Pi)>1>\bar\lambda$, $\frac {\partial H}{\partial \epsilon_2}>0$, then fix $\epsilon_1$, $H$ is monotonic increasing function w.r.t $\epsilon_2\in(0,\frac {1-\epsilon_1}2)$.

In this case, for any $g_1(\tilde L)=\epsilon_1+\epsilon_2$, there exist $\epsilon_2'\in (\epsilon_2,\frac {1-\epsilon_1}2)$ such that for any positive constant $l$, $$H(\epsilon_1,l\epsilon_2+(1-l)\epsilon_2')>H(\epsilon_1\epsilon_2).$$

\item b. When $C(\tilde \Pi)<\bar\lambda$, $\frac {\partial H}{\partial \epsilon_2}<0$, then fix $\epsilon_1$, $H$ is monotonic decreasing function w.r.t $\epsilon_2\in(0,\frac {1-\epsilon_1}2)$.

In this case, for any $g_1(\tilde L)=\epsilon_1+\epsilon_2$, there exist $\epsilon_2'\in (0,\epsilon_2)$ such that for any positive constant $l$, $$H(\epsilon_1,l\epsilon_2+(1-l)\epsilon_2')>H(\epsilon_1\epsilon_2).$$

\item c. When $C(\tilde \Pi)=\bar\lambda$,  $\frac {\partial H}{\partial \epsilon_1}=\log\frac A{n^2}<0$, then fix $\epsilon_2$, $H$ is monotonic decreasing function w.r.t $\epsilon_1\in(0,1]$. 

In this case, for any $g_1(\tilde L)=\epsilon_1+\epsilon_2$, there exist $\epsilon_1'\in (0,\epsilon_1)$ such that for any positive constant $l$, $$H(l\epsilon_1+(1-l)\epsilon_1',\epsilon_2)>H(\epsilon_1\epsilon_2).$$

\end{itemize}

II.$A(\tilde \Pi)\ne0$ and $\epsilon_2<0$

Since,
\begin{align*}
&\begin{dcases}
\frac {\partial H}{\partial \epsilon_1}=\log\frac {(\epsilon_1+\epsilon_2C(\tilde \Pi))A}{(\epsilon_1+\epsilon_2\bar \lambda)n^2}+\frac{\epsilon_1+\epsilon_2\lambda_{n-1}}{\epsilon_1+C(\tilde \Pi)\epsilon_2}-\frac{\epsilon_1+\epsilon_2\lambda_{n-1}}{\epsilon_1+\bar\lambda\epsilon_2},\\
\frac {\partial H}{\partial \epsilon_2}=\lambda_{n-1}\log\frac {(\epsilon_1+\epsilon_2C(\tilde \Pi))A}{(\epsilon_1+\epsilon_2\bar \lambda)n^2}+\frac{C(\tilde \Pi)(\epsilon_1+\epsilon_2\lambda_{n-1})}{\epsilon_1+C(\tilde \Pi)\epsilon_2}-\frac{\bar\lambda(\epsilon_1+\epsilon_2\lambda_{n-1})}{\epsilon_1+\bar\lambda\epsilon_2},\\
\end{dcases}
\end{align*}
Similarly, $H(\epsilon_1,\epsilon_2)$ has no stationary point.
\begin{itemize}
    \item 
a. When $C(\tilde \Pi)>1>\bar\lambda$, $\frac {\partial H}{\partial \epsilon_1}<0$, then fix $\epsilon_2$, $H$ is monotonic decreasing function w.r.t $\epsilon_1\in(0,1]$. 

In this case, for any $g_1(\tilde L)=\epsilon_1+\epsilon_2$, there exist $\epsilon_1'\in (0,\epsilon_1)$ such that for any positive constant $l$, $$H(l\epsilon_1+(1-l)\epsilon_1',\epsilon_2)>H(\epsilon_1\epsilon_2).$$

\item b. When $C(\tilde \Pi)\le\bar\lambda$, $\frac {\partial H}{\partial \epsilon_2}<0$, then fix $\epsilon_1$, $H$ is monotonic decreasing function w.r.t $\epsilon_2\in(0,\frac {1-\epsilon_1}2)$.

In this case, for any $g_1(\tilde L)=\epsilon_1+\epsilon_2$, there exist $\epsilon_2'\in (0,\epsilon_2)$ such that for any positive constant $l$, $$H(\epsilon_1,l\epsilon_2+(1-l)\epsilon_2')>H(\epsilon_1\epsilon_2).$$

\end{itemize}

III. $A(\tilde \Pi)\ne0$ 

When $A(\tilde \Pi)=0$, since $A(\tilde\Pi)-B(\tilde \Pi)>0$, then $B(\tilde \Pi)<0$,
\begin{align*}
H(\epsilon_1,\epsilon_2)=m_g\log\frac {G(g, \tilde \Pi)}{n\sum_ig(\lambda_i)}= \begin{dcases}
      \epsilon_1\log\frac {-\epsilon_2B(\tilde \Pi)}{(\epsilon_1+\epsilon_2\bar \lambda)n^2} & \epsilon_1\in(0,1], \epsilon_2\in(0,\frac {1-\epsilon_1}2] \\
      (\epsilon_1+\epsilon_2\lambda_{n-1})\log\frac {-\epsilon_2B(\tilde \Pi)}{(\epsilon_1+\epsilon_2\bar \lambda)n^2} & \epsilon_1\in(0,1], \epsilon_2\in(\frac {-\epsilon_1}2,0)  \\
\end{dcases}
\end{align*}

\begin{itemize}
   
 \item a. $\epsilon_2>0$

\begin{align*}
&\begin{dcases}
\frac {\partial H}{\partial \epsilon_1}=\log\frac {-\epsilon_2B(\tilde \Pi)}{(\epsilon_1+\epsilon_2\bar \lambda)n^2}-\frac{\epsilon_1}{\epsilon_1+\bar\lambda\epsilon_2}<0,\\
\frac {\partial H}{\partial \epsilon_2}=\frac{\epsilon_1}{\epsilon_2}-\frac{\bar\lambda\epsilon_1}{\epsilon_1+\bar\lambda\epsilon_2}>0,\\
\end{dcases}
\end{align*}

Fix $\epsilon_1$, $H$ is monotonic increasing function w.r.t $\epsilon_2\in(0,\frac {1-\epsilon_1}2)$. In this case, for any $g_1(\tilde L)=\epsilon_1+\epsilon_2$, there exist $\epsilon_2'\in (\epsilon_2,\frac {1-\epsilon_1}2)$ such that for any positive constant $l$, $H(\epsilon_1,l\epsilon_2+(1-l)\epsilon_2')>H(\epsilon_1\epsilon_2)$ .

\item b. $\epsilon_2<0$

  \begin{align*}
  &\begin{dcases}
  \frac {\partial H}{\partial \epsilon_1}=\log\frac {-\epsilon_2B(\tilde \Pi)}{(\epsilon_1+\epsilon_2\bar \lambda)n^2}-\frac{\epsilon_1+\epsilon_2\lambda_{n-1}}{\epsilon_1+\bar\lambda\epsilon_2}<0,\\
  \frac {\partial H}{\partial \epsilon_2}=\lambda_{n-1}\log\frac {-\epsilon_2B(\tilde \Pi)}{(\epsilon_1+\epsilon_2\bar \lambda)n^2}+\frac{\epsilon_1+\lambda_{n-1}}{\epsilon_2}-\frac{\bar\lambda(\epsilon_1+\epsilon_2\lambda_{n-1})}{\epsilon_1+\bar\lambda\epsilon_2},\\
  \end{dcases}
  \end{align*}
 
  Fix $\epsilon_2$, $H$ is monotonic decreasing function w.r.t $\epsilon_1\in(0,1]$. In this case, for any $g_1(\tilde L)=\epsilon_1+\epsilon_2$, there exist $\epsilon_1'\in (0,\epsilon_1)$ such that for any positive constant $l$, $H(l\epsilon_1+(1-l)\epsilon_1',\epsilon_2)>H(\epsilon_1,\epsilon_2)$ .
 
\end{itemize}
\end{proof}
\end{document}